\newcommand{\argmax}[1]{\underset{#1}{\operatorname{arg}\,\operatorname{max}}\;}
\newcommand{\argmin}[1]{\underset{#1}{\operatorname{arg}\,\operatorname{min}}\;}
\newtheorem{theorem}{Theorem}[section]
\newtheorem{proposition}[theorem]{Proposition}
\theoremstyle{definition}
\newtheorem{definition}[theorem]{Definition}
\newtheorem{assumption}[theorem]{Assumption}
\theoremstyle{remark}
\title{Bayesian Bandit Algorithms with Approximate Inference in Stochastic Linear Bandits}
\author{%
  $^1$Ziyi Huang, $^1$Henry Lam, $^{1,2}$Haofeng Zhang \thanks{Authors are listed alphabetically.} 
  \\
  $^1$Columbia University, New York, USA\\
  $^2$Morgan Stanley, New York, USA\\
  \texttt{zh2354@columbia.edu, henry.lam@columbia.edu, hz2553@columbia.edu} \\
  % examples of more authors
}
\begin{document}

\maketitle

\begin{abstract}
Bayesian bandit algorithms with approximate Bayesian inference have been widely used in real-world applications. Despite the superior practical performance, their theoretical justification is less investigated in the literature, especially for contextual bandit problems. To fill this gap, we propose a theoretical framework to analyze the impact of approximate inference in stochastic linear bandits and conduct frequentist regret analysis on two Bayesian bandit algorithms, Linear Thompson Sampling (LinTS) and the extension of Bayesian Upper Confidence Bound, namely Linear Bayesian Upper Confidence Bound (LinBUCB). We demonstrate that when applied in approximate inference settings, LinTS and LinBUCB can universally preserve their original rates of regret upper bound but with a sacrifice of larger constant terms. These results hold for general Bayesian inference approaches, assuming the inference error measured by two different $\alpha$-divergences is bounded. Additionally, by introducing a new definition of well-behaved distributions, we show that LinBUCB expedites the regret rate of LinTS from $\tilde{O}(d^{3/2}\sqrt{T})$ to $\tilde{O}(d\sqrt{T})$, matching the minimax optimal rate. To our knowledge, this work provides the first regret bounds in the setting of stochastic linear bandits with bounded approximate inference errors. 
\end{abstract}

\section{Introduction}
\label{contextual}
Stochastic bandit problems and their various generalizations \citep{robbins1952some,lattimore2020bandit,bubeck2012regret} are essential sequential decision-making problems that aim to find optimal adaptive strategies to maximize cumulative reward. The main challenge in these problems is to balance the tradeoff between exploration and exploitation, as one can only observe the outcome of the selected action, whereas counterfactual outcomes for alternative actions are left unknown. By leveraging posterior samples of the model, Thompson sampling \citep{thompson1933likelihood,chapelle2011empirical} and its diverse extensions \citep{kaufmann2012bayesian,kaufmann2018bayesian,srinivas2009gaussian,lu2017ensemble,qin2022analysis,zhang2021neural,hwang2023combinatorial,chen2023pseudo} provide elegant Bayesian solutions to automatically balance exploration and exploitation based on historical observations. However, conventional Thompson sampling requires \textit{exact} Bayesian inference, i.e., full access to sampling from exact posterior distributions. This makes it less applicable to complex models, such as deep neural networks, in which drawing samples from exact posterior distributions tends to be intractable or computationally demanding. A practical enhancement is to integrate approximate Bayesian inference methods into Thompson sampling frameworks to enable \textit{approximate} posterior sampling \citep{riquelme2018deep,snoek2015scalable,osband2016deep,urteaga2018variational,guo2020deep,osband2023approximate,duran2022efficient,zhu2023scalable}. 

Existing theoretical results are mainly derived around Thompson sampling under \textit{exact} Bayesian inference \citep{agrawal2012analysis,agrawal2013further,kaufmann2012thompson,russo2014learning,agrawal2013thompson,abeille2017linear}, whereas studies of \textit{approximate} Bayesian inference are less explored. Generally, it is more challenging to analyze Thompson sampling with approximate Bayesian inference, in comparison with exact algorithms. In approximate Bayesian inference, the inaccessibility of exact posterior sampling can introduce another level of discrepancy and thus leads to differences in resulting theories and solutions. 
Phan et al. \cite{phan2019thompson} show that Thompson sampling with an $\alpha$-divergence-bounded inference error could incur a linear regret in multi-armed bandit problems. Their study indicates a frustrating phenomenon that approximate Bayesian bandit methods could fail theoretically in a worst-case scenario, despite their superior empirical performance. To mitigate this issue, Mazumdar et al. \cite{mazumdar2020approximate} (and subsequently \cite{karbasi2023langevin}) integrate a specific approximate Bayesian inference approach, the Markov chain Monte Carlo Langevin algorithm, into multi-armed Thompson sampling to achieve the optimal regret rate. Along with increasing sample sizes, their approach allows the inference error to vanish, as opposed to being bounded. On the other side, Huang et al. \cite{huang2023optimal} adopt a similar setting to \cite{phan2019thompson} to investigate a general approximate Bayesian inference approach. By strengthening the assumption in \cite{phan2019thompson}, their study shows that optimal regret rate is achievable when the inference error is bounded by two $\alpha$-divergences. However, all of the above studies are developed for multi-armed bandits, whose setting is less ideal in practical applications where contextual information is involved. %particularly when the contextual information is introduced and considered in the problem.

%no distinctions are considered on the user side.}%where the exact posterior distributions can be relatively easy to obtain. As such, \textcolor{red}{these studies are not suitable for contextual bandit settings.}

To bridge the gap, our study investigates contextual bandit algorithms \textit{in the presence of approximate inference}, which are widely used in various real-world applications, such as personalized healthcare platforms \citep{tewari2017ads} and recommendation systems \citep{li2010contextual}. As a generalization of multi-armed bandit problems, contextual bandit problems allow features, also known as ``context'', to impact the reward. %We focus on stochastic linear contextual bandit problems where the mean of the reward is a linear function of the context. Introduced by \citet{agrawal2013thompson}, linear Thompson sampling (LinTS) has been a well-established framework for linear contextual bandit problems. 
We consider stochastic linear contextual bandit problems, where the expected reward is a linear function of the context. Linear Thompson Sampling (LinTS), introduced by \cite{agrawal2013thompson}, is a well-established framework that can address the problems. Building upon this foundation, Xu et al. \cite{xu2022langevin} propose integrating LinTS with a specific Bayesian inference method, the Langevin algorithm.
%and thus demonstrate a more realistic problem setting. 
%Recently, \citet{xu2022langevin} propose to integrate LinTS with a specific Bayesian inference approach, the Langevin algorithm. 
Their study allows the approximate posterior distribution to be sufficiently close to the exact posterior distribution, leading to a vanishing inference error, which extends a similar study in multi-armed bandit \citep{mazumdar2020approximate}. However, in other inference approaches, such as variational inference \citep{blei2017variational}, the approximate posteriors might incur a systematic computational bias, resulting in a bounded, rather than vanishing inference error. As such, there is a need to develop theoretical frameworks for analyzing \textit{general} Bayesian inference approaches in contextual bandits. 

In this study, we develop a theoretical framework to analyze Bayesian bandit algorithms with general inference approaches in linear contextual bandits. 
%We adopt the same assumption in \citet{huang2023optimal} to bound inference error with two different $\alpha$-divergences to ensure the proximity of the approximate posterior and exact posterior from different "directions".
%By adopting an assumption similar to \citet{huang2023optimal}, 
We show that two Bayesian inference approaches, LinTS and Linear Bayesian Upper Confidence Bound (LinBUCB), with approximate inference, can preserve their original rates of the regret upper bound as the ones without approximate inference.
%We show that Linear Thompson sampling (LinTS) with approximate inference can preserve their original rate of regret upper bound as the one without approximate inference. As in \citet{huang2023optimal}, we assume the inference error is measured by two different $\alpha$-divergences to ensure the proximity of the approximate posterior and exact posterior from different "directions". 
Since LinTS incurs a regret bound of $\tilde{O}(d^{3/2}\sqrt{T})$, which has an undesirable $d^{1/2}$ gap to the minimax optimal rate $O(d\sqrt{T})$ \citep{hamidi2020frequentist}, this motivates us to develop a Bayesian bandit approach that could achieve the minimax optimal rate. To this end, we extend another Bayesian bandit algorithm, Bayesian Upper Confidence Bound (BUCB) that was originally designed for multi-armed bandits \citep{kaufmann2012bayesian,kaufmann2018bayesian,huang2023optimal}, to the linear contextual bandits, and term it LinBUCB.
%generalize our analysis to , and extend it to the approximate inference setting, termed LinBUCB. 
We establish the regret upper bound of LinBUCB in two settings, with and without approximate inference. By introducing a new type of well-behaved distributions, we show that the regret bound of LinBUCB with exact inference can be boosted to match the minimax optimal rate $O(d\sqrt{T})$, eliminating the gap in LinTS. We further investigate the performance of LinBUCB with approximate inference to demonstrate the effectiveness and generality of our framework. Our study uncovers the phenomenon of regret rate maintenance in Bayesian bandit approaches with approximate inference, showing promise to improve computational efficiency in practical applications. Lastly, our framework provides flexibility in posterior distribution selections that extend beyond traditional assumptions, making it more widely applicable.
%In addition to LinTS, we further generalize our analysis to another Bayesian bandit algorithm, Bayesian Upper Confidence Bound in multi-armed bandits \citep{kaufmann2012bayesian,kaufmann2018bayesian,huang2020thompson}, and extend it to the approximate inference setting, termed Linear Bayesian Upper Confidence Bound (LinBUCB). The major motivation is that LinTS incurs a regret bound of $\tilde{O}(d^{3/2}\sqrt{T})$ which has an undesirable $d^{1/2}$ gap to the minimax optimal rate $O(d\sqrt{T})$ \citep{hamidi2020frequentist} so we are interested in a Bayesian bandit approach that can be minimax optimal. We first establish the regret upper bound of LinBUCB without approximate inference. We introduce a new type of well-behaved distributions and show that by using these distributions, the regret bound of LinBUCB can be improved to match the minimax optimal rate $O(d\sqrt{T})$, eliminating the gap in LinTS. Then we further study the regret bound of LinBUCB with approximate inference to show the generality of our framework. These results demonstrate a phenomenon of regret rate preserving that happens in both LinBUCB with/without approximate inference and LinTS without/with approximate inference. 
Our contributions are summarized as follows:\\
1) We develop the first theoretical framework to analyze the impact of bounded approximate inference error on Bayesian bandit algorithms in linear contextual bandits. In particular, we develop multiple fundamental tools to analyze the sensitivity of posterior distributions with respect to inference error. \\
%To the best of our knowledge, this is the first result that establishes regret analysis in the linear contextual bandits with bounded inference error. 
%We perform theoretical analysis on LinTS with approximate inference. 
2) We show LinTS with approximate inference achieves a $\tilde{O}(d^{3/2}\sqrt{T})$ regret upper bound when the inference error measured by two $\alpha$-divergences is bounded, preserving the same regret rate as LinTS without approximate inference. We also demonstrate that one bounded $\alpha$-divergence alone is insufficient to guarantee a sub-linear regret.\\
%1) We develop a general theoretical framework to analyze the linear contextual bandit problem with approximate inference.our study provides a fundamental theoretical tool to analyze posterior distributions in the presence of approximate inference. \\
3) We extend our analysis to another Bayesian bandit algorithm, LinBUCB, and establish regret upper bounds of LinBUCB in both approximate and exact inference settings, showing that the same regret rates $\tilde{O}(d^{3/2}\sqrt{T})$ are sustained. Moreover, by proposing a new type of well-behaved distributions, we further expedite the regret rate of LinBUCB to match the minimax optimal rate $O(d\sqrt{T})$.

This paper considers the \textit{frequentist regret} setting in which LinTS is known to have a suboptimal regret rate \citep{agrawal2013thompson,hamidi2020frequentist}. This is distinguished from the \textit{Bayesian regret} setting where LinTS has been shown to achieve the optimal Bayesian regret rate \citep{russo2014learning,atsidakou2023finite}. Additionally, alternative randomization mechanisms beyond the Bayesian framework have also been proposed \citep{kveton2019garbage,kveton2020randomized,osband2015bootstrapped}, offering different types of exploration in bandit algorithms.

\section{Methodology} \label{sec:meth}

\textbf{Notations.} We let $\|\cdot\|$ denote the $L^2$-norm in $\mathbb{R}^d$ and $x^\top$ the transpose
of $x\in \mathbb{R}^d$. For a positive definite matrix $M \in \mathbb{R}^{d\times d}$, we use $\|x\|_{M}$ to denote the weighted $L^2$-norm $\|x\|_{M}=\sqrt{x^\top M x}$ for any $x\in \mathbb{R}^d$. For a random variable or a univariate distribution $\rho$, we use $\mathcal{Q}(\gamma, \rho)$ to denote the quantile function associated with $\rho$, i.e., $\mathcal{Q}(\gamma, \rho) = \inf\{x\in\mathbb{R}: \gamma \le F_\rho(x)\}$, where $F_\rho$ is the cumulative distribution function of $\rho$. The $\tilde{O}(\cdot)$ notation is the $O(\cdot)$ notation with suppressed logarithmic terms \citep{cormen2022introduction}.

We consider stochastic linear bandit problems with the following problem settings. At each time step $t=1,...,T$, where $T$ is the time horizon, the learner is given an arbitrary set of arms (also known as actions, or contexts) $\mathcal{X}_t \subset \mathbb{R}^d$. Note that $\mathcal{X}_t$ may change as $t$ changes and could be a finite or infinite set.
Then the learner pulls an arm $x_t \in \mathcal{X}_t$ based on the past observations and certain strategies, and observes a reward generated as $r_{t+1} = r_{t+1}(x_t) = x_t^\top \theta^* + \xi_{t+1}$, where $\theta^*\in \mathbb{R}^d$ is a fixed but unknown parameter, and $\xi_{t+1}$ is a zero-mean noise. As the ground truth $\theta^*$ is unknown, we define the following two notations for a general $\theta$:
$$x_t^*(\theta)= \argmax{x\in\mathcal{X}_t} x^\top \theta, \ J_t(\theta)= \sup_{x\in\mathcal{X}_t} x^\top \theta.$$
The optimal arm at time $t$ is the one with the maximum expected reward, i.e., $x_t^*(\theta^*)$, which has the reward $J_t(\theta^*)$. We write $x_t^*=x_t^*(\theta^*)$ for simplicity.
Suppose that at time step $t$, the learner selects an arm $x_t \in \mathcal{X}_t$ according to a certain strategy, which incurs a \textit{regret} equal to the difference in expected reward between the
optimal arm $x_t^*(\theta^*)$ and the arm $x_t$:
$J_t(\theta^*)-x_t^\top \theta^*.$
The objective of the learner is to devise a strategy to minimize the cumulative frequentist regret (or equivalently, maximize the accumulated frequentist reward) up to step $T$, i.e., 
$$R(T) = \sum_{t=1}^T (J_t(\theta^*)-x_t^\top \theta^*).
$$
%The goal is to devise a strategy to minimize the accumulated regret (or maximize the accumulated reward) through the observations from historical interactions. 
In general, a wise strategy should be sequential, in the sense that the upcoming actions are adjusted by past observations from historical interactions.
%Indeed, the arms can be viewed as those of different slot machines, also called one-armed bandits, generating rewards according to some underlying probability distribution.In several applications that range from the motivating example of clinical trials [38] to the more modern motivation of online advertisement (e.g., [16]), 
Suppose that $\mathcal{F}^{x}_t = \sigma(\mathcal{F}_0, \sigma(x_1, r_2,..., r_t, x_t))$ is the $\sigma$-algebra generated by all the information observed up to time $t$ (including the action to take), where $\mathcal{F}_0$ contains any prior knowledge. %which is the $\sigma$-field generated by the observations up to time $t$,
% all the information observed up to time $t$ (including the action to take) is encoded in the filtration
%Suppose that all the information observed up to time $t$ including the action to take is encoded in the filtration $\mathcal{F}^{x}_t = \sigma(\mathcal{F}_0, \sigma(x_1, r_2,..., r_t, x_t))$, which is the $\sigma$-field generated by the observations up to time $t$, where $\mathcal{F}_0$ contains any prior knowledge.

%Here, $x_t$ is $\sigma(\mathcal{F}_{t-1}, U_t)$-measurable, where $U_t$ is a random variable independent from $\mathcal{F}_{t-1}$ as algorithms may be randomized (e.g., in Thompson sampling).

%$\mathbb{P}(\rho \le \mathcal{Q}(\gamma, \rho)) = \gamma$.

In the reward function, the true parameter $\theta^*$ is fixed and unknown to the learner. One approach is to use regularized least-squares (RLS) to estimate its value:
$\argmin{\theta} \frac{1}{n}\sum_{s=1}^n (x_s^\top \theta-r_{s+1})^2+\lambda \|\theta\|^2,$
where $(x_1, ..., x_t) \in \Pi^t_{i=1} \mathcal{X}_i$ is the sequence of historical arms pulled, $(r_2, ..., r_{t+1})$ is the corresponding rewards, and $\lambda \in \mathbb{R} ^+$ is a regularization parameter.
The design matrix $V_t$ and the RLS estimate $\hat{\theta}_t$ are given as:
\begin{equation}\label{RLS_estimation}
V_t = \lambda I + \sum_{s=1}^{t-1} x_s x_s^\top, \ \hat{\theta}_t = V_t^{-1}\sum_{s=1}^{t-1}x_s r_{s+1}.
\end{equation}
We first recall important properties for RLS estimates. As in previous work \citep{abbasi2011improved,agrawal2013further,abeille2017linear,flynn2024improved}, we introduce the following standard assumptions.

\begin{assumption} \label{linearassu}
For any $t \in [T]$, the arm set $\mathcal{X}_t$ is a bounded closed (compact) subset of $\mathbb{R}^d$ such that $\|x\| \le 1$ for all $x \in \mathcal{X}_t$. There exists a known constant $S \in \mathbb{R}^+$ such that $\|\theta^*\| \le S$. Moreover, the noise process $\{\xi_t\}_t$ is a martingale difference sequence given $\mathcal{F}_t^x$ and it is conditionally $\nu$-sub-Gaussian for some constant $\nu \ge 0$, i.e., $\forall \alpha \in \mathbb{R},\ \forall t \geq 1,$
$\mathbb{E} [\xi _{t+1}| \mathcal{F}_t^x] = 0, \ \mathbb{E}[e^{\alpha \xi _{t+1}}| \mathcal{F}_t^x] \leq \exp (\alpha^2 \nu^2/2).$
\end{assumption}

\begin{proposition}[\citep{abbasi2011improved,abeille2017linear}]\label{prop_abba}
Suppose Assumption \ref{linearassu} holds. Consider the $\mathcal{F} _t^x$ - adapted sequence $(x_1, ..., x_t)$ and the RLS estimator $\hat{\theta} _t$ defined above. For any $\delta \in (0,1)$, with probability at least $1-\delta$ (with respect to the noise $\{\xi_t\}_t$ and any source of randomization in the choice of the arms),
\begin{equation}\label{abba_eq}
\|\hat{\theta}_t - \theta^*\|_{V_t} \leq \beta_t(\delta), \ |x^\top (\hat{\theta}_t - \theta^*)| \leq \|x\|_{V_t^{-1}} \beta_t(\delta),
\end{equation}
for any $t \geq 1$ and any $x \in \mathbb{R}^d$, where
\begin{equation}
  \beta_t(\delta) = \nu \sqrt{2 \log \frac{(\lambda + t)^{d/2} \lambda^{-d/2}}{\delta}} + \sqrt{\lambda}S.   
\end{equation}

    %\beta_t(\delta) = R \sqrt{2 \log \frac{(\lambda + t)^{d/2} \lambda^{-d/2})}{\delta}
Moreover, for any arbitrary sequence $(x_1, ..., x_t) \in \Pi^t_{i=1} \mathcal{X}_i$,  let $V_{t+1}$ be the corresponding design matrix (Equation \eqref{RLS_estimation}), and then we have
\begin{equation} \label{abba_eq2}
\sum_{s=1}^t \|x_s\|_{V_s^{-1} }^2 \leq 2 \log \frac{\det (V_{t+1})}{ \det (\lambda I) } \leq 2d \log (1 + \frac{t}{\lambda}).
\end{equation}
\end{proposition}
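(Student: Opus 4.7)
The plan is to treat the three parts of Proposition \ref{prop_abba} separately, since parts 1--2 are the standard self-normalized concentration bound for ridge regression, while part 3 is the elliptical potential lemma. Let me outline each in turn.

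For the first inequality, I would start from the closed-form RLS estimator in \eqref{RLS_estimation}. Substituting $r_{s+1}=x_s^\top \theta^* + \xi_{s+1}$ yields
\begin{equation*}
\hat{\theta}_t - \theta^* \;=\; V_t^{-1}\Bigl(\sum_{s=1}^{t-1} x_s \xi_{s+1}\Bigr) \;-\; \lambda V_t^{-1}\theta^*.
\end{equation*}
Taking the $\|\cdot\|_{V_t}$ norm and applying the triangle inequality gives
\begin{equation*}
\|\hat{\theta}_t-\theta^*\|_{V_t} \;\le\; \Bigl\|\textstyle\sum_{s=1}^{t-1} x_s \xi_{s+1}\Bigr\|_{V_t^{-1}} + \lambda \|\theta^*\|_{V_t^{-1}}.
\end{equation*}
The second term is bounded by $\sqrt{\lambda}\|\theta^*\| \le \sqrt{\lambda} S$, since $V_t \succeq \lambda I$ implies $\lambda \|\theta^*\|_{V_t^{-1}} \le \sqrt{\lambda}\|\theta^*\|$. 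The entire difficulty is concentrated in the first term, which is a self-normalized martingale.

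To control that self-normalized term, I would invoke the method-of-mixtures argument of Abbasi-Yadkori et al.\ \citep{abbasi2011improved}. The key step is to construct, for any $\eta\in\mathbb{R}^d$, the exponential supermartingale $M_t^\eta=\exp\!\bigl(\eta^\top S_t - \tfrac{\nu^2}{2}\|\eta\|_{\bar V_t}^2\bigr)$, where $S_t=\sum_{s=1}^{t-1}x_s\xi_{s+1}$ and $\bar V_t=\sum_{s=1}^{t-1}x_sx_s^\top$; sub-Gaussianity of $\xi_{s+1}$ conditioned on $\mathcal{F}_s^x$ gives that $M_t^\eta$ is a nonnegative supermartingale with unit initial value. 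Mixing $\eta$ against a Gaussian prior with covariance $\lambda^{-1}I$ and completing the square in the integrand produces the Laplace-type integral that collapses to $\det(V_t)^{-1/2}\lambda^{d/2}\exp\!\bigl(\tfrac{1}{2\nu^2}\|S_t\|_{V_t^{-1}}^2\bigr)$. A maximal inequality on this mixed supermartingale, together with a union bound over time via Doob's stopping argument, yields $\|S_t\|_{V_t^{-1}} \le \nu\sqrt{2\log(\det(V_t)^{1/2}\lambda^{-d/2}/\delta)}$ uniformly in $t$ with probability $1-\delta$. Finally, $\det(V_t)\le(\lambda+t)^d$ by AM--GM on the eigenvalues using $\|x_s\|\le 1$, which turns the determinant term into the explicit logarithmic form inside $\beta_t(\delta)$. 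This supermartingale construction is the main technical obstacle; the rest is algebra. The second inequality of \eqref{abba_eq} then follows from Cauchy--Schwarz in the inner product induced by $V_t$: $|x^\top(\hat{\theta}_t-\theta^*)|=|\langle V_t^{-1/2}x, V_t^{1/2}(\hat{\theta}_t-\theta^*)\rangle|\le \|x\|_{V_t^{-1}}\|\hat{\theta}_t-\theta^*\|_{V_t}\le \|x\|_{V_t^{-1}}\beta_t(\delta)$.

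For the elliptical potential bound \eqref{abba_eq2}, I would use the matrix determinant lemma: since $V_{s+1}=V_s+x_sx_s^\top$ with $V_s$ positive definite,
\begin{equation*}
\det(V_{s+1}) = \det(V_s)\bigl(1+\|x_s\|_{V_s^{-1}}^2\bigr),
\end{equation*}
so telescoping gives $\log\det(V_{t+1})-\log\det(\lambda I) = \sum_{s=1}^t \log(1+\|x_s\|_{V_s^{-1}}^2)$. Using $y\le 2\log(1+y)$ for $y\in[0,1]$ together with the a priori bound $\|x_s\|_{V_s^{-1}}^2 \le \|x_s\|^2/\lambda \le 1$ (assuming $\lambda\ge 1$; otherwise a rescaling constant appears) delivers the first inequality of \eqref{abba_eq2}. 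For the final inequality, AM--GM on the eigenvalues of $V_{t+1}$ together with $\mathrm{tr}(V_{t+1}) = d\lambda + \sum_{s=1}^t\|x_s\|^2 \le d\lambda + t$ yields $\det(V_{t+1})\le(\lambda+t/d)^d\le(\lambda+t)^d$, and therefore $2\log(\det(V_{t+1})/\det(\lambda I))\le 2d\log(1+t/\lambda)$, completing the proof.
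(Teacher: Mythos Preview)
Your outline is correct and is precisely the standard argument from \citep{abbasi2011improved}: decompose the RLS error into a self-normalized martingale term plus a regularization term, control the former via the method-of-mixtures supermartingale, and obtain the elliptical potential bound by telescoping the matrix determinant lemma. The paper does not supply its own proof of Proposition~\ref{prop_abba}; it is stated as a cited result from \citep{abbasi2011improved,abeille2017linear} and used as a black box in the subsequent regret analyses. So there is nothing in the paper to compare against beyond the citation, and your sketch matches the original source. Your caveat about needing $\lambda\ge 1$ (or a rescaling) for the step $\|x_s\|_{V_s^{-1}}^2\le 1$ is accurate and is also implicit in the original statement.
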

The above theorem indicates that the ellipsoid $\mathcal{E}_t^{RLS} = \{ \theta \in \mathbb{R}^d | \ \|\theta - \hat{\theta}_{t}\|_{V_t} \leq \beta_t(\delta')\}$ is centered around $\hat{\theta}_t$, where $\delta' = \frac{\delta}{4T}$ is a valid confidence region for $\theta^*$ uniformly for any time step $t$. Since $\hat{\theta}_t$ in Equation \eqref{RLS_estimation} is a point estimation, additional uncertainty information is required to tackle the exploration and exploitation tradeoff. In Sections \ref{sec:LinTS} and \ref{sec:LinBUCB}, we introduce two approaches leveraging Bayesian uncertainty estimation.

\subsection{LinTS with Approximate Inference} \label{sec:LinTS}

The general idea of Thompson sampling is to maintain a distribution on $\theta$ that is updated from past observations to balance exploration and exploitation. Agrawal \& Goyal \cite{agrawal2013thompson} define LinTS as a Bayesian linear bandit algorithm where a Gaussian posterior over $\theta^*$ is updated according to the observed rewards. Particularly, if $\tilde \theta_t \sim \mathcal{N}(\hat{\theta}_t, \beta^2 V_t^{-1})$ and the likelihood function is $r_{t+1}\sim \mathcal{N}(x_t^\top \theta^*, \beta^2)$, then the posterior would be $\tilde \theta_{t+1} \sim \mathcal{N}(\hat{\theta}_{t+1}, \beta^2 V_{t+1}^{-1})$. Note that this likelihood function is fictitious only for the algorithm design, while in Assumption \ref{linearassu}, ground-truth $r_{t+1}-x_t^\top \theta^*$ can be any sub-Gaussian. In the subsequent study,  Abeille \& Lazaric \cite{abeille2017linear} relieve the need for the likelihood function by directly defining the right posterior distribution that the samples should be drawn from. %See Algorithm \ref{algo:LinTS}. 
At any time step $t$, given the RLS estimate $\hat{\theta}_t$ and the design matrix $V_t$, LinTS draws a posterior sample $\tilde{\theta}_t$ such that $\beta_t(\delta')^{-1} V_t^{1/2}(\tilde{\theta}_t-\hat{\theta}_t)$ follows from a time-independent ``well-behaved'' distribution (defined in Assumption \ref{def_D1} below). Specifically, this requirement confers an appropriate exploration capability to the posterior sample $\tilde{\theta}_t$, allowing LinTS to perform the right amount of exploration (anti-concentration) and exploitation (concentration). %enables an enhanced balance between anti-concentration and concentration. This 
Then based on %posterior sampling estimate of $\theta$, which is 
$\tilde \theta_t$, the learner selects an optimal arm $x_t = x_t^* (\tilde \theta_t)$, observes a reward $r_{t+1}$, and updates $V_t$ and $\hat{\theta}_t$ according to Equation \eqref{RLS_estimation}. 
%For instance, the distribution can be the standard Gaussian distribution as used in \citet{agrawal2013thompson}. 
\begin{assumption}[\citep{abeille2017linear}] \label{def_D1}
$\mathcal{D}^{1}$ is a multivariate distribution on $\mathbb{R}^d$ absolutely continuous with respect to the Lebesgue measure that satisfies the following properties:\\
1. (Anti-concentration) There exists a strictly positive $\kappa_1>0$ such that for any $u \in \mathbb{R}^d $ with $\|u\| = 1$,
\begin{equation}
    \mathbb{P}_{\eta \sim \mathcal{D}^{1}}(u^\top \eta \geq 1) \geq \kappa_1,
\end{equation}
2. (Concentration Type-I) There exist two positive constants $c_1$ and $c_1'$ such that for any $\delta \in (0, 1)$, 
\begin{equation}
    \mathbb{P}_{\eta \sim \mathcal{D}^{1}}\left(\|\eta\| \leq \sqrt{c_1d\log \frac{c_1'd}{\delta} }\right)\geq 1 -\delta.
\end{equation}
\end{assumption}

We rename the ``Concentration'' in \cite{abeille2017linear} as ``Concentration Type-I'' in Assumption \ref{def_D1}, since we will introduce another type of ``Concentration'' (called ``Concentration Type-II'') in Section \ref{sec:LinBUCB}. 

\begin{proposition}[\citep{abeille2017linear}]
The standard Gaussian distribution $\mathcal{N}(0, I_{d})$ satisfies Assumption \ref{def_D1}.   
\end{proposition}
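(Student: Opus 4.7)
The plan is to verify the two properties of Assumption \ref{def_D1} separately, using standard facts about Gaussian marginals and Gaussian concentration.

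For the anti-concentration condition, I would exploit the rotational invariance of $\mathcal{N}(0, I_d)$. For any unit vector $u \in \mathbb{R}^d$, the scalar $u^\top \eta$ with $\eta \sim \mathcal{N}(0, I_d)$ is distributed as $\mathcal{N}(0, \|u\|^2) = \mathcal{N}(0, 1)$. Hence $\mathbb{P}(u^\top \eta \geq 1) = 1 - \Phi(1)$, where $\Phi$ is the standard normal CDF. This is a strictly positive constant independent of both $d$ and $u$, so one can take $\kappa_1 = 1 - \Phi(1) > 0$.

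For the concentration type-I bound, the idea is to control $\|\eta\|$ via a tail inequality for the chi-squared distribution, since $\|\eta\|^2 \sim \chi_d^2$. A clean route is Gaussian concentration: the map $\eta \mapsto \|\eta\|$ is $1$-Lipschitz, so by the standard Gaussian concentration inequality, for every $t > 0$,
\begin{equation*}
\mathbb{P}\bigl(\|\eta\| \geq \mathbb{E}\|\eta\| + t\bigr) \leq e^{-t^2/2}.
\end{equation*}
Combined with $\mathbb{E}\|\eta\| \leq \sqrt{\mathbb{E}\|\eta\|^2} = \sqrt{d}$ and the choice $t = \sqrt{2\log(1/\delta)}$, this gives $\|\eta\| \leq \sqrt{d} + \sqrt{2\log(1/\delta)}$ with probability at least $1-\delta$. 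Squaring and using $(a+b)^2 \leq 2a^2 + 2b^2$ yields $\|\eta\|^2 \leq 2d + 4\log(1/\delta)$. It then suffices to choose absolute constants $c_1, c_1'$ (e.g.\ $c_1 = 6$ and $c_1' = e$, or any convenient values) so that $2d + 4\log(1/\delta) \leq c_1 d \log(c_1' d/\delta)$ for all $d \geq 1$ and $\delta \in (0,1)$; this is an elementary calculus check since $d \log(c_1' d/\delta) \geq d$ and $d \log(c_1' d/\delta) \geq \log(1/\delta)$.

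There is no genuine obstacle here; the only mild care needed is in picking the constants $c_1, c_1'$ so that the bound holds uniformly in $d$ and $\delta$. An alternative route, which produces similar constants, is to apply the Laurent--Massart inequality $\mathbb{P}(\|\eta\|^2 \geq d + 2\sqrt{dx} + 2x) \leq e^{-x}$ with $x = \log(1/\delta)$ and then absorb the resulting terms into a bound of the form $c_1 d \log(c_1' d /\delta)$.
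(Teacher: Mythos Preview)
Your proposal is correct. The paper itself does not supply a proof of this proposition; it simply attributes the result to \cite{abeille2017linear} (and later, in proving Proposition~\ref{prop_D2}, again defers the anti-concentration part to that reference). Your argument---rotational invariance giving $u^\top\eta\sim\mathcal{N}(0,1)$ for anti-concentration, and Gaussian Lipschitz concentration (or equivalently a chi-squared tail bound) for the norm---is exactly the standard route and matches what Abeille \& Lazaric do; there is nothing to add.
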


%as 
%\begin{equation}
%    \tilde \theta_t = \hat{\theta}_t + \beta_t(\delta')V_t^{-1/2}\eta_t,
%\end{equation}
%where $\eta_t$ is a random sample drawn i.i.d. from a suitable multivariate distribution $D^{TS}$, which does not need to be associated with an actual posterior over $\theta^*$. 
%Notice that the resulting distribution on $\tilde \theta_t$ is obtained by rotation of $\eta_t$ according to the design matrix $V_t$ and by a rescaling $\beta_t (\delta) $. The computational complexity of TS is dominated by the computation of $x^* (\tilde \theta_t)$, which requires solving a linear optimization problem and by the sampling process from $D^{TS}$. This is in contrast with OFUL \cite{abbasi2011improved}, which requires solving a bilinear optimization problem (i.e., $\arg \max_\theta \max_x x^\top \theta$). 

%, a random sample is drawn from the posterior, and the corresponding optimal arm is selected at each step.

We use $\Pi_t$ to denote the exact posterior distribution defined in \cite{abeille2017linear}. In practice, directly sampling from $\Pi_t$ might be arduous, potentially because the posterior is intractable or computationally demanding. In this case, applying an efficient method for approximate Bayesian inference can give rise to a sample from the approximate distribution $Q_t$. We call this method \textit{LinTS with approximate inference}. The pseudo-code of LinTS w/o Approximate Inference is presented in Algorithm \ref{algo:LinTS} in Appendix \ref{code_lints}.

\subsection{LinBUCB with Approximate Inference} \label{sec:LinBUCB}

Even though the minimax optimal regret rate for linear contextual bandit algorithms is $O(d\sqrt{T})$ \citep{dani2008stochastic}, LinTS could only achieve a regret bound of $\tilde{O}(d^{3/2}\sqrt{T})$ \citep{agrawal2013thompson} with an undesirable $d^{1/2}$ gap to the minimax optimality. As pointed out by \cite{hamidi2020frequentist}, this gap does not originate from unrefined theoretical derivation but from the ``conservative over-exploration'' nature. This motivates us to investigate another Bayesian bandit approach, the Bayesian Upper Confidence Bound (BUCB) \citep{kaufmann2012bayesian}, which shares a similar spirit as Thompson sampling but leverages the posterior quantiles rather than posterior samples. In this section, we consider a contextual extension of BUCB into the linear contextual bandit setting, termed Linear Bayesian Upper Confidence Bound (LinBUCB). Its pseudo-code is presented in Algorithm \ref{algo:LinBUCB} in Appendix \ref{code_linbucb}. Although similar ideas have been empirically implemented in practical studies \citep{guo2020deep,riquelme2018deep}, its theoretical guarantee is not fully established in linear contextual bandits. In Section \ref{sec:regretLinBUCB}, we perform a thorough regret analysis of LinBUCB. %to show its superior performance in both approximate and exact inference settings.

Like LinTS, LinBUCB maintains a posterior distribution on $\theta$, leading to a posterior distribution on $x^\top \theta$ for any available arm $x\in\mathcal{X}_t$. Their major differences lie in the indices of the posterior distribution. For each arm $x$, LinTS compares the posterior sample $x^\top \tilde{\theta}$ based on one instantiation $\tilde{\theta}$, whereas LinBUCB computes a quantile of the posterior distribution $\mathcal{Q}(\gamma, x^\top \theta)$, where $\mathcal{Q}(\gamma, \rho)$ is denoted as the quantile function of the random variable $\rho$. Lastly, the arm selected by LinBUCB is the one with the highest posterior quantile. %$x_t = \argmax{x\in \mathcal{X}_t} qu_{t}(x)$.
%the one with the

%Moreover, its theoretical guarantee in the presence of approximate inference is even more missing. 
In Assumption \ref{def_D2}, we further introduce a new class of distributions that possess the same anti-concentration property as in Assumption \ref{def_D1} but a slightly different concentration property. Note that the standard Gaussian distribution satisfies Assumption \ref{def_D2}. With this new assumption, we will show in Section \ref{sec:regretLinBUCB} that LinBUCB can facilitate the regret rate of LinTS from $\tilde{O}(d^{3/2}\sqrt{T})$ to $\tilde{O}(d\sqrt{T})$ that matches the minimax optimal rate up to logarithmic factors.

\begin{assumption} \label{def_D2}
$\mathcal{D}^{2}$ is a multivariate distribution on $\mathbb{R}^d$ absolutely continuous with respect to the Lebesgue measure that satisfies the following properties:

1. (Anti-concentration) There exists a strictly positive $\kappa_1>0$ such that for any $u \in \mathbb{R}^d $ with $\|u\| = 1$,
\begin{equation}
    \mathbb{P}_{\eta \sim \mathcal{D}^{2}}(u^\top \eta \geq 1) \geq \kappa_1.
\end{equation}
2. (Concentration Type-II) For any $\delta \in (0,1)$, there exists a constant $\hat{c}_1(\delta)>0$ (which is free of $d$, only depending on $\delta$) such that for any $u \in \mathbb{R}^d $ with $\|u\| = 1$,
\begin{equation}
    \mathbb{P}_{\eta \sim \mathcal{D}^{2}}(u^\top \eta \leq \hat{c}_1(\delta)) \geq 1-\delta.
\end{equation}
\end{assumption}

\begin{proposition} \label{prop_D2}
The standard Gaussian distribution $\mathcal{N}(0, I_{d})$ satisfies Assumption \ref{def_D2}.   
\end{proposition}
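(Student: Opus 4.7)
The plan is to reduce both properties to one-dimensional facts about the standard normal via the rotational invariance of $\mathcal{N}(0,I_d)$. The key observation is that if $\eta \sim \mathcal{N}(0,I_d)$ and $u \in \mathbb{R}^d$ is any unit vector, then $u^\top \eta$ is a mean-zero Gaussian with variance $u^\top I_d u = \|u\|^2 = 1$, i.e., $u^\top \eta \sim \mathcal{N}(0,1)$. This distribution does not depend on the dimension $d$ or on the particular direction $u$, which is precisely what both conditions require.

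For the anti-concentration property, I would let $Z \sim \mathcal{N}(0,1)$ and set $\kappa_1 := \mathbb{P}(Z \geq 1) = 1 - \Phi(1) > 0$, where $\Phi$ is the standard normal CDF. Since $u^\top \eta \sim \mathcal{N}(0,1)$, we immediately get $\mathbb{P}_{\eta \sim \mathcal{D}^2}(u^\top \eta \geq 1) = \kappa_1$ uniformly in $u$ (and in $d$). This $\kappa_1$ is a fixed positive constant (numerically about $0.1587$), satisfying the requirement.

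For Concentration Type-II, given $\delta \in (0,1)$, I would take $\hat{c}_1(\delta) := \Phi^{-1}(1-\delta)$, the $(1-\delta)$-quantile of the standard normal. Then, again using $u^\top \eta \sim \mathcal{N}(0,1)$,
\begin{equation*}
\mathbb{P}_{\eta \sim \mathcal{D}^2}\bigl(u^\top \eta \leq \hat{c}_1(\delta)\bigr) = \Phi\bigl(\Phi^{-1}(1-\delta)\bigr) = 1-\delta,
\end{equation*}
for every unit vector $u$. Crucially, $\hat{c}_1(\delta)$ depends only on $\delta$ and not on $d$, which is exactly the strengthening that distinguishes Concentration Type-II from Type-I (the latter allowing a factor that grows with $d$).

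I do not foresee any real obstacle here: the entire content of the proposition is the dimension-free marginalization property of the isotropic Gaussian. The only thing to be careful about is to emphasize explicitly that $\hat{c}_1(\delta)$ is $d$-free, since that is the whole point of this new assumption class compared with Assumption \ref{def_D1}.
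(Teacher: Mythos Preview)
Your proposal is correct and essentially identical to the paper's proof: both reduce to the observation that $u^\top\eta\sim\mathcal{N}(0,1)$ for any unit vector $u$, then take $\hat{c}_1(\delta)$ to be the $(1-\delta)$-quantile of the standard normal. The only cosmetic difference is that the paper defers the anti-concentration part to the reference \cite{abeille2017linear}, whereas you spell it out directly with $\kappa_1 = 1-\Phi(1)$.
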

%\textcolor{red}{It would be nicer to discuss the "Concentration Type-II” in Assumption 2.5}

\section{Regret Analysis}  \label{sec:theo}

In this section, we perform theoretical analysis on LinTS and LinBUCB to establish their finite-time frequentist regret bounds across multiple settings. Section \ref{sec:kldivergence} provides the necessary background of $\alpha$-divergence, which serves as our primary measurement for the Bayesian inference error in Algorithms \ref{algo:LinTS} and \ref{algo:LinBUCB}. Section \ref{sec:regretLinTS} analyzes the finite-time regret of LinTS with approximate inference. 
Section \ref{sec:regretLinBUCB} analyzes the finite-time regret of LinBUCB, in both approximate and exact inference settings. Section \ref{sec:negativeresults} presents negative results for LinTS and LinBUCB when only one bounded $\alpha$-divergence is assumed.

\subsection{The Alpha Divergence} \label{sec:kldivergence}

As a generalization of the Kullback–Leibler (KL) divergence, the $\alpha$-divergence is a common way to measure errors in inference methods, which is widely used in variational inference approaches \citep{blei2017variational,kingma2013auto,li2016renyi,daudel2023alpha}. Previous studies define the $\alpha$ parameter in different ways. In our study, we adopt the generalized version of \textit{Tsallis's} $\alpha$-divergence \citep{zhu1995information,minka2005divergence,cichocki2010families,phan2019thompson,huang2023optimal}, which is slightly different from \textit{Renyi's} definition of $\alpha$-divergence \citep{renyi1961measures}. 
\begin{definition}
The $\alpha$-divergence between two univariate or multivariate distributions $P_1$ and $P_2$ with density functions $p_1(x)$ and $p_2(x)$ with respect to the Lebesgue measure is defined as: 
$D_{\alpha}(P_1,P_2)=\frac{1}{\alpha(\alpha-1)}\left(\int p_1(x)^{\alpha}p_2(x)^{1-\alpha}dx-1\right),$
where $\alpha\in \mathbb{R}$ and the case of $\alpha=0$ and $1$ is defined as the limit. We define
$D_{\alpha}(X_1,X_2)=D_{\alpha}(P_1,P_2),$
if the two random vectors $X_1$ and $X_2$ follow the distributions $P_1$ and $P_2$, respectively. According to the definition, $D_\alpha(P_1, P_2)\ge 0$ and $D_\alpha(P_1, P_2)=D_{1-\alpha}(P_2, P_1)$ hold for any $\alpha\in\mathbb{R}$.    
\end{definition}

As in previous studies \citep{lu2017ensemble,phan2019thompson,huang2023optimal}, we use $\alpha$-divergence to measure the inference error, i.e., the statistical distance between $\Pi_t$ and $Q_t$ in Algorithms \ref{algo:LinTS} and \ref{algo:LinBUCB}, with the following assumption.% defined as follows.
\begin{assumption} \label{assu0}
$Q_{t}$ is a multivariate distribution on $\mathbb{R}^d$ absolutely continuous with respect to the Lebesgue measure, whose support is the same as $\Pi_{t}$'s support. There exists a positive value $\epsilon \in (0,+\infty)$ and two different parameters $\alpha_1>1$ and $\alpha_2<0$ such that
\begin{equation} \label{equ:error}
%\begin{split}
D_{\alpha_1}(\Pi_{t}, Q_{t})\le \epsilon, \ %\ \forall t\in [T], 
D_{\alpha_2}(\Pi_{t}, Q_{t})\le \epsilon, \ \forall t\in [T].  
%\end{split}
\end{equation}
\end{assumption}

Note the symmetry: $D_{\alpha_1}(\Pi_{t}, Q_{t})=D_{1-\alpha_1}(Q_{t},\Pi_{t})$ and $D_{\alpha_2}(\Pi_{t}, Q_{t})=D_{1-\alpha_2}(Q_{t},\Pi_{t})$, so Assumption \ref{assu0} can be equivalently expressed by using $D_{\alpha}(Q_{t},\Pi_{t})$. Previous studies \citep{phan2019thompson,huang2023optimal} show that one bounded $\alpha$-divergence is insufficient to guarantee a sublinear regret. Thus, it is essential to assume two, rather than one, bounded $\alpha$-divergences, to stipulate that the approximate distribution is close to the posterior from two different ``directions''. In the following, we show that the rate of the regret upper bound can be preserved under Assumption \ref{assu0}. Note that our subsequent results are not tied to any specific approximate inference methods; they hold for any approximate distributions satisfying Assumption \ref{assu0}. Moreover, Assumption \ref{assu0} does not require the threshold $\epsilon$ to be small; instead, $\epsilon$ can be any finite positive number. Lastly, when one $\alpha$-divergence is small, it does not necessarily imply that any other $\alpha$-divergences are large or infinite.

\subsection{Finite-Time Regret Bound of LinTS} 
\label{sec:regretLinTS}

In this section, we derive an upper bound for the finite-time frequentist regret for LinTS with approximate inference in Algorithm \ref{algo:LinTS}, assuming that the inference error satisfies Assumption \ref{assu0}. %The proof of all theorems is presented in Appendix \ref{sec:proof}.

%The following is our main theorem in this section.

%In Section \ref{sec:quantile}, we develop a novel sensitivity analysis of quantile shift with respect to inference error. This provides a fundamental tool to analyze Bayesian quantiles in the presence of approximate inference. The general results therein will be used for our derivation for the regret upper bound of EBUCB in Section \ref{sec:regretbound}, and are also potentially useful for broad applications, e.g., when the inference error is time-dependent. 

\begin{theorem}[Regret of LinTS] \label{thm:LinTS}
Suppose that Assumption \ref{linearassu} holds. $\Pi_t$ is derived from Assumption \ref{def_D1} and $Q_t$ satisfies Assumption \ref{assu0}. Then the regret of LinTS with approximate inference is bounded by
$$R(T) \le \left( \beta_T (\delta') + \hat{\gamma}_T (\delta')(1 + \frac{4}{\kappa_2})\right)\sqrt{2Td \log (1+\frac{T}{\lambda})} $$ $$ + \frac{4\hat{\gamma}_T (\delta')}{\kappa_2}
\sqrt{\frac{8T}{\lambda}\log \frac{4}{\delta}},
$$
with probability $1-\delta$ ($\delta'=\frac{\delta}{4T}$). Here,
$\hat{\gamma}_t (\delta) = \beta_t(\delta) \sqrt{c_2 d \log \frac{c_2'd}{\delta}}$, $\kappa_2=\left(\epsilon\alpha_1 (\alpha_1-1)+1\right)^{\frac{1}{1-\alpha_1}} \kappa_1^{\frac{\alpha_1}{\alpha_1-1}}$, 
$c_2 = c_1 + \frac{\alpha_2-1}{\alpha_2}$, and $c_2'=\frac{c_1'}{\left(\epsilon\alpha_2(\alpha_2-1)+1\right)^{\alpha_2}}$.
\end{theorem}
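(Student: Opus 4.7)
The plan is to reduce the problem to the exact-inference regret analysis of Abeille \& Lazaric by showing that, although we sample $\tilde\theta_t$ from the approximate posterior $Q_t$ rather than $\Pi_t$, the normalized sample $\eta_t := \beta_t(\delta')^{-1}V_t^{1/2}(\tilde\theta_t-\hat\theta_t)$ still satisfies an Assumption~\ref{def_D1}-style anti-concentration and Concentration Type-I, only with weaker constants $(\kappa_2, c_2, c_2')$ in place of $(\kappa_1, c_1, c_1')$. Once this is in place, the standard Abeille--Lazaric decomposition of $R(T)$ into an RLS term (controlled by $\beta_T(\delta')$ and Proposition~\ref{prop_abba}) and a Thompson-sampling term (controlled via optimism-with-probability-$\kappa_2$ and the envelope $\hat\gamma_T(\delta')$) goes through verbatim, yielding exactly the stated bound.

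The key technical device is a pair of Hölder-type transfer lemmas for $\alpha$-divergences. For any measurable $A$, applying Hölder with conjugate exponents $\alpha_1$ and $\alpha_1/(\alpha_1-1)$ to $\int_A p_{\Pi_t} \, dx = \int_A (p_{\Pi_t}/q_{Q_t}^{(\alpha_1-1)/\alpha_1})\, q_{Q_t}^{(\alpha_1-1)/\alpha_1}\, dx$ and invoking $\int p_{\Pi_t}^{\alpha_1} q_{Q_t}^{1-\alpha_1}\, dx = \alpha_1(\alpha_1-1)D_{\alpha_1}(\Pi_t,Q_t)+1$ gives
\[
Q_t(A) \;\ge\; \bigl(\epsilon\alpha_1(\alpha_1-1)+1\bigr)^{1/(1-\alpha_1)} \, \Pi_t(A)^{\alpha_1/(\alpha_1-1)}.
\]
Taking $A=\{u^\top\eta\ge 1\}$ and using the $\kappa_1$ lower bound under $\Pi_t$ yields the anti-concentration constant $\kappa_2$ exactly as stated. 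For the concentration direction I exploit the symmetry $D_{\alpha_2}(\Pi_t,Q_t)=D_{1-\alpha_2}(Q_t,\Pi_t)$ with $1-\alpha_2>1$ and run the same Hölder argument with the roles of $P$ and $Q$ swapped; this gives an \emph{upper} transfer $Q_t(A)\le (\epsilon\alpha_2(\alpha_2-1)+1)^{1/(1-\alpha_2)}\Pi_t(A)^{-\alpha_2/(1-\alpha_2)}$. Applied to the tail set $A=\{\|\eta\|>R\}$, this turns the Type-I concentration of $\Pi_t$ with radius $\sqrt{c_1 d\log(c_1'd/\delta')}$ into a tail bound at confidence $\delta$, where the level $\delta'$ is chosen to absorb the power $-\alpha_2/(1-\alpha_2)$; rearranging the resulting radius and bounding it by a single expression of the form $\sqrt{c_2 d\log(c_2'd/\delta)}$ (with $c_2=c_1+(\alpha_2-1)/\alpha_2$ and $c_2'=c_1'(\epsilon\alpha_2(\alpha_2-1)+1)^{-\alpha_2}$) is a direct logarithm manipulation that uses $c,\log(1/\delta),\log(c_1'd)\ge 0$.

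With $Q_t$ now satisfying an Assumption~\ref{def_D1}-analogue, the rest is a careful bookkeeping of the Abeille--Lazaric argument. I would split $R(T)=\sum_t(J_t(\theta^*)-x_t^\top\hat\theta_t) + \sum_t(x_t^\top\hat\theta_t - x_t^\top\theta^*)$, bound the second sum by $\beta_T(\delta')\sqrt{2Td\log(1+T/\lambda)}$ via Proposition~\ref{prop_abba} and Cauchy--Schwarz on Equation~\eqref{abba_eq2}, and handle the first (optimism-gap) sum by combining: (i) the high-probability envelope $\|\tilde\theta_t-\hat\theta_t\|_{V_t}\le\hat\gamma_t(\delta')$, which follows from the transferred Type-I concentration applied to $\eta_t\sim Q_t$; (ii) a lower bound on the conditional probability of optimism $\ge\kappa_2$ from the transferred anti-concentration, converting the gap on non-optimistic rounds into a controlled multiple of $\mathbb{E}[\|x_t\|_{V_t^{-1}}\mid\mathcal{F}_t^x]$; and (iii) an Azuma--Hoeffding bound on the martingale difference between the gap and its conditional expectation, which produces the $\sqrt{8T\lambda^{-1}\log(4/\delta)}$ term. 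A union bound over the three failure events, each of probability at most $\delta/4$ or $\delta T\cdot(1/(4T))$, closes the argument at total confidence $1-\delta$.

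The main obstacle will be the concentration transfer for $\alpha_2<0$: the Hölder step is cleanest when the exponent is $>1$, so I must route through the $D_{1-\alpha_2}(Q_t,\Pi_t)$ symmetry and then choose the internal confidence level $\delta'$ so that the resulting radius can be cleanly absorbed into the advertised form $\sqrt{c_2 d\log(c_2'd/\delta)}$ without inflating the $d$-dependence; this is what fixes the somewhat unusual constants $c_2=c_1+(\alpha_2-1)/\alpha_2$ and $c_2'=c_1'/(\epsilon\alpha_2(\alpha_2-1)+1)^{\alpha_2}$. Everything else is constant tracking through the exact-inference template.
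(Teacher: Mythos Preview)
Your proposal is correct and follows essentially the same strategy as the paper: show that under Assumption~\ref{assu0} the normalized approximate sample $\beta_t(\delta')^{-1}V_t^{1/2}(\tilde\theta_t-\hat\theta_t)$ inherits Assumption~\ref{def_D1}-style anti-concentration and Concentration Type-I with the degraded constants $(\kappa_2,c_2,c_2')$, and then invoke the Abeille--Lazaric regret theorem verbatim. The only minor difference is in how the transfer is obtained: you use a direct H\"older inequality on events (which immediately yields $Q_t(A)\ge(\epsilon\alpha_1(\alpha_1-1)+1)^{1/(1-\alpha_1)}\Pi_t(A)^{\alpha_1/(\alpha_1-1)}$ and its dual via the $D_{\alpha_2}=D_{1-\alpha_2}$ symmetry), whereas the paper first pushes the $\alpha$-divergence bound down to one dimension via a data-processing inequality and then applies a univariate quantile-shift lemma; the two routes are equivalent and produce the same constants.
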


Theorem \ref{thm:LinTS} exhibits explicitly the impact of approximate inference error on LinTS. It indicates that the regret of LinTS with approximate inference is bounded above by $\tilde{O}(d^{3/2}\sqrt{T})$. This preserves the same rate of the regret upper bound as in LinTS without approximate inference \citep{abeille2017linear}, with only constant terms deteriorating: $\frac{1}{\kappa_2}> \frac{1}{\kappa_1}$, $c_2 > c_1$, and $c_2'>c_1'$. This is reasonable since the use of approximate distributions, rather than exact distributions, introduces additional error. In addition, the regret upper bound in Theorem \ref{thm:LinTS} holds for any $\epsilon$-error threshold on the two $\alpha$-divergences, as $\Pi_{t}$ becomes increasingly concentrated around the ground-truth parameter $\theta^*$ along with increasing $t$, and a finite value of $\epsilon$ can force $Q_{t}$ to ultimately inherit this property. Moreover, Theorem \ref{thm:LinTS} establishes a universal upper bound that applies to any inference method satisfying Assumption \ref{assu0}. Consequently, it does not imply that LinTS with approximate inference is universally worse than LinTS with exact inference. Instead, performance comparisons between the two can vary, depending on the specific problem instance and the choice of inference method.

\subsection{Finite-Time Regret Bound of LinBUCB}  \label{sec:regretLinBUCB}

% In this section, we first derive an upper bound of the finite-time frequentist regret for standard LinBUCB without approximate inference, under Assumption \ref{def_D1}, or Assumption \ref{def_D2}, respectively. 
In this section, we first derive the upper bounds of the finite-time frequentist regret for standard LinBUCB without approximate inference under two different assumptions, Assumption \ref{def_D1} or Assumption \ref{def_D2}. Then we generalize the result to LinBUCB with approximate inference by performing a similar analysis as in Section \ref{sec:regretLinTS}.

\begin{theorem}[Regret of LinBUCB \textit{without} approximate inference] \label{thm:LinBUCB1}
Suppose that Assumption \ref{linearassu} holds.

1) Suppose $\Pi_t$ is derived from Assumption \ref{def_D1}. Set $1-\kappa_1\le \gamma <1$ in Algorithm \ref{algo:LinBUCB}. Then with probability $1-\delta$, the regret of LinBUCB without approximate inference is bounded by
$$R(T) \le \beta_T (\delta) \left(\sqrt{c_1d\log \frac{c_1'd}{1-\gamma} } + 1\right) \sqrt{2Td \log (1+\frac{T}{\lambda})}.
$$

2) Suppose $\Pi_t$ is derived from Assumption \ref{def_D2}. Set $1-\kappa_1\le \gamma <1$ in Algorithm \ref{algo:LinBUCB}. Then with probability $1-\delta$, the regret of LinBUCB without approximate inference is bounded by
$$R(T) \le \beta_T (\delta) \left(\hat{c}_1(1-\gamma) + 1\right) \sqrt{2Td \log (1+\frac{T}{\lambda})}.
$$
\end{theorem}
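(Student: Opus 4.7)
The plan is to treat LinBUCB as an optimism-driven algorithm whose ``upper confidence bound'' for arm $x$ is the posterior quantile $q_t(x) := \mathcal{Q}(\gamma, x^\top \theta)$ with $\theta \sim \Pi_t$. Writing the posterior in the canonical form $\theta = \hat{\theta}_t + \beta_t(\delta) V_t^{-1/2}\eta$ with $\eta \sim \mathcal{D}^i$, a clean expression emerges:
$q_t(x) = x^\top \hat{\theta}_t + \beta_t(\delta)\,\|x\|_{V_t^{-1}}\,\mathcal{Q}(\gamma, u_x^\top \eta),$
where $u_x := V_t^{-1/2}x/\|x\|_{V_t^{-1}}$ is a unit vector. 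This reduces the whole analysis to controlling a one-dimensional quantile of $u^\top \eta$ for a (possibly random) unit direction $u$.

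First I would establish \emph{optimism}. Condition on the good event $E$ from Proposition \ref{prop_abba} (which holds with probability $\geq 1-\delta$ uniformly in $t$), so that $x_t^{*\top}\theta^* \leq x_t^{*\top}\hat{\theta}_t + \|x_t^*\|_{V_t^{-1}}\beta_t(\delta)$. The anti-concentration property shared by both $\mathcal{D}^1$ and $\mathcal{D}^2$ gives $\mathbb{P}(u^\top \eta \geq 1) \geq \kappa_1$, and absolute continuity of $\mathcal{D}^i$ turns this into $F_{u^\top\eta}(1) \leq 1-\kappa_1 \leq \gamma$, yielding $\mathcal{Q}(\gamma, u^\top\eta) \geq 1$ for every unit $u$. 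Plugging in, $q_t(x_t^*) \geq x_t^{*\top}\hat{\theta}_t + \|x_t^*\|_{V_t^{-1}}\beta_t(\delta) \geq J_t(\theta^*)$. Since $x_t \in \arg\max_{x \in \mathcal{X}_t} q_t(x)$, this gives $q_t(x_t) \geq J_t(\theta^*)$.

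Next I would bound the per-step regret via
$J_t(\theta^*) - x_t^\top \theta^* \leq q_t(x_t) - x_t^\top \theta^* \leq \|x_t\|_{V_t^{-1}}\,\beta_t(\delta)\bigl(1 + \mathcal{Q}(\gamma, u_{x_t}^\top \eta)\bigr),$
using optimism for the first inequality and $|x_t^\top(\hat{\theta}_t-\theta^*)| \leq \|x_t\|_{V_t^{-1}}\beta_t(\delta)$ on $E$ for the second. The two parts of the theorem diverge only in how the scalar quantile is controlled. For part~1, Concentration Type-I applied with $\delta = 1-\gamma$ gives $\mathbb{P}(\|\eta\| \leq \sqrt{c_1 d \log(c_1'd/(1-\gamma))}) \geq \gamma$; since $|u^\top\eta| \leq \|\eta\|$, this yields $\mathcal{Q}(\gamma, u^\top\eta) \leq \sqrt{c_1 d \log(c_1'd/(1-\gamma))}$. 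For part~2, Concentration Type-II directly bounds $\mathcal{Q}(\gamma, u^\top\eta) \leq \hat{c}_1(1-\gamma)$ for every unit $u$.

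Finally, summing over $t$, applying Cauchy--Schwarz together with the elliptical-potential bound \eqref{abba_eq2}, and using $\beta_t(\delta) \leq \beta_T(\delta)$ gives
$R(T) \leq \beta_T(\delta)\bigl(1 + Q^\star\bigr)\sqrt{2Td \log(1+T/\lambda)},$
with $Q^\star = \sqrt{c_1 d \log(c_1'd/(1-\gamma))}$ for part~1 and $Q^\star = \hat{c}_1(1-\gamma)$ for part~2, matching the theorem. The main technical obstacle is the optimism step: because $u_{x_t^*} = V_t^{-1/2}x_t^*/\|x_t^*\|_{V_t^{-1}}$ is a complicated random unit vector depending on the entire history, anti-concentration must hold \emph{uniformly} over all unit directions, and similarly the concentration bound on the quantile must be direction-free. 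This uniformity is built into Assumptions~\ref{def_D1} and~\ref{def_D2}, and absolute continuity of $\mathcal{D}^i$ eliminates any pathology at the quantile boundary.
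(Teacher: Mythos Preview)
Your proposal is correct and follows essentially the same route as the paper: the paper packages the quantile bounds you derive into an auxiliary lemma (Theorem~\ref{thm:concentrationquantile}) that translates the anti-concentration and concentration properties into lower and upper bounds on $\mathcal{Q}(\gamma,x^\top\tilde\theta_t)$, then performs exactly your optimism step, your per-step regret decomposition, and the Cauchy--Schwarz/elliptical-potential summation. Your inlining of the quantile computation via $q_t(x)=x^\top\hat\theta_t+\beta_t(\delta)\|x\|_{V_t^{-1}}\mathcal{Q}(\gamma,u_x^\top\eta)$ is precisely the content of that lemma, so the two arguments are the same.
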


In Theorem \ref{thm:LinBUCB1}, part 1) shows that under the same distribution assumption as in LinTS (Assumption \ref{def_D1}), LinBUCB achieves the same rate of $\tilde{O}(d^{3/2}\sqrt{T})$ as in LinTS, whereas part 2) indicates that LinBUCB can achieve a faster rate $\tilde{O}(d\sqrt{T})$ that matches the minimax optimal rate  $O(d\sqrt{T})$ up to logarithmic factors when Assumption \ref{def_D2} holds. Part 2) reveals the advantages of the new concentration property in Assumption \ref{def_D2}, since it helps eliminate the optimality gap of $d^{1/2}$ appearing in LinTS.

\begin{theorem}[Regret of LinBUCB \textit{with} approximate inference] \label{thm:LinBUCB2}
Suppose that Assumptions \ref{linearassu} and \ref{assu0} hold.

1) Suppose $\Pi_t$ is derived from Assumption \ref{def_D1}. Set $1-\kappa_2\le \gamma <1$ in Algorithm \ref{algo:LinBUCB}. Then with probability $1-\delta$, the regret of LinBUCB with approximate inference is bounded by
$$R(T) \le \beta_T (\delta) \left(\sqrt{c_2d\log \frac{c_2'd}{1-\gamma} } + 1\right) \sqrt{2Td \log (1+\frac{T}{\lambda})}.
$$

2) Suppose $\Pi_t$ is derived from Assumption \ref{def_D2}. Set $1-\kappa_2\le \gamma <1$ in Algorithm \ref{algo:LinBUCB}. Then with probability $1-\delta$, the regret of LinBUCB with approximate inference is bounded by
$$R(T) \le \beta_T (\delta) \left(\hat{c}_2(1-\gamma) + 1\right) \sqrt{2Td \log (1+\frac{T}{\lambda})},
$$
where $\kappa_2=\left(\epsilon\alpha_1 (\alpha_1-1)+1\right)^{\frac{1}{1-\alpha_1}} \kappa_1^{\frac{\alpha_1}{\alpha_1-1}}$, 
$c_2 = c_1 + \frac{\alpha_2-1}{\alpha_2}$, $c_2'=\frac{c_1'}{\left(\epsilon\alpha_2(\alpha_2-1)+1\right)^{\alpha_2}}$, and $\hat{c}_2(\zeta) = \hat{c}_1(\zeta^{\frac{\alpha_2-1}{\alpha_2}} \left(\epsilon\alpha_2(\alpha_2-1)+1\right)^{\alpha_2})$. % in the above inequalities.
\end{theorem}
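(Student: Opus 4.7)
The plan is to mirror the proof structure of Theorem \ref{thm:LinBUCB1}, with the key modification that every probability statement about the exact posterior $\Pi_t$ must be transferred to the approximate distribution $Q_t$. Since Algorithm \ref{algo:LinBUCB} only interacts with the posterior through $Q_t$ (via the quantile $\mathcal{Q}(\gamma, x^\top \theta)$ taken under $Q_t$), the regret analysis reduces to showing that $Q_t$ satisfies suitable anti-concentration and concentration properties, which will follow from those of $\Pi_t$ via the two $\alpha$-divergence bounds in Assumption \ref{assu0}.

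First I would establish two transfer lemmas (which are essentially the workhorses already needed for Theorem \ref{thm:LinTS}). For the anti-concentration transfer, I use $D_{\alpha_1}(\Pi_t, Q_t)\le \epsilon$ with $\alpha_1>1$ and H\"older's inequality in the form
\[
\mathbb{P}_{\eta \sim \Pi_t}(A) \le \left(\int p_{\Pi_t}^{\alpha_1} p_{Q_t}^{1-\alpha_1}\,dx\right)^{1/\alpha_1} \mathbb{P}_{\eta \sim Q_t}(A)^{(\alpha_1-1)/\alpha_1},
\]
combined with $\int p_{\Pi_t}^{\alpha_1} p_{Q_t}^{1-\alpha_1}\,dx \le \epsilon\alpha_1(\alpha_1-1)+1$; applied to $A = \{u^\top \eta \ge 1\}$ this delivers $\mathbb{P}_{\eta \sim Q_t}(u^\top \eta \ge 1) \ge \kappa_2$ with exactly the $\kappa_2$ stated in the theorem. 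For the concentration transfer, I use $D_{\alpha_2}(\Pi_t, Q_t)\le \epsilon$ with $\alpha_2<0$ (equivalently $D_{1-\alpha_2}(Q_t,\Pi_t)\le \epsilon$ with $1-\alpha_2>1$) and the same H\"older step in reverse, upper bounding $\mathbb{P}_{\eta \sim Q_t}(A)$ by a power of $\mathbb{P}_{\eta \sim \Pi_t}(A)$. Applied to the tail event in Assumption \ref{def_D1} this produces the transferred concentration with constants $c_2$ and $c_2'$; applied to the one-dimensional tail event $A=\{u^\top\eta > z\}$ of Assumption \ref{def_D2} it produces the transferred Type-II concentration with the stated $\hat{c}_2(\zeta) = \hat{c}_1(\zeta^{(\alpha_2-1)/\alpha_2}(\epsilon\alpha_2(\alpha_2-1)+1)^{\alpha_2})$.

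With these transferred properties in hand, I would carry out the standard LinBUCB regret decomposition. Writing $U_t(x) = \mathcal{Q}(\gamma, x^\top \theta)$ for the quantile under $Q_t$, the per-step regret decomposes as
\[
J_t(\theta^*) - x_t^\top \theta^* = \big(J_t(\theta^*) - U_t(x_t^*)\big) + \big(U_t(x_t^*) - U_t(x_t)\big) + \big(U_t(x_t) - x_t^\top \theta^*\big).
\]
The middle term is non-positive by the greedy selection rule. The first term is non-positive on the event $\{U_t(x_t^*) \ge J_t(\theta^*)\}$ that the quantile is optimistic at the true optimal arm; the transferred anti-concentration of $Q_t$ with constant $\kappa_2$, together with the choice $1-\kappa_2 \le \gamma < 1$, guarantees this event, which is precisely why the theorem imposes this range on $\gamma$. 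The third term is controlled using the transferred concentration: $U_t(x_t) - x_t^\top \theta^*$ is bounded by $\|x_t\|_{V_t^{-1}}$ times a radius depending on $\beta_T(\delta')$ and the transferred constants, after which Proposition \ref{prop_abba}'s inequality \eqref{abba_eq2} and Cauchy--Schwarz deliver the $\sqrt{2Td\log(1+T/\lambda)}$ factor. For part 1) this radius is of order $\beta_T(\delta)\sqrt{c_2 d\log(c_2'd/(1-\gamma))}$; for part 2) it is $\beta_T(\delta)\hat{c}_2(1-\gamma)$, which crucially carries no explicit $d$ and yields the improved $\tilde O(d\sqrt{T})$ rate.

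The main obstacle is the Concentration Type-II transfer in part 2): the transferred tail bound on $u^\top \eta$ under $Q_t$ must remain dimension-free in its leading dependence on $d$, since any hidden $d$ factor would destroy the improvement over part 1). A naive transfer through H\"older's inequality risks introducing extra $d$ factors, so the key is to apply the inequality to the one-dimensional event $\{u^\top \eta > z\}$ rather than to any $d$-dimensional ball event, so that the only contamination from $Q_t\neq\Pi_t$ is a multiplicative $\alpha$-divergence constant absorbed into the argument of $\hat{c}_1$. Once this is handled, the remainder is parallel to the exact-inference proof of Theorem \ref{thm:LinBUCB1}: the constants $\kappa_1, c_1, c_1', \hat{c}_1$ are systematically replaced by $\kappa_2, c_2, c_2', \hat{c}_2$, and a union bound over $t\in[T]$ (with $\delta'=\delta/(4T)$ as in Proposition \ref{prop_abba}) delivers the stated high-probability guarantee.
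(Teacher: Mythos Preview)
Your proposal is correct and follows the same strategy as the paper: transfer the anti-concentration and the two concentration properties from $\Pi_t$ to $Q_t$ via the two $\alpha$-divergence bounds, then rerun the exact-inference LinBUCB argument (Theorem~\ref{thm:LinBUCB1}) with $\kappa_1,c_1,c_1',\hat c_1$ replaced by $\kappa_2,c_2,c_2',\hat c_2$. The only cosmetic difference is that you apply H\"older's inequality directly to the events $\{u^\top\eta\ge 1\}$, $\{\|\eta\|>z\}$, and $\{u^\top\eta>z\}$, whereas the paper routes the same computation through a quantile-shift lemma (Theorem~\ref{thm:quantileshift}) combined with invariance/monotonicity of $\alpha$-divergence under maps (Theorem~\ref{thm:invariance}); both produce exactly the stated constants. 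One small correction: LinBUCB involves no per-step sampling randomness and Proposition~\ref{prop_abba} already holds uniformly in $t$, so no union bound with $\delta'=\delta/(4T)$ is needed---the theorem is stated (and your argument actually delivers) $\beta_T(\delta)$, not $\beta_T(\delta')$.
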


Theorem \ref{thm:LinBUCB2} shows that the regret is bounded above by $\tilde{O}(d^{3/2}\sqrt{T})$ with Assumption \ref{def_D1} and $\tilde{O}(d\sqrt{T})$ with Assumption \ref{def_D2}. It indicates that LinBUCB with approximate inference preserves the same rate of the regret upper bound as in LinBUCB without approximate inference (Theorem \ref{thm:LinBUCB1}), with degradation lying in the constant terms: $\frac{1}{\kappa_2}> \frac{1}{\kappa_1}$, $c_2 > c_1$, and $c_2'>c_1'$. These observations are parallel to the ones for LinTS with or without approximate inference (Section \ref{sec:regretLinTS}).

In Theorems \ref{thm:LinBUCB1} and \ref{thm:LinBUCB2}, ideally, we would choose $\gamma = 1- \kappa_1$ or $\gamma = 1- \kappa_2$ in Algorithm \ref{algo:LinBUCB} to obtain a lower regret upper bound. In practice, the exact value of $\kappa_1$ or $\kappa_2$ might not be available, but our theorems hold for any constant quantile larger than $1- \kappa_1$ or $1- \kappa_2$, slightly diminishing the negative effect of unknown $\kappa_1$ or $\kappa_2$.

\textbf{Outline of Proof Techniques.} We present high-level ideas for the technical derivation of our theorems. The detailed proof of all theorems is presented in Appendix \ref{sec:proof}. First, to understand the impact of the approximate inference on the posterior, we develop a sensitivity analysis of the distribution property change with respect to inference error; See Theorems \ref{thm:invariance} - \ref{thm:concentration}. Herein, we show that under Assumption \ref{assu0}, $Q_{t}$ can preserve similar anti-concentration and concentration properties as in $\Pi_{t}$, but with different bounding constants.
A key ingredient is to analyze the shift of the quantile functions of $u^\top \eta$ and $\|\eta\|$ caused by the $\alpha$-divergence error in Assumption \ref{assu0}; See Theorem \ref{thm:concentration}. Based on this sensitivity analysis, the connection between algorithms (LinTS and LinBUCB) with and without approximate inference can be established.
In addition, as a natural generalization of our tools, we can derive upper and lower bounds on the quantiles of $u^\top \eta$ based on the anti-concentration and concentration properties; See Theorem \ref{thm:concentrationquantile}. This result can be used to analyze the gap between the quantiles chosen by LinBUCB and the optimal reward, contributing to its regret analysis.

%and thus the bound in Theorem \ref{thm:LinBUCB2} deteriorates compared with LinBUCB without approximate inference (Theorem \ref{thm:LinBUCB1}).

\subsection{Negative Results} \label{sec:negativeresults}

%$\alpha$-divergence Alone Cannot Guarantee a Sub-Linear Regret

%In this section, we provide negative results to illustrate that one bounded $\alpha$-divergence alone cannot guarantee a sub-linear regret. 

In this section, we present negative results in a worst-case scenario, demonstrating that linear regret can occur under a single bounded $\alpha$-divergence. Specifically, we show that LinTS and LinBUCB may suffer linear regret in an adversarial setting where their inference errors are constrained using only one $\alpha$-divergence. This observation parallels previous negative results in multi-arm bandits \citep{phan2019thompson,huang2023optimal}. We caution that this finding does not imply that the two bandit algorithms would universally fail across all approximate inference methods. In practice, methods relying on a single bounded $\alpha$-divergence may still yield satisfactory results. Nevertheless, our negative results highlight that employing designs that incorporate two $\alpha$-divergences provides a statistically guaranteed strategy to mitigate worst-case scenarios.

\begin{assumption} \label{assu3}
$Q_{t}$ is a multivariate distribution on $\mathbb{R}^d$ absolutely continuous with respect to the Lebesgue measure, whose support is the same as $\Pi_{t}$'s support. There exists a positive value $\epsilon \in (0,+\infty)$ and a parameter $\alpha$ such that
\begin{equation} \label{equ:error2}
D_{\alpha}(\Pi_{t}, Q_{t})\le \epsilon, \forall t\in [T].  
\end{equation}
\end{assumption}

We establish two negative results with approximate inference under Assumption \ref{assu3}:
\begin{theorem} \label{thm:tsfails}
Consider a stochastic linear bandit problem where the ground-truth parameter $\theta^* = (\mu_1, \mu_2)$ and $\mu_1>\mu_2$.
For any given $\alpha>0$ and any error threshold $\epsilon>0$, there exists a problem instance and a sequence of distributions $Q_{t}$, such that 1) for all $t \ge 1$, $Q_{t}$ satisfies Assumption \ref{assu3}, and 2) LinTS from the approximate distribution $Q_{t}$ has a linear frequentist regret $R(T)=\Omega(T)$.
\end{theorem}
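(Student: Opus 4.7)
The plan is to construct a single 2-dimensional linear bandit instance along with a sequence of approximate posteriors $Q_t$ parametrized by a small mixing weight $\eta$, so that at every round the bad event $\{x_t = e_2\}$ has probability bounded away from zero, while $D_\alpha(\Pi_t, Q_t)$ remains uniformly bounded by $\epsilon$. Specifically, I would take $d = 2$, a fixed arm set $\mathcal{X}_t = \{e_1, e_2\}$, Gaussian noise, and $\theta^* = (\mu_1, \mu_2)$ with $\mu_1 > \mu_2 > 0$ chosen so that $\|\theta^*\| \le S$. The exact posterior $\Pi_t$ is the Gaussian $\mathcal{N}(\hat\theta_t, \beta_t(\delta')^2 V_t^{-1})$ from Section \ref{sec:LinTS}, which satisfies Assumption \ref{def_D1}. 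I would then fix a Gaussian $G = \mathcal{N}((-a, a), I_2)$ with $a > 0$ chosen large enough that $p := \mathbb{P}_{\theta \sim G}(\theta_2 > \theta_1)$ is close to one, and define $Q_t = (1-\eta)\Pi_t + \eta G$ with $\eta \in (0, 1)$ to be specified.

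The first step is to show that the single divergence is small. Because the densities satisfy $q_t \ge (1-\eta)\pi_t$ pointwise, for $\alpha > 1$ we have $\pi_t^\alpha q_t^{1-\alpha} \le (1-\eta)^{1-\alpha}\pi_t$ (exponent $1-\alpha < 0$ flips the inequality), while for $0 < \alpha < 1$ we get $\pi_t^\alpha q_t^{1-\alpha} \ge (1-\eta)^{1-\alpha}\pi_t$. Integrating and combining with the Hölder bound $\int \pi_t^\alpha q_t^{1-\alpha}\,dx \le 1$ for $0 < \alpha < 1$ yields, in either case,
\[
D_\alpha(\Pi_t, Q_t) \;\le\; \frac{\bigl|(1-\eta)^{1-\alpha} - 1\bigr|}{|\alpha(\alpha-1)|},
\]
with the KL limit $\log\frac{1}{1-\eta}$ at $\alpha = 1$. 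Each of these bounds vanishes as $\eta \to 0$, so for any prescribed $\alpha > 0$ and $\epsilon > 0$ one can pick $\eta = \eta(\alpha, \epsilon) \in (0, 1)$ making it at most $\epsilon$, uniformly in $t$ and in the algorithm's random history. The full-support absolute continuity condition in Assumption \ref{assu3} is automatic since both $\Pi_t$ and $G$ are Gaussian on $\mathbb{R}^2$.

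The second step is to lower bound the regret. Sampling $\tilde\theta_t \sim Q_t$ is equivalent to flipping an $\eta$-coin and drawing from $G$ on heads or from $\Pi_t$ on tails. Conditional on any history $\mathcal{F}_{t-1}^x$, the heads event has probability $\eta$, and the $G$-draw (independent of history) satisfies $\tilde\theta_{t,2} > \tilde\theta_{t,1}$ with probability $p > 0$; in that case LinTS selects $e_2$, contributing instantaneous regret $\Delta = \mu_1 - \mu_2 > 0$. Hence $\mathbb{P}(x_t = e_2 \mid \mathcal{F}_{t-1}^x) \ge \eta p$ at every round, so $\mathbb{E}[R(T)] \ge \Delta \cdot \eta p \cdot T$; an Azuma-Hoeffding concentration on the compensated martingale $N_t - \sum_{s \le t}\mathbb{P}(x_s = e_2 \mid \mathcal{F}_{s-1}^x)$ then strengthens this to $R(T) \ge (\Delta \eta p / 2)\, T$ with probability $1 - e^{-cT}$, yielding $R(T) = \Omega(T)$.

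The main obstacle is reconciling the two regimes $\alpha > 1$ and $0 < \alpha < 1$ with a single construction: the direction of the inequality inside $\int \pi_t^\alpha q_t^{1-\alpha}\,dx$ flips at $\alpha = 1$, so one might fear that separate counterexamples are needed. The key observation is that the pointwise domination $q_t \ge (1-\eta)\pi_t$, combined with the automatic Hölder upper bound $\int \pi_t^\alpha q_t^{1-\alpha}\,dx \le 1$ when $0 < \alpha < 1$, makes the same mixture work uniformly across all $\alpha > 0$. A minor care point is that $Q_t$ is itself adapted to $\mathcal{F}_{t-1}^x$ through $\Pi_t$, but this is harmless because the lower bound $\eta p$ holds conditional on any history.
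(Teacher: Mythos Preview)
Your proof is correct, and it proceeds by a genuinely different construction from the paper's. The paper takes $\mathcal{X}_t\equiv\{e_1,e_2\}$ just as you do, but instead of mixing in a fixed ``bad'' Gaussian, it \emph{reweights} $\Pi_t$ on the two half-spaces $\{x_1<x_2\}$ and $\{x_1\ge x_2\}$: it multiplies the density on the good region $\{x_1\ge x_2\}$ by $1/r$ and rescales on the bad region so that $q_t$ integrates to one. This forces $\mathbb{P}_{Q_t}(x_1\ge x_2)\le 1/r$, so the suboptimal arm is selected with probability at least $1-1/r$ at every round. The divergence is then bounded via the two-sided ratio control $q_t/\pi_t\in[1/r,\, (1-\tfrac{1}{r}(1-F_t))/F_t]$, giving $D_\alpha(\Pi_t,Q_t)\le \tfrac{1}{\alpha(\alpha-1)}(r^{\alpha-1}-1)$ (and $\log r$ at $\alpha=1$), uniformly in $t$.

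Both arguments ultimately rest on the same mechanism: a pointwise density-ratio bound that is independent of $t$, which simultaneously controls $D_\alpha$ and guarantees a constant per-round probability of choosing the wrong arm. Your mixture route is a bit cleaner in that it needs only the one-sided bound $q_t\ge(1-\eta)\pi_t$ and never touches the data-dependent quantity $F_t=\mathbb{P}_{\Pi_t}(x_1<x_2)$; the price is that your bad-arm probability $\eta p$ is small, whereas the paper's $1-1/r$ can be driven close to $1$ --- irrelevant for the $\Omega(T)$ conclusion. Your Azuma step gives a high-probability statement the paper does not bother with (it only lower-bounds the expected regret). One small note: your invocation of the H\"older bound $\int \pi_t^\alpha q_t^{1-\alpha}\le 1$ for $0<\alpha<1$ is unnecessary; the lower bound $\int \pi_t^\alpha q_t^{1-\alpha}\ge(1-\eta)^{1-\alpha}$ alone already yields the desired upper bound on $D_\alpha$ after dividing by $\alpha(\alpha-1)<0$.
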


\begin{theorem} \label{thm:bucbfails}
Consider a stochastic linear bandit problem where the ground-truth parameter $\theta^* = (\mu_1, \mu_2)$ and $\mu_1>\mu_2$.
For any given $\alpha>0$ and any error threshold $\epsilon$ satisfying $\epsilon> \frac{1}{\alpha(\alpha-1)}(\gamma^{1-\alpha}-1)$ ($\alpha>0$ and $\alpha\ne 1$) or $\epsilon>-\log(\gamma)$ ($\alpha=1$), there exists a problem instance and a sequence of distributions $Q_{t}$, such that 1) for all $t \ge 1$, $Q_{t}$ satisfies Assumption \ref{assu3}, and 2) LinBUCB from the approximate distribution $Q_{t}$ has a linear frequentist regret $R(T)=\Omega(T)$.
\end{theorem}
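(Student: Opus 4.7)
The plan is to exhibit, for any $\alpha > 0$ and any $\epsilon$ above the stated threshold, a two-arm linear bandit together with an adaptively chosen sequence $Q_t$ that fools LinBUCB into always pulling the suboptimal arm, so that $R(T) = T(\mu_1 - \mu_2) = \Omega(T)$. Concretely, take $d = 2$, $\mathcal{X}_t = \{e_1, e_2\}$ for every $t$, $\theta^* = (\mu_1, \mu_2)$, Gaussian noise, and regularization $\lambda > 0$, so Assumption \ref{linearassu} holds. Let the exact posterior in Algorithm \ref{algo:LinBUCB} be $\Pi_t = \mathcal{N}(\hat\theta_t, \beta_t(\delta')^2 V_t^{-1})$ with density $p_t$; this is the standard Gaussian choice, and it satisfies Assumptions \ref{def_D1} and \ref{def_D2}.

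At each $t$, define the approximate posterior $Q_t$ via the mixture density $q_t = (\gamma - \tau)\, p_t + (1 - \gamma + \tau)\, h_t$, where $\tau \in (0, \gamma)$ is a small slack constant to be fixed at the end and $h_t$ is a tightly concentrated Gaussian bump on $\mathbb{R}^2$ around $(L_t, M_t)$, with $L_t$ very negative and $M_t$ a large multiple of $\beta_t(\delta')$. Since $p_t$ and $h_t$ are both everywhere positive, $Q_t$ is absolutely continuous and shares $\Pi_t$'s support. For the divergence bound, the pointwise inequality $q_t \ge (\gamma - \tau)\, p_t$ together with the monotonicity of $x \mapsto x^{1-\alpha}$ (decreasing when $\alpha > 1$, increasing when $\alpha \in (0,1)$) yields, once the sign of $1/[\alpha(\alpha-1)]$ is accounted for, the case-free estimate
\[
D_\alpha(\Pi_t, Q_t) \;\le\; \frac{1}{\alpha(\alpha-1)}\bigl((\gamma - \tau)^{1-\alpha} - 1\bigr),
\]
and an analogous computation gives $D_1(\Pi_t, Q_t) \le -\log(\gamma - \tau)$ in the KL case. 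Because $\epsilon$ strictly exceeds the $\tau \to 0^+$ limit of the right-hand side, a sufficiently small constant $\tau > 0$ (independent of $t$) makes Assumption \ref{assu3} hold for every $t$.

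It remains to show by induction on $t$ that LinBUCB selects $e_2$ at every round. Under the induction hypothesis that $e_2$ was played at rounds $1, \ldots, t-1$, the design matrix equals $V_t = \mathrm{diag}(\lambda, \lambda + t - 1)$ and $\hat\theta_{t,1} = 0$, so the $\gamma$-quantile of $e_1^\top \theta$ under $p_t$ is $O(\beta_t(\delta')/\sqrt{\lambda})$; direct computation of $Q_t(\theta_1 \le u)$ as $c_1 P_{p_t}(\theta_1 \le u) + c_2 h_t(\theta_1 \le u)$, using that $h_t(\theta_1 \le u) \approx 1$ for any $u$ not very negative, shows that $\mathcal{Q}(\gamma, e_1^\top \theta) \le C\, \beta_t(\delta')$ for a constant $C = C(\gamma, \lambda)$. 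On the other hand, choosing $h_t$ tight enough that $h_t(\theta_2 > M_t - 1) \ge 1 - \tau/2$ gives $Q_t(\theta_2 > M_t - 1) \ge (1 - \gamma + \tau)(1 - \tau/2) > 1 - \gamma$, so $\mathcal{Q}(\gamma, e_2^\top \theta) \ge M_t - 1$. Taking $M_t = 2C \beta_t(\delta') + 2$ therefore closes the induction, $e_2$ is selected at every step, and the regret lower bound follows. The main obstacle I anticipate is precisely this self-consistency step: the quantile comparison must hold round by round against the history that the construction itself is generating. This works out because $e_1$ is never pulled, so the $e_1$-marginal of $\Pi_t$ stays Gaussian with variance $\beta_t(\delta')^2/\lambda$ that grows only polylogarithmically in $t$, which $M_t$ dominates by design.
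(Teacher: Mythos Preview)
Your mixture construction is correct and yields the same divergence bound as the paper (your $(\gamma-\tau)^{1-\alpha}$ is the paper's $r^{\alpha-1}$ under $r=1/(\gamma-\tau)$), but the route is genuinely different. The paper builds $Q_t$ by \emph{reweighting} $\Pi_t$: with $b_t=\mathcal{Q}(\gamma,\Pi_{t,1})$ it multiplies $\pi_t$ by $1/r$ on $\{x_2<b_t\}$ and by a compensating factor on $\{x_2\ge b_t\}$ chosen so that the $x_1$-marginal is preserved \emph{exactly}. This immediately gives $\mathcal{Q}(\gamma,Q_{t,1})=b_t$ while $\mathbb{P}_{Q_t}(x_2\le b_t)\le 1/r<\gamma$ forces $\mathcal{Q}(\gamma,Q_{t,2})>b_t$, so arm~2 is chosen at every $t$ regardless of the realised history---no induction on $V_t$, no assumption that $\Pi_t$ is Gaussian, and no adaptive choice of bump location is needed. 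Your approach trades that structural elegance for a very clean divergence argument (the one-line pointwise bound $q_t\ge(\gamma-\tau)p_t$), at the cost of having to carry the inductive hypothesis that $e_1$ was never pulled so that $\hat\theta_{t,1}=0$ and $[V_t^{-1}]_{11}=1/\lambda$, and having to choose $M_t,L_t$ round by round. Both work; the paper's construction is more robust (it would go through verbatim for any admissible $\Pi_t$), while yours is arguably easier to verify line by line. One small cosmetic point: in Algorithm~\ref{algo:LinBUCB} the scaling is $\beta_t(\delta)$, not $\beta_t(\delta')$, though this does not affect your argument.
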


%The above two theorems show that making one $\alpha$-divergence bounded alone (Assumption \ref{assu3}) is insufficient to guarantee a sub-linear regret for LinTS or LinBUCB. This justifies the need for our Assumption \ref{assu0} where two different $\alpha$-divergences should be bounded. We also remark that these results highlight the posterior distributions in Bayesian bandit algorithms cannot be arbitrarily chosen. Improperly chosen approximate posteriors can cause LinTS and LinBUCB to fail. The approximate posteriors should strike the right balance between exploration and exploitation.

The above two theorems demonstrate that bounding a single 
$\alpha$-divergence (Assumption \ref{assu3}) is insufficient to guarantee sublinear regret for LinTS or LinBUCB. This underscores the necessity of Assumption \ref{assu0} that requires two distinct $\alpha$-divergences to be bounded. These results also imply that posterior distributions in Bayesian bandits cannot be arbitrarily chosen. Improperly selected approximate posteriors could lead to the failure of LinTS and LinBUCB. To ensure effective performance, approximate posteriors must strike the right balance between exploration and exploitation.

\section{Experiments} \label{sec:exp}

In this section, we conduct experiments with different problem settings to validate the correctness of our theory. The methods, Linear Thompson Sampling with Approximate Inference and Linear Bayesian Upper Confidence Bound with Approximate Inference, denoted as LinTS$\_$Approximate and LinBUCB$\_$Approximate, are compared with their original versions LinTS and LinBUCB.   %\footnotemark\footnotetext{The source code for experiments will be publicly available upon acceptance.}.

\begin{figure*}[ht]
    \centering
    \includegraphics[width=0.9\textwidth]{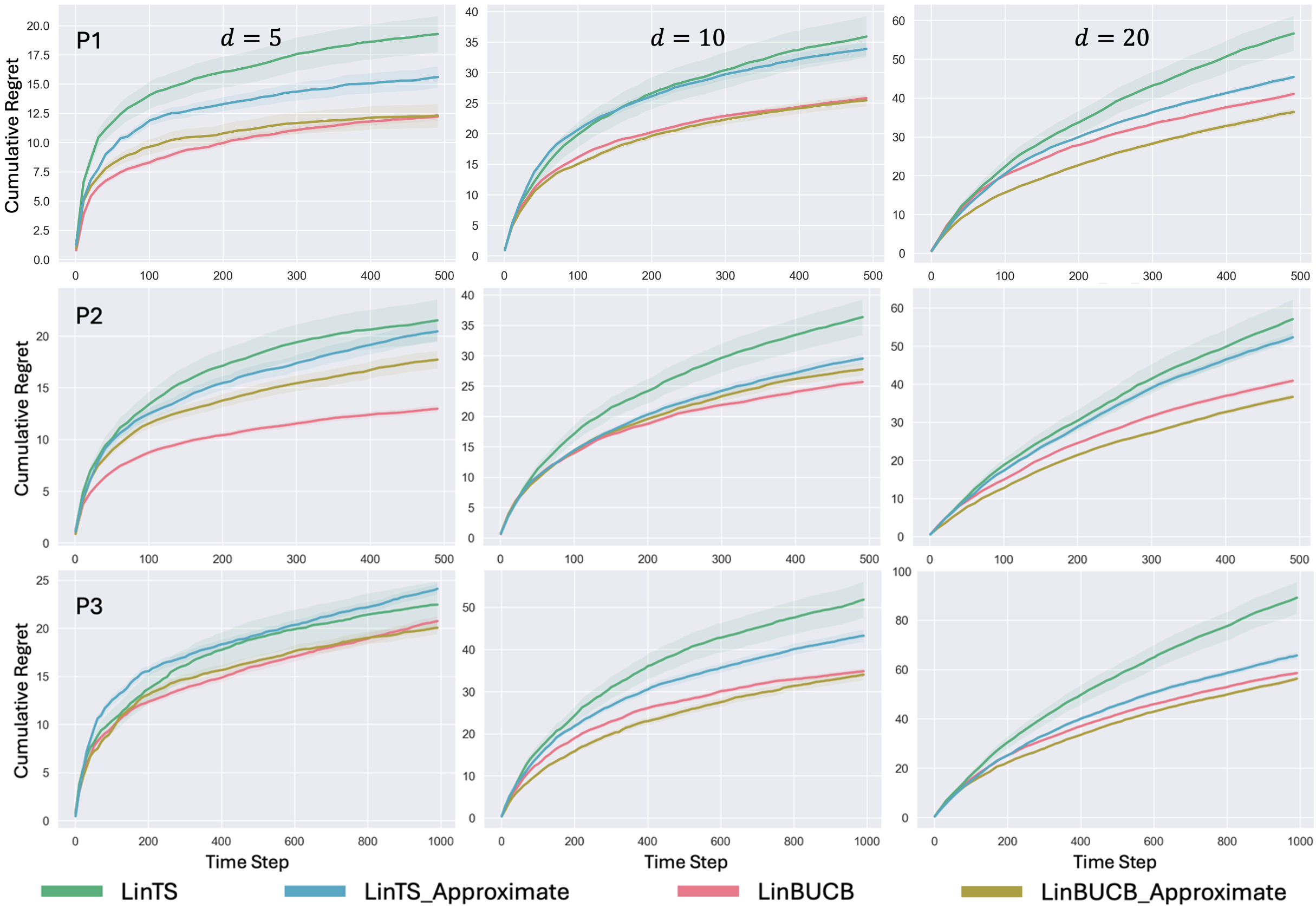}
    \caption{Results of LinBUCB, LinTS, LinBUCB$\_$Approximate, and LinTS$\_$Approximate under different problem settings. Results are averaged over 10 runs with shaded standard errors.} %1 Thompson Sampling 2 BUCB($c=0$, $\zeta=1$) 3 EBUCB($c=0$, $\zeta=\frac{1}{2}$)}
    \label{fig:too}
\end{figure*}

%\textbf{Experimental Settings on Diagonal Approximation.} Lastly, 
%Although the theoretical constants $\alpha_1$ and $\alpha_2$ are computationally intractable, The deployment of the diagonal matrix inverse can significantly reduce the computational cost of matrix inversion in large-scale systems, contributing to evaluating the practical robustness of our framework under realistic settings. 
We perform experiments on the following three linear contextual bandit problems with various context dimensions: $P1: \theta^*=[1,-1,1,-1,\cdots, (-1)^{d+1}]$, $P2: \theta^*=[\sin(1),\sin(2),\sin(3),\cdots,\sin(d)]$, and $P3: \theta^*$ is a sample generated once from $\text{Uniform}(0, 1)^{\otimes d}$. For all settings, we use a fixed size of the arm set $K = 10$, along with $3$ different context dimensions $d = 5, 10, 20$. %The repeat time is set to $100$ and \textcolor{red}{the update rate is set to 5}. 
We set the time horizon as $T = 500, 1000$, which is sufficiently large to ensure the comparisons of all methods. At each time step $t$, $K = 10$ arms are generated where the context vector of each arm $x$ is randomly sampled from a normal distribution $\mathcal{N}(0,I_d)$ and then scaled to be in the unit-norm ball. %For $LinBUCB$ and $LinBUCB\_Approximate$, the bonus reward function is set as XX and for $LinTS$ and $LinTS\_Approximate$, the variance parameter is set as XXX.
For approximate inference, we implement a diagonal matrix inverse method \citep{zhou2020neural,zhang2021neural} to approximately compute the mean and covariance in the Gaussian distributions. This method largely facilitates the computational time of matrix inverse. More implementation details, including approximate computing approach selection, computation resources, execution time, and hyperparameters are presented in Appendix \ref{sec:implement}.

In Figure \ref{fig:too}, we present the mean and the standard error of the cumulative regret plots, i.e., $R(T)$ against $T$, of different methods over 10 runs. We have the following three observations:

%To conduct approximate inference, we first note that the original algorithm requires computing the inverse $V_t^{-1}$ and drawing samples based on this covariance matrix. This could be problematic in a high-dimensional setting. So we apply a fast approximate computing approach: Take the diagonal elements of $V_t$ as a diagonal matrix, $D_t=\text{diag}(V_t)$, and compute its inverse, $D_t^{-1}$, as an approximation of $V_t^{-1}$.
%We use \textcolor{red}{XXXX} as our linear reward function. \textcolor{red}{For a given arm $x$, its true reward is then generated by XXXXX,} 

1) In general, all bandit algorithms achieve high performance with a sublinear convergence rate. Particularly, LinBUCB and LinBUCB$\_$Approximate achieve similar regret plots, which validates our theorems that these approaches can preserve their original rate of regret upper bound in the presence of approximate inference. A similar observation also holds for LinTS and LinTS$\_$Approximate. 2) LinBUCB and LinBUCB$\_$Approximate generally outperform LinTS and LinTS$\_$Approximate in most experimental settings, probably due to the conservative over-exploration of Thompson sampling. This also corroborates our theory that LinBUCB has a better rate $\tilde{O}(d\sqrt{T})$ than LinTS under Assumption \ref{def_D2}. 3) LinBUCB$\_$Approximate and LinTS$\_$Approximate are more computationally efficient than LinBUCB and LinTS with reduced time and space complexities. This reveals the advantages of the approximate procedure, as it obviates the need for explicitly computing $V_t^{-1}$ and sampling from complex covariance matrices. 

Finally, more results are presented in Appendix \ref{sec:expmore}. Results on computational efficiency are provided in Appendix~\ref{sec:comput_method} and Appendix~\ref{sec:comput_matrix}, including runtime comparisons across different algorithms and the diagonal matrix inverse versus the full matrix inverse. In Appendix \ref{sec:results}, we present additional experimental results with various configurations of hyper-parameters ($K$, $d$, and $T$) to further demonstrate the scalability of our proposed methodology. Moreover, we conduct the sensitivity analysis to examine the impact of different choices of quantile $\gamma$ in LinBUCB and present the results in Appendix \ref{sec:sensitivity}. Overall, our results corroborate our theoretical findings, offering insights into how approximate inference influences algorithmic performance.

\section{Conclusions}

In this paper, we present a theoretical framework to investigate the impact of approximate inference on Bayesian bandit algorithms in terms of frequentist regret performance. Specifically, we analyze LinTS and LinBUCB, demonstrating that both algorithms can achieve similar regret upper bounds in exact and approximate inference settings, provided their inference errors, measured by two distinct $\alpha$-divergences, are bounded throughout the time horizon. Furthermore, we establish that the regret rate of LinBUCB can be improved to match the minimax optimal rate by introducing a new definition of well-behaved distributions, which provides a refined characterization of the LinBUCB structure. %Our current study centers on the linear reward model. In future work, we plan to extend this framework to a broader class of Bayesian bandit algorithms and explore its applicability to non-linear reward models. 
Our current study focuses on the linear reward functions. In future work, we aim to extend this framework to a broader class of Bayesian bandit algorithms and investigate its applicability to non-linear reward models.

{
\bibliographystyle{unsrt}
\bibliography{bib}

\begin{thebibliography}{10}

\bibitem{robbins1952some}
Herbert Robbins.
\newblock Some aspects of the sequential design of experiments.
\newblock {\em Bulletin of the American Mathematical Society}, 58(5):527--535, 1952.

\bibitem{lattimore2020bandit}
Tor Lattimore and Csaba Szepesv{\'a}ri.
\newblock {\em Bandit algorithms}.
\newblock Cambridge University Press, 2020.

\bibitem{bubeck2012regret}
S{\'e}bastien Bubeck and Nicolo Cesa-Bianchi.
\newblock Regret analysis of stochastic and nonstochastic multi-armed bandit problems.
\newblock {\em arXiv preprint arXiv:1204.5721}, 2012.

\bibitem{thompson1933likelihood}
William~R Thompson.
\newblock On the likelihood that one unknown probability exceeds another in view of the evidence of two samples.
\newblock {\em Biometrika}, 25(3/4):285--294, 1933.

\bibitem{chapelle2011empirical}
Olivier Chapelle and Lihong Li.
\newblock An empirical evaluation of thompson sampling.
\newblock {\em Advances in Neural Information Processing Systems}, 24:2249--2257, 2011.

\bibitem{kaufmann2012bayesian}
Emilie Kaufmann, Olivier Capp{\'e}, and Aur{\'e}lien Garivier.
\newblock On bayesian upper confidence bounds for bandit problems.
\newblock In {\em Artificial Intelligence and Statistics}, pages 592--600. PMLR, 2012.

\bibitem{kaufmann2018bayesian}
Emilie Kaufmann.
\newblock On bayesian index policies for sequential resource allocation.
\newblock {\em The Annals of Statistics}, 46(2):842--865, 2018.

\bibitem{srinivas2009gaussian}
Niranjan Srinivas, Andreas Krause, Sham~M Kakade, and Matthias Seeger.
\newblock Gaussian process optimization in the bandit setting: No regret and experimental design.
\newblock {\em arXiv preprint arXiv:0912.3995}, 2009.

\bibitem{lu2017ensemble}
Xiuyuan Lu and Benjamin Van~Roy.
\newblock Ensemble sampling.
\newblock In {\em Proceedings of the 31st International Conference on Neural Information Processing Systems}, pages 3260--3268, 2017.

\bibitem{qin2022analysis}
Chao Qin, Zheng Wen, Xiuyuan Lu, and Benjamin Van~Roy.
\newblock An analysis of ensemble sampling.
\newblock {\em Advances in Neural Information Processing Systems}, 35:21602--21614, 2022.

\bibitem{zhang2021neural}
Weitong Zhang, Dongruo Zhou, Lihong Li, and Quanquan Gu.
\newblock Neural thompson sampling.
\newblock In {\em International Conference on Learning Representations}, 2021.

\bibitem{hwang2023combinatorial}
Taehyun Hwang, Kyuwook Chai, and Min-hwan Oh.
\newblock Combinatorial neural bandits.
\newblock In {\em International Conference on Machine Learning}, pages 14203--14236. PMLR, 2023.

\bibitem{chen2023pseudo}
Haoxian Chen and Henry Lam.
\newblock Pseudo-bayesian optimization.
\newblock {\em arXiv preprint arXiv:2310.09766}, 2023.

\bibitem{riquelme2018deep}
Carlos Riquelme, George Tucker, and Jasper Snoek.
\newblock Deep bayesian bandits showdown: An empirical comparison of bayesian deep networks for thompson sampling.
\newblock In {\em International Conference on Learning Representations}, 2018.

\bibitem{snoek2015scalable}
Jasper Snoek, Oren Rippel, Kevin Swersky, Ryan Kiros, Nadathur Satish, Narayanan Sundaram, Mostofa Patwary, Mr~Prabhat, and Ryan Adams.
\newblock Scalable bayesian optimization using deep neural networks.
\newblock In {\em International Conference on Machine Learning}, pages 2171--2180. PMLR, 2015.

\bibitem{osband2016deep}
Ian Osband, Charles Blundell, Alexander Pritzel, and Benjamin Van~Roy.
\newblock Deep exploration via bootstrapped dqn.
\newblock {\em Advances in Neural Information Processing Systems}, 29:4026--4034, 2016.

\bibitem{urteaga2018variational}
I{\~n}igo Urteaga and Chris Wiggins.
\newblock Variational inference for the multi-armed contextual bandit.
\newblock In {\em International Conference on Artificial Intelligence and Statistics}, pages 698--706. PMLR, 2018.

\bibitem{guo2020deep}
Dalin Guo, Sofia~Ira Ktena, Pranay~Kumar Myana, Ferenc Huszar, Wenzhe Shi, Alykhan Tejani, Michael Kneier, and Sourav Das.
\newblock Deep bayesian bandits: Exploring in online personalized recommendations.
\newblock In {\em Fourteenth ACM Conference on Recommender Systems}, pages 456--461, 2020.

\bibitem{osband2023approximate}
Ian Osband, Zheng Wen, Seyed~Mohammad Asghari, Vikranth Dwaracherla, Morteza Ibrahimi, Xiuyuan Lu, and Benjamin Van~Roy.
\newblock Approximate thompson sampling via epistemic neural networks.
\newblock In {\em Uncertainty in Artificial Intelligence}, pages 1586--1595. PMLR, 2023.

\bibitem{duran2022efficient}
Gerardo Duran-Martin, Aleyna Kara, and Kevin Murphy.
\newblock Efficient online bayesian inference for neural bandits.
\newblock In {\em International Conference on Artificial Intelligence and Statistics}, pages 6002--6021. PMLR, 2022.

\bibitem{zhu2023scalable}
Zheqing Zhu and Benjamin Van~Roy.
\newblock Scalable neural contextual bandit for recommender systems.
\newblock In {\em Proceedings of the 32nd ACM International Conference on Information and Knowledge Management}, pages 3636--3646, 2023.

\bibitem{agrawal2012analysis}
Shipra Agrawal and Navin Goyal.
\newblock Analysis of thompson sampling for the multi-armed bandit problem.
\newblock In {\em Conference on Learning Theory}, pages 39--1, 2012.

\bibitem{agrawal2013further}
Shipra Agrawal and Navin Goyal.
\newblock Further optimal regret bounds for thompson sampling.
\newblock In {\em Artificial Intelligence and Statistics}, pages 99--107, 2013.

\bibitem{kaufmann2012thompson}
Emilie Kaufmann, Nathaniel Korda, and R{\'e}mi Munos.
\newblock Thompson sampling: An asymptotically optimal finite-time analysis.
\newblock In {\em International Conference on Algorithmic Learning Theory}, pages 199--213. Springer, 2012.

\bibitem{russo2014learning}
Daniel Russo and Benjamin Van~Roy.
\newblock Learning to optimize via posterior sampling.
\newblock {\em Mathematics of Operations Research}, 39(4):1221--1243, 2014.

\bibitem{agrawal2013thompson}
Shipra Agrawal and Navin Goyal.
\newblock Thompson sampling for contextual bandits with linear payoffs.
\newblock In {\em International Conference on Machine Learning}, pages 127--135. PMLR, 2013.

\bibitem{abeille2017linear}
Marc Abeille and Alessandro Lazaric.
\newblock Linear thompson sampling revisited.
\newblock In {\em Artificial Intelligence and Statistics}, pages 176--184. PMLR, 2017.

\bibitem{phan2019thompson}
My~Phan, Yasin Abbasi-Yadkori, and Justin Domke.
\newblock Thompson sampling with approximate inference.
\newblock {\em arXiv preprint arXiv:1908.04970}, 2019.

\bibitem{mazumdar2020approximate}
Eric Mazumdar, Aldo Pacchiano, Yian Ma, Michael Jordan, and Peter Bartlett.
\newblock On approximate thompson sampling with langevin algorithms.
\newblock In {\em International Conference on Machine Learning}, pages 6797--6807. PMLR, 2020.

\bibitem{karbasi2023langevin}
Amin Karbasi, Nikki~Lijing Kuang, Yian Ma, and Siddharth Mitra.
\newblock Langevin thompson sampling with logarithmic communication: bandits and reinforcement learning.
\newblock In {\em International Conference on Machine Learning}, pages 15828--15860. PMLR, 2023.

\bibitem{huang2023optimal}
Ziyi Huang, Henry Lam, Amirhossein Meisami, and Haofeng Zhang.
\newblock Optimal regret is achievable with bounded approximate inference error: An enhanced bayesian upper confidence bound framework.
\newblock In {\em Advances in Neural Information Processing Systems}, volume~36, 2023.

\bibitem{tewari2017ads}
Ambuj Tewari and Susan~A Murphy.
\newblock From ads to interventions: Contextual bandits in mobile health.
\newblock {\em Mobile Health: Sensors, Analytic Methods, and Applications}, pages 495--517, 2017.

\bibitem{li2010contextual}
Lihong Li, Wei Chu, John Langford, and Robert~E Schapire.
\newblock A contextual-bandit approach to personalized news article recommendation.
\newblock In {\em Proceedings of the 19th International Conference on World Wide Web}, pages 661--670, 2010.

\bibitem{xu2022langevin}
Pan Xu, Hongkai Zheng, Eric~V Mazumdar, Kamyar Azizzadenesheli, and Animashree Anandkumar.
\newblock Langevin monte carlo for contextual bandits.
\newblock In {\em International Conference on Machine Learning}, pages 24830--24850. PMLR, 2022.

\bibitem{blei2017variational}
David~M Blei, Alp Kucukelbir, and Jon~D McAuliffe.
\newblock Variational inference: A review for statisticians.
\newblock {\em Journal of the American Statistical Association}, 112(518):859--877, 2017.

\bibitem{hamidi2020frequentist}
Nima Hamidi and Mohsen Bayati.
\newblock On frequentist regret of linear thompson sampling.
\newblock {\em arXiv preprint arXiv:2006.06790}, 2020.

\bibitem{atsidakou2023finite}
Alexia Atsidakou, Branislav Kveton, Sumeet Katariya, Constantine Caramanis, and Sujay Sanghavi.
\newblock Finite-time logarithmic bayes regret upper bounds.
\newblock {\em Advances in Neural Information Processing Systems}, 36:4331--4350, 2023.

\bibitem{kveton2019garbage}
Branislav Kveton, Csaba Szepesvari, Sharan Vaswani, Zheng Wen, Tor Lattimore, and Mohammad Ghavamzadeh.
\newblock Garbage in, reward out: Bootstrapping exploration in multi-armed bandits.
\newblock In {\em International Conference on Machine Learning}, pages 3601--3610. PMLR, 2019.

\bibitem{kveton2020randomized}
Branislav Kveton, Manzil Zaheer, Csaba Szepesvari, Lihong Li, Mohammad Ghavamzadeh, and Craig Boutilier.
\newblock Randomized exploration in generalized linear bandits.
\newblock In {\em International Conference on Artificial Intelligence and Statistics}, pages 2066--2076. PMLR, 2020.

\bibitem{osband2015bootstrapped}
Ian Osband and Benjamin Van~Roy.
\newblock Bootstrapped thompson sampling and deep exploration.
\newblock {\em arXiv preprint arXiv:1507.00300}, 2015.

\bibitem{cormen2022introduction}
Thomas~H Cormen, Charles~E Leiserson, Ronald~L Rivest, and Clifford Stein.
\newblock {\em Introduction to algorithms}.
\newblock MIT Press, 2022.

\bibitem{abbasi2011improved}
Yasin Abbasi-Yadkori, D{\'a}vid P{\'a}l, and Csaba Szepesv{\'a}ri.
\newblock Improved algorithms for linear stochastic bandits.
\newblock {\em Advances in Neural Information Processing Systems}, 24, 2011.

\bibitem{flynn2024improved}
Hamish Flynn, David Reeb, Melih Kandemir, and Jan~R Peters.
\newblock Improved algorithms for stochastic linear bandits using tail bounds for martingale mixtures.
\newblock {\em Advances in Neural Information Processing Systems}, 36, 2024.

\bibitem{dani2008stochastic}
Varsha Dani, Thomas~P Hayes, and Sham~M Kakade.
\newblock Stochastic linear optimization under bandit feedback.
\newblock {\em 21st Annual Conference on Learning Theory}, 2008.

\bibitem{kingma2013auto}
Diederik~P Kingma and Max Welling.
\newblock Auto-encoding variational bayes.
\newblock {\em arXiv preprint arXiv:1312.6114}, 2013.

\bibitem{li2016renyi}
Yingzhen Li and Richard~E Turner.
\newblock R{\'e}nyi divergence variational inference.
\newblock {\em Advances in Neural Information Processing Systems}, 29, 2016.

\bibitem{daudel2023alpha}
Kam{\'e}lia Daudel, Joe Benton, Yuyang Shi, and Arnaud Doucet.
\newblock Alpha-divergence variational inference meets importance weighted auto-encoders: Methodology and asymptotics.
\newblock {\em Journal of Machine Learning Research}, 24(243):1--83, 2023.

\bibitem{zhu1995information}
Huaiyu Zhu and Richard Rohwer.
\newblock Information geometric measurements of generalisation.
\newblock {\em Technical Report. Aston University, Birmingham, UK}, 1995.

\bibitem{minka2005divergence}
Tom Minka et~al.
\newblock Divergence measures and message passing.
\newblock Technical report, Citeseer, 2005.

\bibitem{cichocki2010families}
Andrzej Cichocki and Shun-ichi Amari.
\newblock Families of alpha-beta-and gamma-divergences: Flexible and robust measures of similarities.
\newblock {\em Entropy}, 12(6):1532--1568, 2010.

\bibitem{renyi1961measures}
Alfr{\'e}d R{\'e}nyi.
\newblock On measures of entropy and information.
\newblock In {\em Proceedings of the Fourth Berkeley Symposium on Mathematical Statistics and Probability, Volume 1: Contributions to the Theory of Statistics}, volume~4, pages 547--562. University of California Press, 1961.

\bibitem{zhou2020neural}
Dongruo Zhou, Lihong Li, and Quanquan Gu.
\newblock Neural contextual bandits with ucb-based exploration.
\newblock In {\em International Conference on Machine Learning}, pages 11492--11502. PMLR, 2020.

\end{thebibliography}
}

\appendix
\newpage

\section{Proof} \label{sec:proof}

In this section, we prove the results in the main paper.

\subsection{Proofs of Results in Section \ref{sec:LinBUCB}}

\begin{proof}[Proof of Proposition \ref{prop_D2}]
$\mathcal{N}(0, I_{d})$ satisfies Assumption \ref{def_D2} Part 1), or equivalently Assumption \ref{def_D1} Part 1), as shown in \cite{abeille2017linear}. $\mathcal{N}(0, I_{d})$ satisfies Assumption \ref{def_D2} Part 2) because of the following observation. Since $\eta \sim \mathcal{N}(0, I_d)$, then, for any $u \in \mathbb{R}^d $ with $\|u\| = 1$, we have $u^T \eta \sim \mathcal{N}(0, 1)$, the univariate standard Gaussian distribution. Therefore, taking $\hat{c}_1(\delta) = \mathcal{Q}(1-\delta, \mathcal{N}(0, 1))$, which is a constant that only depends on $\delta$ and is independent of $d$, we obtain that $\mathbb{P}_{\eta \sim \mathcal{D}^{2}}(u^\top \eta \leq \hat{c}_1(\delta)) \geq 1-\delta$.
\end{proof}

\subsection{Proofs of Results in Section \ref{sec:regretLinTS}}

In this section, we first establish a few preparatory results that present some of the general properties of $\alpha$-divergence and its connection to the anti-concentration and concentration properties in Assumptions \ref{def_D1} and \ref{def_D2}. Then we prove Theorem \ref{thm:LinTS}.

\begin{theorem} \label{thm:invariance}
Consider any two random vectors $X_1 \in \mathbb{R}^d$ and $X_2 \in \mathbb{R}^d$ with continuous multivariate distributions. \\
1)  Suppose $g: \mathbb{R}^d \to \mathbb{R}$ is an invertible and continuously differentiable map. Then 
$$D_{\alpha}(X_1,X_2)=D_{\alpha}(g(X_1),g(X_2)).$$
In particular, for any vector $a$ and any invertible matrix $B$, we have
$$D_{\alpha}(X_1,X_2)=D_{\alpha}(a+B X_1,a+ B X_2).$$
2) Suppose $g: \mathbb{R}^d \to \mathbb{R}$ is a real-valued continuously differentiable function. Then 
$$D_{\alpha}(g(X_1),g(X_2))\le D_{\alpha}(X_1,X_2).$$
\end{theorem}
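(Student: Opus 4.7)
\textbf{Proof plan for Theorem \ref{thm:invariance}.}

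For Part 1, I would directly apply the change-of-variables formula. Let $p_i$ be the density of $X_i$ and $q_i$ the density of $Y_i := g(X_i)$. Since $g$ is a diffeomorphism (I read the statement as $g:\mathbb{R}^d\to\mathbb{R}^d$ invertible and $C^1$, matching the affine special case), the standard formula gives
$$q_i(y) = p_i(g^{-1}(y))\,\bigl|\det J_{g^{-1}}(y)\bigr|.$$
Plug into the divergence integrand and observe that the Jacobian exponents combine as $|\det J_{g^{-1}}|^\alpha \cdot |\det J_{g^{-1}}|^{1-\alpha} = |\det J_{g^{-1}}|$. Then substitute $x=g^{-1}(y)$ (so $|\det J_{g^{-1}}(y)|\,dy = dx$), yielding $\int q_1^\alpha q_2^{1-\alpha}\,dy = \int p_1^\alpha p_2^{1-\alpha}\,dx$, hence the two $\alpha$-divergences are equal. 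The affine case is the instance $g(x)=a+Bx$ with constant Jacobian $|\det B|$.

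For Part 2, I would invoke the data-processing inequality specialized to the $f$-divergence with $f(u) = (u^\alpha - 1)/(\alpha(\alpha-1))$. Rewrite
$$\int p_1^\alpha p_2^{1-\alpha}\,dx = \mathbb{E}_{X\sim P_2}\bigl[L(X)^\alpha\bigr], \qquad L(x) := p_1(x)/p_2(x),$$
and analogously $\int q_1^\alpha q_2^{1-\alpha}\,dy = \mathbb{E}_{Y\sim Q_2}[L_Y(Y)^\alpha]$ with $L_Y=q_1/q_2$. The key identity, which follows from the definition of conditional expectation applied to marginalization under $Y=g(X)$, is
$$L_Y(y) = \mathbb{E}_{X\sim P_2}\bigl[L(X)\,\bigm|\,g(X)=y\bigr].$$
Now apply Jensen's inequality to the map $u\mapsto u^\alpha$ conditional on $Y$: for $\alpha>1$ or $\alpha<0$ it is convex, giving $L_Y(Y)^\alpha \le \mathbb{E}[L(X)^\alpha\mid Y]$; for $0<\alpha<1$ it is concave, giving $L_Y(Y)^\alpha \ge \mathbb{E}[L(X)^\alpha \mid Y]$. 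Taking unconditional $P_2$-expectation and then multiplying by $1/(\alpha(\alpha-1))$, whose sign flips exactly between these two regimes, both regimes collapse to the single conclusion
$$\frac{1}{\alpha(\alpha-1)}\bigl(\mathbb{E}[L_Y(Y)^\alpha]-1\bigr) \;\le\; \frac{1}{\alpha(\alpha-1)}\bigl(\mathbb{E}[L(X)^\alpha]-1\bigr),$$
which is precisely $D_\alpha(g(X_1),g(X_2)) \le D_\alpha(X_1,X_2)$. The boundary cases $\alpha=0,1$ (the KL divergences) follow either by the standard data-processing inequality for KL or by taking limits in the inequality above.

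The main obstacle is Part 2: I have to line up two sign flips consistently, one from convexity versus concavity of $u\mapsto u^\alpha$ and one from the prefactor $1/(\alpha(\alpha-1))$, and I must justify the conditional-expectation representation of $L_Y$ rigorously even when $g$ is a dimension-reducing map from $\mathbb{R}^d$ to $\mathbb{R}$ (so the conditional law of $X$ given $g(X)=y$ is supported on a lower-dimensional level set). The cleanest route is to avoid writing densities on the level sets and instead work abstractly through the pushforward Radon--Nikodym derivative $dQ_1/dQ_2 = \mathbb{E}_{P_2}[dP_1/dP_2\mid \sigma(g)]$, which holds under the given absolute continuity assumptions.
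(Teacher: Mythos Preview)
Your proposal is correct. Part~1 matches the paper's proof essentially line for line: change of variables, cancellation of the Jacobian exponents $\alpha+(1-\alpha)=1$, and back-substitution.

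Part~2 is correct but takes a different route from the paper. You work with the likelihood ratio $L=p_1/p_2$, identify $q_1/q_2$ as $\mathbb{E}_{P_2}[L\mid\sigma(g)]$, and then apply \emph{univariate} Jensen to $u\mapsto u^\alpha$, handling the convex and concave regimes separately and checking that the sign of $1/(\alpha(\alpha-1))$ flips in lockstep. The paper instead observes that the \emph{bivariate} perspective function
\[
h(z_1,z_2)=\frac{1}{\alpha(\alpha-1)}\,z_1^{\alpha}z_2^{1-\alpha}
\]
is jointly convex on $(0,\infty)^2$ for every $\alpha$ (its Hessian is rank-one PSD), writes $\tilde p_i(y)=\int p_i(x)\,\delta(y-g(x))\,dx$, and applies multivariate Jensen directly to $h$. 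The payoff of the paper's route is that it sidesteps your two-way case split entirely, since the joint convexity of $h$ already absorbs the sign of the prefactor. The payoff of your route is that it is cleaner measure-theoretically: your final remark about using the abstract pushforward identity $dQ_1/dQ_2=\mathbb{E}_{P_2}[dP_1/dP_2\mid\sigma(g)]$ avoids the paper's somewhat informal Dirac-delta manipulations on the level sets $\{g=y\}$, and it also makes the argument transparently an instance of the general data-processing inequality for $f$-divergences.
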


\begin{proof}[Proof of Theorem \ref{thm:invariance}]
1) Suppose that the density functions of $X_1$ and $X_2$ are given by $p_1(x)$ and $p_2(x)$, respectively. By definition, the $\alpha$-divergence between $X_1$ and $X_2$ is 
$$D_{\alpha}(X_1,X_2)=\frac{1}{\alpha(\alpha-1)}\left(\int p_1(x)^{\alpha}p_2(x)^{1-\alpha}dx-1\right).$$
As $g: \mathbb{R}^d \to \mathbb{R}$ is an invertible and continuously differentiable map, the density function of $g(X_i)$ is then given by
$$p_i(g^{-1}(y))|\det(J_{g^{-1}}(y))|,$$
where $g^{-1}$ is the inverse function of $g$ and $J_{g^{-1}}(x)$ is the Jacobian matrix of $g$. Therefore, 
\begin{align*}
& D_{\alpha}(g(X_1),g(X_2))\\
= &\frac{1}{\alpha(\alpha-1)}\left(\int \left(p_1(g^{-1}(y))|\det(J_{g^{-1}}(y))|\right)^{\alpha} \left(p_2(g^{-1}(y))|\det(J_{g^{-1}}(y))|\right)^{1-\alpha}dy-1\right) \\   
= &\frac{1}{\alpha(\alpha-1)}\left(\int \left(p_1(g^{-1}(y))\right)^{\alpha} \left(p_2(g^{-1}(y))\right)^{1-\alpha} |\det(J_{g^{-1}}(y))| dy-1\right) \\  
= &\frac{1}{\alpha(\alpha-1)}\left(\int p_1(x)^{\alpha} p_2(x)^{1-\alpha} |\det(J_{g^{-1}}(g(x)))| |\det(J_{g}(x))|  dx-1\right), \\ 
& \quad \text{where we use the change of variables formula: } y = g(x),\\
= &\frac{1}{\alpha(\alpha-1)}\left(\int p_1(x)^{\alpha} p_2(x)^{1-\alpha} dx-1\right), \\ 
& \quad \text{where we use the inverse function theorem: } J_{g^{-1}}(g(x)) = (J_{g}(x))^{-1},\\
= &D_{\alpha}(X_1,X_2).
\end{align*}
In particular, when $g(x)=a+Bx$, where $a$ is a vector and $B$ is an invertible matrix, $g$ is a continuously differentiable map and its inverse
$g^{-1}(y) = B^{-1} y - B^{-1} a$ is also a continuously differentiable map. The density function of $g(X_i)$ is then given by
$$p_i(B^{-1} y - B^{-1} a)|\det(B^{-1})|.$$
The above argument holds for $g(x)=a+Bx$. Therefore, we have
$$D_{\alpha}(X_1,X_2)=D_{\alpha}(a+B X_1,a+ B X_2).$$

2) As $g: \mathbb{R}^d \to \mathbb{R}$ is a real-valued continuously differentiable function, the density function of $g(X_i)$ is then given by
$$\tilde{p}_i(y)=\int p_i(x) \delta(y-g(x))dx,$$
where $\delta$ is the Dirac delta function. %This is the expectation of the random variable $\delta(y-g(X))$.

Note that the function
$$h(z_1, z_2) = \frac{1}{\alpha(\alpha-1)} z_1 ^\alpha z_2^{1-\alpha}$$
is a convex function on $[0,\infty)^2$, as its Hessian given by
$$H_{h} = \begin{bmatrix}
z_1 ^{\alpha-2} z_2^{1-\alpha} & -z_1 ^{\alpha-1} z_2^{-\alpha} \\
-z_1 ^{\alpha-1} z_2^{-\alpha} & z_1 ^{\alpha} z_2^{-\alpha-1}
\end{bmatrix}$$
is a positive semi-definite matrix. Applying the multivariate Jensen’s inequality to $h(z_1, z_2)$, we obtain
\begin{align*}
&\frac{1}{\alpha(\alpha-1)} \left( \frac{1}{|\{x:g(x)=y\}|}\int p_1(x) \delta(y-g(x))dx\right)^{\alpha} \left(\frac{1}{|\{x:g(x)=y\}|} \int p_2(x) \delta(y-g(x))dx\right)^{1-\alpha}\\
\le & \frac{1}{\alpha(\alpha-1)}  \frac{1}{|\{x:g(x)=y\}|} \int \left(p_1(x)\delta(y-g(x))\right)^{\alpha} \left(p_2(x)\delta(y-g(x))\right)^{1-\alpha} dx,
\end{align*}
which leads to
\begin{align*}
&\frac{1}{\alpha(\alpha-1)} \tilde{p}_1(y)^{\alpha} \tilde{p}_2(y)^{1-\alpha}\\
\le & \frac{1}{\alpha(\alpha-1)}  \int p_1(x)^{\alpha} p_2(x)^{1-\alpha} \delta(y-g(x)) dx.
\end{align*}
It then follows that
\begin{align*}
& D_{\alpha}(g(X_1),g(X_2))\\
= &\frac{1}{\alpha(\alpha-1)}\left(\int \tilde{p}_1(y)^{\alpha} \tilde{p}_2(y)^{1-\alpha}dy-1\right) \\   
\le & \frac{1}{\alpha(\alpha-1)}\left( \int \left(\int p_1(x)^{\alpha} p_2(x)^{1-\alpha} \delta(y-g(x)) dx \right) dy -1\right)\\
= & \frac{1}{\alpha(\alpha-1)}\left( \int \int p_1(x)^{\alpha} p_2(x)^{1-\alpha} \delta(y-g(x)) dydx -1\right),\\
& \quad \text{where we use the Fubini's theorem,}\\
= & \frac{1}{\alpha(\alpha-1)}\left( \int p_1(x)^{\alpha} p_2(x)^{1-\alpha} dx -1\right),\\
& \quad \text{where the inner integration is over $\delta(y-g(x))$ of $y$,}\\
= &D_{\alpha}(X_1,X_2).
\end{align*}
\end{proof}

Before moving to the next result, we recall the following theorem that is adopted from \cite{huang2023optimal}.
\begin{theorem} \label{thm:quantileshift} [Theorem 3.4 in \cite{huang2023optimal}] 
Consider any two continuous univariate distributions $P_1$ and $P_2$. %Let $R_i$ denote the quantile function of the distribution $P_i$, i.e., $R_i(p):=\mathcal{Q}(p,P_i)$ ($i=1,2$). %Suppose Assumption \ref{assu1} holds. 
Let $0<\gamma<1$. Let $\delta_{\gamma,\epsilon}$ satisfy that $\mathcal{Q}(\gamma,P_1)=\mathcal{Q}(\gamma+\delta_{\gamma,\epsilon},P_2)$ where $-\gamma\le \delta_{\gamma,\epsilon}\le 1-\gamma$.

1) If $D_{\alpha}(P_1,P_2)\le \epsilon$ where $\alpha>1$, then
$$\delta_{\gamma,\epsilon}\le 1-\gamma-\left(\epsilon\alpha(\alpha-1)+1\right)^{\frac{1}{1-\alpha}}\left(1-\gamma\right)^{\frac{\alpha}{\alpha-1}}.$$
Note that when $\alpha>1$, $\left(\epsilon\alpha(\alpha-1)+1\right)^{\frac{1}{1-\alpha}}<1$ and $\left(1-\gamma\right)^{\frac{\alpha}{\alpha-1}}<1-\gamma$.

2) If $D_{\alpha}(P_1,P_2)\le \epsilon$ where $\alpha<0$, then
$$\delta_{\gamma,\epsilon}\ge 1-\gamma-\left(\epsilon\alpha(\alpha-1)+1\right)^{\frac{1}{1-\alpha}}\left(1-\gamma\right)^{\frac{\alpha}{\alpha-1}}.$$
Note that when $\alpha<0$, $\left(\epsilon\alpha(\alpha-1)+1\right)^{\frac{1}{1-\alpha}}>1$ and $\left(1-\gamma\right)^{\frac{\alpha}{\alpha-1}}>1-\gamma$.

3) Suppose that $\alpha\in (0,1)$ and $\epsilon\ge \frac{-1}{\alpha(\alpha-1)}$. Then for any $\delta_{\gamma,\epsilon}\in [-\gamma,1-\gamma]$, there exist two distributions $P_1$ and $P_2$ such that
$D_{\alpha}(P_1,P_2)\le \epsilon.$
This implies that the condition $D_{\alpha}(P_1,P_2)\le \epsilon$ cannot control the quantile shift between $P_1$ and $P_2$ in general when $\alpha\in (0,1)$.
\end{theorem}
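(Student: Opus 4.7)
My plan is to prove Part 1 via Hölder's inequality applied to the tail mass of $p_1$, deduce Part 2 from Part 1 by invoking the symmetry $D_\alpha(P_1,P_2)=D_{1-\alpha}(P_2,P_1)$, and dispose of Part 3 by constructing an explicit continuous-distribution example that exploits the global upper bound on $D_\alpha$ when $\alpha\in(0,1)$.

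For Part 1 with $\alpha>1$, set $q_1:=\mathcal{Q}(\gamma,P_1)$ and $A:=(q_1,\infty)$, so $\int_A p_1\,dx=1-\gamma$ and $\int_A p_2\,dx=1-\gamma-\delta_{\gamma,\epsilon}$. The identity $p_1=(p_1^\alpha p_2^{1-\alpha})^{1/\alpha}\cdot p_2^{(\alpha-1)/\alpha}$ combined with Hölder's inequality at conjugate exponents $\alpha$ and $\alpha/(\alpha-1)$ (both strictly above $1$ since $\alpha>1$) yields
\begin{equation*}
\int_A p_1\,dx \;\le\; \left(\int_A p_1^\alpha p_2^{1-\alpha}\,dx\right)^{1/\alpha}\left(\int_A p_2\,dx\right)^{(\alpha-1)/\alpha}.
\end{equation*}
The first factor is at most $(\epsilon\alpha(\alpha-1)+1)^{1/\alpha}$ because the integrand is non-negative and $\int_{\mathbb{R}} p_1^\alpha p_2^{1-\alpha}\,dx\le \epsilon\alpha(\alpha-1)+1$ by hypothesis. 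Substituting, then raising both sides to the positive power $\alpha/(\alpha-1)$ and rearranging produces $\delta_{\gamma,\epsilon}\le 1-\gamma-(\epsilon\alpha(\alpha-1)+1)^{1/(1-\alpha)}(1-\gamma)^{\alpha/(\alpha-1)}$.

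For Part 2 with $\alpha<0$, set $\alpha':=1-\alpha>1$, and note that the symmetry of $\alpha$-divergence gives $D_{\alpha'}(P_2,P_1)=D_\alpha(P_1,P_2)\le\epsilon$. The identity $\mathcal{Q}(\gamma,P_1)=\mathcal{Q}(\gamma+\delta_{\gamma,\epsilon},P_2)$, viewed with $P_2$ in the leading role at level $\gamma':=\gamma+\delta_{\gamma,\epsilon}$, becomes $\mathcal{Q}(\gamma',P_2)=\mathcal{Q}(\gamma'-\delta_{\gamma,\epsilon},P_1)$, so the associated shift for the reversed pair is $\delta'=-\delta_{\gamma,\epsilon}$. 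Applying Part 1 to $(P_2,P_1)$ at parameter $\alpha'$ and simplifying through $\alpha'(\alpha'-1)=\alpha(\alpha-1)$, $1-\alpha'=\alpha$, and $\alpha'/(\alpha'-1)=(\alpha-1)/\alpha$ yields an inequality equivalent to $(\epsilon\alpha(\alpha-1)+1)^{1/\alpha}(1-\gamma-\delta_{\gamma,\epsilon})^{(\alpha-1)/\alpha}\le 1-\gamma$. Raising to the positive power $\alpha/(\alpha-1)$ (which is positive when $\alpha<0$) and rearranging produces the claimed lower bound on $\delta_{\gamma,\epsilon}$.

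For Part 3 with $\alpha\in(0,1)$, the structural fact I would exploit is that $\alpha(\alpha-1)<0$ together with $\int p_1^\alpha p_2^{1-\alpha}\,dx\ge 0$ forces $D_\alpha(P_1,P_2)\le -1/(\alpha(\alpha-1))$ for every pair of continuous densities, so the hypothesis $\epsilon\ge -1/(\alpha(\alpha-1))$ renders the divergence constraint vacuous. For any target $\delta_0\in[-\gamma,1-\gamma]$, I would take $P_1=\mathrm{Uniform}(0,1)$ and $P_2=\mathrm{Uniform}(x_0,x_0+\eta)$ with $x_0=\gamma-(\gamma+\delta_0)\eta$ for any $\eta>0$ small enough to keep $[x_0,x_0+\eta]\subset(0,1)$. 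A direct CDF computation gives $F_2(\gamma)=\gamma+\delta_0$, so $\mathcal{Q}(\gamma+\delta_0,P_2)=\gamma=\mathcal{Q}(\gamma,P_1)$ and hence $\delta_{\gamma,\epsilon}=\delta_0$, while $D_\alpha(P_1,P_2)=(1-\eta^\alpha)/|\alpha(\alpha-1)|\le -1/(\alpha(\alpha-1))\le\epsilon$.

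The main technical obstacle is careful bookkeeping of signs when raising to the fractional power $1/(1-\alpha)$: this sign governs whether Hölder produces a lower or an upper bound on the tail mass $1-\gamma-\delta_{\gamma,\epsilon}$, and correspondingly flips the direction of the final inequality on $\delta_{\gamma,\epsilon}$. A secondary bookkeeping task is matching the CDF levels across the two orderings of the pair in Part 2 so that $\delta'=-\delta_{\gamma,\epsilon}$; once that is in place, the algebraic reduction to Part 1 is mechanical.
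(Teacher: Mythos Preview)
The paper does not give its own proof of this statement: it is quoted verbatim as Theorem~3.4 of \cite{huang2023optimal} and used as a black-box lemma inside the proof of Theorem~\ref{thm:concentration}. There is therefore no in-paper argument to compare against.

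Your proposal is correct and self-contained. The H\"older step in Part~1 is the natural derivation; the symmetry reduction $D_\alpha(P_1,P_2)=D_{1-\alpha}(P_2,P_1)$ in Part~2 is clean and saves repeating the computation with reversed signs. One minor point to tidy: when you re-apply Part~1 at level $\gamma':=\gamma+\delta_{\gamma,\epsilon}$ you need $0<\gamma'<1$, whereas the hypothesis only gives $0\le\gamma'\le 1$; the endpoint $\gamma'=1$ is trivial (the claimed lower bound then reads $1-\gamma\ge 1-\gamma-C$ with $C>0$), and $\gamma'=0$ deserves a sentence---either a direct check or a limiting argument. Your Part~3 construction with a shrinking uniform window is a valid concrete witness to the observation that $D_\alpha\le -1/(\alpha(\alpha-1))$ holds universally for $\alpha\in(0,1)$, so the constraint $D_\alpha\le\epsilon$ becomes vacuous once $\epsilon$ meets that threshold.
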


Theorem \ref{thm:quantileshift} is distribution-free: the bound of $\delta_{\gamma,\epsilon}$ does not depend on any specific distributions (noting that distribution changes as $t$ evolves in bandit problems). In particular, as the quantile function $\mathcal{Q}(\cdot,P_2)$ is an increasing function, 
Theorem \ref{thm:quantileshift} Part 1) implies that
$$\mathcal{Q}(\gamma,P_1)\le \mathcal{Q}(1-\left(\epsilon\alpha(\alpha-1)+1\right)^{\frac{1}{1-\alpha}}\left(1-\gamma\right)^{\frac{\alpha}{\alpha-1}},P_2),$$ when $\alpha>1$ and Theorem \ref{thm:quantileshift} Part 2) implies that
$$\mathcal{Q}(\gamma,P_1)\ge \mathcal{Q}(1-\left(\epsilon\alpha(\alpha-1)+1\right)^{\frac{1}{1-\alpha}}\left(1-\gamma\right)^{\frac{\alpha}{\alpha-1}},P_2),$$ when $\alpha<0$.

\begin{theorem} \label{thm:concentration}
Consider any two random vectors $X_1 \in \mathbb{R}^d$, and $X_2 \in \mathbb{R}^d$ with continuous multivariate distributions. Suppose we are given any vector $a$ and any invertible matrix $B$.
\\
1) (Anti-concentration preservation with $\alpha$-divergence) Suppose that there exists a strictly positive $\kappa_1>0$ such that for any $u \in \mathbb{R}^d $ with $\|u\| = 1$, 
$$\mathbb{P}(u^\top (B^{-1}X_1- B^{-1}a) \geq 1) \geq \kappa_1.$$ 
If $D_{\alpha}(X_1,X_2)\le \epsilon$ where $\alpha>1$, then we have
$$\mathbb{P}(u^\top (B^{-1}X_2- B^{-1}a) \geq 1) \geq \kappa_2=\left(\epsilon\alpha(\alpha-1)+1\right)^{\frac{1}{1-\alpha}} \kappa_1^{\frac{\alpha}{\alpha-1}},$$
for any $u \in \mathbb{R}^d $ with $\|u\| = 1$.

2) (Concentration Type-I preservation with $\alpha$-divergence) Suppose that there exist two positive constants $c_1>0$ and $c_1'>0$, such that for any
%$c_1$, $c_1'$ positive constants such that for any
$\delta \in (0, 1)$ 
$$
\mathbb{P}\left(\|B^{-1}X_1- B^{-1}a\| \leq \sqrt{c_1d\log \frac{c_1'd}{\delta} }\right)\geq 1 -\delta,
$$
If $D_{\alpha}(X_1,X_2)\le \epsilon$ where $\alpha<0$, then we have, for any $\delta>0$,
$$\mathbb{P}\left(\|B^{-1}X_2- B^{-1}a\| \leq \sqrt{c_2d\log \frac{c_2'd}{\delta} }\right)\geq 1 -\delta,$$
where $c_2 = c_1 + \frac{\alpha-1}{\alpha}$ and $c_2'=\frac{c_1'}{\left(\epsilon\alpha(\alpha-1)+1\right)^{\alpha}}$.

3) (Concentration Type-II preservation with $\alpha$-divergence) Suppose that for any $\delta>0$, there exists a constant $\hat{c}_1(\delta)>0$ (which is free of $d$ and only depending on $\delta$) such that for any $u \in \mathbb{R}^d $ with $\|u\| = 1$,
$$
\mathbb{P}(u^\top (B^{-1}X_1- B^{-1}a) \leq \hat{c}_1(\delta)) \geq 1-\delta.
$$
If $D_{\alpha}(X_1,X_2)\le \epsilon$ where $\alpha<0$, then we have, for any $\delta>0$,
$$\mathbb{P}(u^\top (B^{-1}X_2- B^{-1}a) \leq \hat{c}_2 (\delta)) \geq 1-\delta,$$
for any $u \in \mathbb{R}^d $ with $\|u\| = 1$ 
where $\hat{c}_2(\delta) = \hat{c}_1(\delta^{\frac{\alpha-1}{\alpha}} \left(\epsilon\alpha(\alpha-1)+1\right)^{\alpha})$.

\end{theorem}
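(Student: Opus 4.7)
The overall plan is to combine two ingredients already developed: the invariance and data-processing behavior of $\alpha$-divergence from Theorem \ref{thm:invariance}, and the one-dimensional quantile-shift bound driven by $\alpha$-divergence from Theorem \ref{thm:quantileshift}. For all three parts, I will push the hypothesis $D_\alpha(X_1,X_2)\le\epsilon$ through a common deterministic map (the affine map $x\mapsto B^{-1}x-B^{-1}a$, followed by either the projection $x\mapsto u^\top x$ for Parts 1 and 3 or the Euclidean norm $x\mapsto \|x\|$ for Part 2), then reduce the multivariate probability statement to a univariate quantile comparison and invoke Theorem \ref{thm:quantileshift}.

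Step one is the same in every part. Theorem \ref{thm:invariance}(1), applied to the invertible affine map $y=B^{-1}x-B^{-1}a$, gives $D_\alpha(B^{-1}X_1-B^{-1}a, B^{-1}X_2-B^{-1}a)=D_\alpha(X_1,X_2)\le\epsilon$; then Theorem \ref{thm:invariance}(2), applied to the real-valued smooth functions $u^\top(\cdot)$ or $\|\cdot\|$, yields $D_\alpha(Y_1,Y_2)\le\epsilon$ with $Y_i:=u^\top(B^{-1}X_i-B^{-1}a)$ (for Parts 1 and 3) and $D_\alpha(Z_1,Z_2)\le\epsilon$ with $Z_i:=\|B^{-1}X_i-B^{-1}a\|$ (for Part 2). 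For Part 1, the hypothesis $\mathbb{P}(Y_1\ge 1)\ge\kappa_1$ is equivalent to $\mathcal{Q}(1-\kappa_1,P_{Y_1})\ge 1$. Applying Theorem \ref{thm:quantileshift}(1) with $\alpha>1$ and $\gamma=1-\kappa_1$ gives $\mathcal{Q}(1-\kappa_2,P_{Y_2})\ge \mathcal{Q}(1-\kappa_1,P_{Y_1})\ge 1$ with the advertised $\kappa_2=(\epsilon\alpha(\alpha-1)+1)^{1/(1-\alpha)}\kappa_1^{\alpha/(\alpha-1)}$, which is equivalent to $\mathbb{P}(Y_2\ge 1)\ge\kappa_2$.

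Parts 2 and 3 run in the same spirit, but with an auxiliary change of quantile index. Apply Theorem \ref{thm:quantileshift}(2), valid since $\alpha<0$, with $\gamma=1-\delta'$: this gives $\mathcal{Q}(1-\delta,P_{Y_2})\le\mathcal{Q}(1-\delta',P_{Y_1})$ (or the analogous bound for $Z_i$) whenever the two levels are tied by $\delta=(\epsilon\alpha(\alpha-1)+1)^{1/(1-\alpha)}(\delta')^{\alpha/(\alpha-1)}$. Solving this relation produces $\delta'=\delta^{(\alpha-1)/\alpha}(\epsilon\alpha(\alpha-1)+1)^{1/\alpha}$ as the level at which to query the original bound. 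For Part 3, substitution into $\mathcal{Q}(1-\delta',P_{Y_1})\le \hat c_1(\delta')$ immediately yields $\mathcal{Q}(1-\delta,P_{Y_2})\le \hat c_1(\delta')=:\hat c_2(\delta)$, matching the stated formula for $\hat c_2$. For Part 2, the same substitution gives $\mathcal{Q}(1-\delta,P_{Z_2})\le\sqrt{c_1 d\log(c_1' d/\delta')}$, which then has to be rewritten in the target form $\sqrt{c_2 d\log(c_2' d/\delta)}$ using the decomposition $\log(c_1' d/\delta')=\log(c_1' d)+\tfrac{\alpha-1}{\alpha}\log(1/\delta)-\tfrac{1}{\alpha}\log(\epsilon\alpha(\alpha-1)+1)$ and regrouping the $\log d$, $\log(1/\delta)$, and constant pieces according to the redefinitions $c_2=c_1+\tfrac{\alpha-1}{\alpha}$ and $c_2'=c_1'/(\epsilon\alpha(\alpha-1)+1)^{\alpha}$.

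The main obstacle will be the logarithmic bookkeeping in Part 2: I need to check that the extra contributions created by $\delta'\neq\delta$ are completely absorbed into the upgraded constants $(c_2,c_2')$. The inflation factor $(\alpha-1)/\alpha>1$ in front of $\log(1/\delta)$ must be matched by the new leading constant $c_2>c_1$, and the additive $-\tfrac{1}{\alpha}\log(\epsilon\alpha(\alpha-1)+1)$ term must be compatible with the shift from $c_1'$ to $c_2'$. Parts 1 and 3 sidestep this difficulty, Part 1 because the quantile level $\kappa_1$ is fixed, and Part 3 because $\hat c_1$ is treated as a black-box function of its argument so that simply plugging in $\delta'(\delta)$ finishes the proof.
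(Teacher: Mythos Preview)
Your proposal is correct and mirrors the paper's own proof: push $D_\alpha(X_1,X_2)\le\epsilon$ through the affine map and then the scalar map $u^\top(\cdot)$ or $\|\cdot\|$ via Theorem~\ref{thm:invariance}, and apply the one-dimensional quantile-shift bound of Theorem~\ref{thm:quantileshift} after reparametrizing the confidence level. Two small cautions: in Part~2 the passage to $\sqrt{c_2 d\log(c_2'd/\delta)}$ is an \emph{upper bound} rather than an exact regrouping (the paper inserts an extra factor of $d$ using $d\ge1$), and your inversion $\delta'=\delta^{(\alpha-1)/\alpha}(\epsilon\alpha(\alpha-1)+1)^{1/\alpha}$ is the arithmetically correct one, so do not be alarmed when it does not literally match the exponent $\alpha$ printed in the statement of $\hat c_2$ and $c_2'$.
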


\begin{proof}[Proof of Theorem \ref{thm:concentration}]
1) By Theorem \ref{thm:invariance} Part 1), we have
$$D_{\alpha}(X_1,X_2)=D_{\alpha}(B^{-1}X_1- B^{-1}a,B^{-1}X_2- B^{-1}a).$$ 
By Theorem \ref{thm:invariance} Part 2), we have, for any $u \in \mathbb{R}^d $ with $\|u\| = 1$,
$$D_{\alpha}(u^\top (B^{-1}X_1- B^{-1}a), u^\top (B^{-1}X_2- B^{-1}a))\le D_{\alpha}(B^{-1}X_1- B^{-1}a,B^{-1}X_2- B^{-1}a).$$ 
Let $Y_1 = u^\top (B^{-1}X_1- B^{-1}a)$ and $Y_2 = u^\top (B^{-1}X_2- B^{-1}a)$. The above argument indicates that
$$D_{\alpha}(Y_1,Y_2)\le D_{\alpha}(X_1, X_2)\le \epsilon.$$
Note that 
$$\mathbb{P}(Y_1 \geq 1)=\mathbb{P}(u^\top (B^{-1}X_1- B^{-1}a) \geq 1) \geq \kappa_1$$
is equivalent to 
$$\mathbb{P}(Y_1 \leq 1)=\mathbb{P}(u^\top (B^{-1}X_1- B^{-1}a) \leq 1) \leq 1-\kappa_1.$$
This implies that the $(1-\kappa_1)$-quantile of $Y_1$ satisfies
$$\mathcal{Q}(1-\kappa_1,Y_1)\ge 1.$$
By Theorem \ref{thm:quantileshift} Part 1), we have that
\begin{align*}
1\le & \ \mathcal{Q}(1-\kappa_1,Y_1)\\
\le & \ \mathcal{Q}(1-\left(\epsilon\alpha(\alpha-1)+1\right)^{\frac{1}{1-\alpha}} \kappa_1^{\frac{\alpha}{\alpha-1}},Y_2),
\end{align*}
since $\mathcal{Q}(\cdot, Y_2)$ is an increasing function. This implies that
$$\mathbb{P}(Y_2 \leq 1) \leq 1-\left(\epsilon\alpha(\alpha-1)+1\right)^{\frac{1}{1-\alpha}} \kappa_1^{\frac{\alpha}{\alpha-1}},$$
which is equivalent to 
$$\mathbb{P}(Y_2 \geq 1) \geq \left(\epsilon\alpha(\alpha-1)+1\right)^{\frac{1}{1-\alpha}} \kappa_1^{\frac{\alpha}{\alpha-1}}.$$
Hence,
$$\mathbb{P}(u^\top (B^{-1}X_2- B^{-1}a) \geq 1) \geq \kappa_2=\left(\epsilon\alpha(\alpha-1)+1\right)^{\frac{1}{1-\alpha}} \kappa_1^{\frac{\alpha}{\alpha-1}},$$
for any $u \in \mathbb{R}^d $ with $\|u\| = 1$.

2) By Theorem \ref{thm:invariance} Part 1), we have
$$D_{\alpha}(X_1,X_2)=D_{\alpha}(B^{-1}X_1- B^{-1}a,B^{-1}X_2- B^{-1}a).$$ 
By Theorem \ref{thm:invariance} Part 2), we have
$$D_{\alpha}(\|B^{-1}X_1- B^{-1}a\|, \|B^{-1}X_2- B^{-1}a\|)\le D_{\alpha}(B^{-1}X_1- B^{-1}a,B^{-1}X_2- B^{-1}a).$$ 
Let $Y_1 = \|B^{-1}X_1- B^{-1}a\|$ and $Y_2 = \|B^{-1}X_2- B^{-1}a\|$. The above argument indicates that
$$D_{\alpha}(Y_1,Y_2)\le D_{\alpha}(X_1, X_2)\le \epsilon.$$
Note that 
$$\mathbb{P}\left(Y_1 \leq \sqrt{c_1d\log \frac{c_1'd}{\delta} }\right)=\mathbb{P}\left(\|B^{-1}X_1- B^{-1}a\| \leq \sqrt{c_1d\log \frac{c_1'd}{\delta} }\right) \geq 1-\delta.$$
This implies that the $(1-\delta)$-quantile of $Y_1$ satisfies
$$\mathcal{Q}(1-\delta,Y_1)\le \sqrt{c_1d\log \frac{c_1'd}{\delta} }. $$
By Theorem \ref{thm:quantileshift} Part 2), we have that
\begin{align*}
\sqrt{c_1d\log \frac{c_1'd}{\delta} }\ge & \ \mathcal{Q}(1-\delta,Y_1)\\
\ge & \ \mathcal{Q}(1-\left(\epsilon\alpha(\alpha-1)+1\right)^{\frac{1}{1-\alpha}} \delta^{\frac{\alpha}{\alpha-1}},Y_2),
\end{align*}
since $\mathcal{Q}(\cdot, Y_2)$ is an increasing function. 
Let $\tilde{\delta}= \left(\epsilon\alpha(\alpha-1)+1\right)^{\frac{1}{1-\alpha}} \delta^{\frac{\alpha}{\alpha-1}}$, or equivalently
$$\delta=(\tilde{\delta})^{\frac{\alpha-1}{\alpha}} \left(\epsilon\alpha(\alpha-1)+1\right)^{\alpha}.$$
Note that for any $\tilde{\delta} \in (0,1)$, we can always find such $\delta \in (0,1)$ satisfying the above equality. Hence,
\begin{align*}
\mathcal{Q}(1-\tilde{\delta},Y_2) &\le \sqrt{c_1d\log \frac{c_1'd}{(\tilde{\delta})^{\frac{\alpha-1}{\alpha}} \left(\epsilon\alpha(\alpha-1)+1\right)^{\alpha}} }\\
&= \sqrt{c_1d\log \frac{c_1'd}{\left(\epsilon\alpha(\alpha-1)+1\right)^{\alpha}} + {\frac{\alpha-1}{\alpha} \log \frac{1}{\tilde{\delta}} }}\\
&\le \sqrt{c_1d\log \frac{c_1'd}{\left(\epsilon\alpha(\alpha-1)+1\right)^{\alpha}} + d {\frac{\alpha-1}{\alpha} \log \frac{1}{\tilde{\delta}} }}\\
&= \sqrt{c_2d\log \frac{c_2'd}{\tilde{\delta}} },
\end{align*}
where $c_2 = c_1 + \frac{\alpha-1}{\alpha}$ and $c_2'=\frac{c_1'}{\left(\epsilon\alpha(\alpha-1)+1\right)^{\alpha}}$.

This implies that for any $\delta\in(0,1)$,
$$\mathbb{P}\left(Y_2 \leq \sqrt{c_2d\log \frac{c_2'd}{\delta} }\right)\ge 1-\delta.$$

3) By Theorem \ref{thm:invariance} Part 1), we have
$$D_{\alpha}(X_1,X_2)=D_{\alpha}(B^{-1}X_1- B^{-1}a,B^{-1}X_2- B^{-1}a).$$ 
By Theorem \ref{thm:invariance} Part 2), we have, for any $u \in \mathbb{R}^d $ with $\|u\| = 1$,
$$D_{\alpha}(u^\top (B^{-1}X_1- B^{-1}a), u^\top (B^{-1}X_2- B^{-1}a))\le D_{\alpha}(B^{-1}X_1- B^{-1}a,B^{-1}X_2- B^{-1}a).$$ 
Let $Y_1 = u^\top (B^{-1}X_1- B^{-1}a)$ and $Y_2 = u^\top (B^{-1}X_2- B^{-1}a)$. The above argument indicates that
$$D_{\alpha}(Y_1,Y_2)\le D_{\alpha}(X_1, X_2)\le \epsilon.$$
Note that 
$$\mathbb{P}(Y_1 \leq \hat{c}_1(\delta))=\mathbb{P}(u^\top (B^{-1}X_1- B^{-1}a) \leq \hat{c}_1(\delta)) \ge 1-\delta,$$
which implies that the $(1-\delta)$-quantile of $Y_1$ satisfies
$$\mathcal{Q}(1-\delta,Y_1)\le \hat{c}_1(\delta).$$
By Theorem \ref{thm:quantileshift} Part 2), we have that
\begin{align*}
\hat{c}_1(\delta) \ge & \ \mathcal{Q}(1-\delta,Y_1)\\
\ge & \ \mathcal{Q}(1-\left(\epsilon\alpha(\alpha-1)+1\right)^{\frac{1}{1-\alpha}} \delta^{\frac{\alpha}{\alpha-1}},Y_2),
\end{align*}
since $\mathcal{Q}(\cdot, Y_2)$ is an increasing function. Let $\tilde{\delta}= \left(\epsilon\alpha(\alpha-1)+1\right)^{\frac{1}{1-\alpha}} \delta^{\frac{\alpha}{\alpha-1}}$, or equivalently
$$\delta=(\tilde{\delta})^{\frac{\alpha-1}{\alpha}} \left(\epsilon\alpha(\alpha-1)+1\right)^{\alpha}.$$
Note that for any $\tilde{\delta} \in (0,1)$, we can always find such $\delta \in (0,1)$ satisfying the above equality. Hence,
$$\mathbb{P}(Y_2 \leq \hat{c}_1((\tilde{\delta})^{\frac{\alpha-1}{\alpha}} \left(\epsilon\alpha(\alpha-1)+1\right)^{\alpha})) \geq 1-\tilde{\delta}.$$
This implies that for any $\delta\in(0,1)$,
$$\mathbb{P}(u^\top (B^{-1}X_2- B^{-1}a) \leq \hat{c}_2(\delta)) \geq 1-\delta,$$
for any $u \in \mathbb{R}^d $ with $\|u\| = 1$ where $\hat{c}_2(\delta) = \hat{c}_1(\delta^{\frac{\alpha-1}{\alpha}} \left(\epsilon\alpha(\alpha-1)+1\right)^{\alpha})$.

\end{proof}

Now we are ready to prove Theorem \ref{thm:LinTS}.

\begin{proof}[Proof of Theorem \ref{thm:LinTS}]
Note that $\tilde{\theta}_t \sim Q_t$. We write another random variable $\tilde{\theta}'_t \sim \Pi_t$ that follows the distribution $\Pi_t$. By Assumption \ref{assu0}, we have
$D_{\alpha_1}(\tilde{\theta}'_t,\tilde{\theta}_t)\le \epsilon$ where $\alpha_1>1$. By the definition of $\Pi_t$ (Assumption \ref{def_D1}), we know that $\beta_t(\delta')^{-1} V_t^{1/2}(\tilde{\theta}'_t-\hat{\theta}_t)$ satisfies that 
for any $u \in \mathbb{R}^d $ with $\|u\| = 1$, 
$$\mathbb{P}(u^\top \beta_t(\delta')^{-1} V_t^{1/2}(\tilde{\theta}'_t-\hat{\theta}_t) \geq 1) \geq \kappa_1.$$ 
By Theorem \ref{thm:concentration} Part 1), we have a similar anti-concentration property for $\beta_t(\delta')^{-1} V_t^{1/2}(\tilde{\theta}_t-\hat{\theta}_t)$, with a different constant term: For any $u \in \mathbb{R}^d $ with $\|u\| = 1$, 
$$\mathbb{P}(u^\top \beta_t(\delta')^{-1} V_t^{1/2}(\tilde{\theta}_t-\hat{\theta}_t) \geq 1) \geq \kappa_2,$$
where $\kappa_2=\left(\epsilon\alpha_1(\alpha_1-1)+1\right)^{\frac{1}{1-\alpha_1}} \kappa_1^{\frac{\alpha_1}{\alpha_1-1}}$.

On the other side, by Assumption \ref{assu0}, we have
$D_{\alpha_2}(\tilde{\theta}'_t,\tilde{\theta}_t)\le \epsilon$ where $\alpha_2<0$. By the definition on $\Pi_t$ (Assumption \ref{def_D1}), we know that $\beta_t(\delta')^{-1} V_t^{1/2}(\tilde{\theta}'_t-\hat{\theta}_t)$ satisfies that 
for any $\zeta>0$, 
$$\mathbb{P}\left(\|\beta_t(\delta')^{-1} V_t^{1/2}(\tilde{\theta}'_t-\hat{\theta}_t)\| \leq \sqrt{c_1d\log \frac{c_1'd}{\zeta} }\right) \geq 1-\zeta.$$ 
By Theorem \ref{thm:concentration} Part 2), we have a similar concentration property for $\beta_t(\delta')^{-1} V_t^{1/2}(\tilde{\theta}_t-\hat{\theta}_t)$, with a different constant term: For any $\zeta>0$, 
$$\mathbb{P}\left(\|\beta_t(\delta')^{-1} V_t^{1/2}(\tilde{\theta}_t-\hat{\theta}_t)\| \leq \sqrt{c_2d\log \frac{c_2'd}{\zeta} }\right) \geq 1-\zeta,$$
where $c_2 = c_1 + \frac{\alpha_2-1}{\alpha_2}$ and $c_2'=\frac{c_1'}{\left(\epsilon\alpha_2(\alpha_2-1)+1\right)^{\alpha_2}}$.

Next, we can apply Theorem 1 in \cite{abeille2017linear} on the approximate sample $\tilde{\theta}_t \sim Q_t$, since the above argument indicates that $\beta_t(\delta')^{-1} V_t^{1/2}(\tilde{\theta}_t-\hat{\theta}_t)$ satisfies similar anti-concentration and concentration properties in Assumption \ref{def_D1}, while the only difference is the constant terms. This results in the conclusion:
$$R(T) \le \left( \beta_T (\delta') + \hat{\gamma}_T (\delta')(1 + \frac{4}{\kappa_2})\right)\sqrt{2Td \log (1+\frac{T}{\lambda})} + \frac{4\hat{\gamma}_T (\delta')}{\kappa_2}
\sqrt{\frac{8T}{\lambda}\log \frac{4}{\delta}},
$$
with probability $1-\delta$.

\end{proof}

\subsection{Proofs of Results in Section \ref{sec:regretLinBUCB}}

We first establish a preparatory result that connects quantile functions to the anti-concentration and concentration properties in Assumptions \ref{def_D1} and \ref{def_D2}. Then we prove Theorems \ref{thm:LinBUCB1} and \ref{thm:LinBUCB2}.

\begin{theorem}\label{thm:concentrationquantile}
Consider any random vector $X_1 \in \mathbb{R}^d$ with continuous multivariate distributions. Suppose we are given any vector $a$ and any invertible matrix $B$.
\\
1) (Anti-concentration linked to quantile.) Suppose that there exists a strictly positive $\kappa_1>0$ such that for any $u \in \mathbb{R}^d $ with $\|u\| = 1$, 
$$\mathbb{P}(u^\top (B^{-1}X_1- B^{-1}a) \geq 1) \geq \kappa_1.$$ 
Then for any $u \in \mathbb{R}^d $ with $u \ne 0$, the $(1-\kappa_1)$-quantile of $u^\top X_1$ satisfies
$$\mathcal{Q}(1-\kappa_1, u^\top X_1)\ge \|u^\top B\|+ u^\top a.$$

2) (Concentration Type-I linked to quantile) Suppose that there exist two positive constants $c_1>0$ and $c_1'>0$ such that for any $\delta \in (0, 1)$ 
$$
\mathbb{P}\left(\|B^{-1}X_1- B^{-1}a\| \leq \sqrt{c_1d\log \frac{c_1'd}{\delta} }\right)\geq 1 -\delta.
$$
Then for any $u \in \mathbb{R}^d $ with $u \ne 0$, the $(1-\delta)$-quantile of $u^\top X_1$ satisfies
$$\mathcal{Q}(1-\delta, u^\top X_1)\le \|u^\top B\|\sqrt{c_1d\log \frac{c_1'd}{\delta} } + u^\top a.$$

3) (Concentration Type-II linked to quantile) Suppose that for any $\delta>0$, there exists a constant $\hat{c}_1>0$ (which is free of $d$ and only depends on $\delta$) such that for any $u \in \mathbb{R}^d $ with $\|u\| = 1$,
$$
\mathbb{P}(u^\top (B^{-1}X_1- B^{-1}a) \leq \hat{c}_1(\delta)) \geq 1-\delta.
$$
Then for any $u \in \mathbb{R}^d $ with $u \ne 0$, the $(1-\delta)$-quantile of $u^\top X_1$ satisfies
$$\mathcal{Q}(1-\delta, u^\top X_1)\le \|u^\top B\| \hat{c}_1(\delta) + u^\top a.$$
\end{theorem}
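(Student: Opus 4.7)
The plan is to reduce each of the three parts to a single change-of-variables identity followed by a direct quantile extraction. Note that for any invertible $B$ and any $u\in\mathbb{R}^d$ with $u\ne 0$, we can write
\begin{equation*}
u^\top X_1 \;=\; u^\top a + (u^\top B)\,B^{-1}(X_1-a) \;=\; u^\top a + \|u^\top B\|\,\tilde u^\top B^{-1}(X_1-a),
\end{equation*}
where $\tilde u = B^\top u/\|B^\top u\|$ is a unit vector in $\mathbb{R}^d$. So each assumption, originally phrased in terms of an arbitrary unit vector acting on $B^{-1}(X_1-a)$, can be instantiated at this particular unit vector $\tilde u$ to yield a probability statement about the affine-transformed scalar $u^\top X_1$, from which the claimed quantile bound follows by definition of $\mathcal{Q}(\cdot,\cdot)$.

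For Part 1, I would apply the anti-concentration assumption with $u$ replaced by $\tilde u$ to obtain $\mathbb{P}(\tilde u^\top B^{-1}(X_1-a)\ge 1)\ge \kappa_1$, then multiply through by $\|u^\top B\|>0$ and add $u^\top a$ to both sides inside the event to get $\mathbb{P}(u^\top X_1 \ge \|u^\top B\| + u^\top a)\ge \kappa_1$. Equivalently, $\mathbb{P}(u^\top X_1 < \|u^\top B\| + u^\top a)\le 1-\kappa_1$, so by the definition $\mathcal{Q}(\gamma,\rho)=\inf\{x:\gamma\le F_\rho(x)\}$ we conclude $\mathcal{Q}(1-\kappa_1, u^\top X_1)\ge \|u^\top B\| + u^\top a$.

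For Part 3, the argument is the mirror image: instantiate the Concentration Type-II assumption at $\tilde u$ to get $\mathbb{P}(\tilde u^\top B^{-1}(X_1-a)\le \hat c_1(\delta))\ge 1-\delta$, then affine-transform to obtain $\mathbb{P}(u^\top X_1 \le \|u^\top B\|\hat c_1(\delta) + u^\top a)\ge 1-\delta$, which by definition of the quantile gives $\mathcal{Q}(1-\delta,u^\top X_1)\le \|u^\top B\|\hat c_1(\delta) + u^\top a$.

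For Part 2, there is one extra step because the assumption controls the full Euclidean norm of $B^{-1}(X_1-a)$, not its projection onto $\tilde u$. I would use Cauchy--Schwarz together with $\|\tilde u\|=1$ to get $\tilde u^\top B^{-1}(X_1-a)\le \|B^{-1}(X_1-a)\|$, so that on the high-probability event $\{\|B^{-1}(X_1-a)\|\le \sqrt{c_1 d\log(c_1'd/\delta)}\}$ we also have $\tilde u^\top B^{-1}(X_1-a)\le \sqrt{c_1 d\log(c_1'd/\delta)}$. After the affine transformation this yields $\mathbb{P}(u^\top X_1 \le \|u^\top B\|\sqrt{c_1 d\log(c_1'd/\delta)} + u^\top a)\ge 1-\delta$, and the quantile bound follows as before. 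None of the three parts presents a genuine obstacle; the only non-mechanical observation is choosing the right unit vector $\tilde u = B^\top u/\|B^\top u\|$ so that the affine map absorbs both $B$ and $a$ cleanly, and (for Part 2) noticing that Cauchy--Schwarz is needed to pass from a norm bound to a projection bound.
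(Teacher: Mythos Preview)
Your proposal is correct and follows essentially the same route as the paper's proof: both hinge on the affine decomposition $u^\top X_1 = u^\top a + \|u^\top B\|\,\tilde u^\top B^{-1}(X_1-a)$ with the unit vector $\tilde u = B^\top u/\|B^\top u\|$, instantiate the relevant assumption at $\tilde u$, and (for Part~2) invoke Cauchy--Schwarz to pass from the norm bound to a projection bound. The only cosmetic difference is that the paper works directly with quantile identities (the affine rule $\mathcal{Q}(\cdot,s_1+s_2 Y)=s_1+s_2\mathcal{Q}(\cdot,Y)$ and monotonicity of $\mathcal{Q}$ under pointwise domination), whereas you manipulate the probability events first and read off the quantile at the end; these are equivalent.
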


\begin{proof}[Proof of Theorem \ref{thm:concentrationquantile}]
1) First we note that for any continuous random variable $X_1$,
$$\mathcal{Q}(\cdot, s_1+ s_2 X_1)= s_2 \mathcal{Q}(\cdot, X_1) + s_1,$$
for any $s_2>0$ and $s_1\in \mathbb{R}$.
Moreover, $$\mathbb{P}(u^\top (B^{-1}X_1- B^{-1}a) \geq 1) \geq \kappa_1$$ is equivalent to 
$$\mathcal{Q}(1-\kappa_1, u^\top (B^{-1}X_1- B^{-1}a)) \geq 1.$$
Therefore, we have
\begin{align*}
\mathcal{Q}(1-\kappa_1, u^\top X_1)=& \mathcal{Q}\left(1-\kappa_1, \|u^\top B\| \frac{u^\top B}{\|u^\top B\|} (B^{-1}X_1- B^{-1}a)+ u^\top a\right)\\
=& \|u^\top B\|\mathcal{Q}\left(1-\kappa_1, \frac{u^\top B}{\|u^\top B\|} (B^{-1}X_1- B^{-1}a)\right)+ u^\top a\\
\ge & \|u^\top B\|+ u^\top a.
\end{align*}

2) The argument is similar to Part 1). We first note that 
for any $u \in \mathbb{R}^d $ with $u \ne 0$,
$$\|u^\top B (B^{-1}X_1- B^{-1}a)\| \le \|u^\top B\| \|B^{-1}X_1- B^{-1}a\|,$$
and thus, by the monotonic property of the quantile function,
$$\mathcal{Q}(\cdot, u^\top B (B^{-1}X_1- B^{-1}a))\le \mathcal{Q}(\cdot, \|u^\top B\| \|B^{-1}X_1- B^{-1}a\|).$$
Therefore, we have
\begin{align*}
\mathcal{Q}(1-\delta, u^\top X_1)= & \mathcal{Q}(1-\delta, u^\top B (B^{-1}X_1- B^{-1}a)+ u^\top a)\\
=& \mathcal{Q}(1-\delta, u^\top B (B^{-1}X_1- B^{-1}a)) + u^\top a\\
\le & \mathcal{Q}(1-\delta, \|u^\top B\| \|B^{-1}X_1- B^{-1}a\|) + u^\top a\\
=& \|u^\top B\| \mathcal{Q}(1-\delta, \|B^{-1}X_1- B^{-1}a\|) + u^\top a\\
\le & \|u^\top B\|\sqrt{c_1d\log \frac{c_1'd}{\delta} } + u^\top a.
\end{align*}

3) The argument is similar to Parts 1) and 2). We have
\begin{align*}
\mathcal{Q}(1-\kappa_1, u^\top X_1)= & \mathcal{Q}(1-\kappa_1, u^\top B (B^{-1}X_1- B^{-1}a)+ u^\top a)\\
= & \|u^\top B\|\mathcal{Q}\left(1-\kappa_1, \frac{u^\top B}{\|u^\top B\|} (B^{-1}X_1- B^{-1}a)\right)+ u^\top a\\
\le & \|u^\top B\| \hat{c}_1(\delta) + u^\top a.
\end{align*}
\end{proof}

Now we are ready to prove Theorems \ref{thm:LinBUCB1} and \ref{thm:LinBUCB2}.

\begin{proof}[Proof of Theorem \ref{thm:LinBUCB1}]
1) According to Assumption \ref{def_D1}, apply Theorem \ref{thm:concentrationquantile} Part 1) to $\beta_t(\delta)^{-1} V_t^{1/2}(\tilde{\theta}_t-\hat{\theta}_t)$ with $X_1=\tilde{\theta}_t$, $B = \beta_t(\delta) V_t^{-1/2}$, and $a=\hat{\theta}_t$. We obtain that for any $x\in \mathcal{X}_t$,
\begin{align}
\mathcal{Q}(\gamma, x^\top \tilde{\theta}_t) \ge& \mathcal{Q}(1-\kappa_1, x^\top \tilde{\theta}_t)
\\
\ge &\|x^\top (\beta_t(\delta) V_t^{-1/2})\|+ x^\top \hat{\theta}_t \nonumber\\
=& \beta_t(\delta) \|x\|_{V_t^{-1}} + x^\top \hat{\theta}_t \nonumber\\
\ge& x^\top \theta^*, \label{quantilelower}
\end{align}
where the last inequality follows from Equation \eqref{abba_eq}.
Similarly, applying Theorem \ref{thm:concentrationquantile} Part 2) and letting $\delta= 1-\gamma$, we have
\begin{align}
\mathcal{Q}(\gamma, x^\top \tilde{\theta}_t) \le& \|x^\top (\beta_t(\delta) V_t^{-1/2})\| \sqrt{c_1d\log \frac{c_1'd}{1-\gamma} } + x^\top \hat{\theta}_t\nonumber\\
=& \beta_t(\delta) \|x\|_{V_t^{-1}} \sqrt{c_1d\log \frac{c_1'd}{1-\gamma} } + x^\top \hat{\theta}_t. \label{quantileupper}
\end{align}

At each time step $t$, the regret can be bounded above by
\begin{align*}
J_t(\theta^*)-x_t^\top \theta^* & =  (x^*_t)^\top \theta^* -x_t^\top \theta^*   \\
& \le  \mathcal{Q}(\gamma, (x^*_t)^\top \tilde{\theta}_t) -x_t^\top \theta^*,\\
& \quad \quad \text{where we use Equation } \eqref{quantilelower},\\
& \le  \mathcal{Q}(\gamma, x_t^\top \tilde{\theta}_t) -x_t^\top \theta^*, \\
& \quad \quad \text{where we use the optimality of $x_t$ chosen by LinBUCB: } x_t = \argmax{x\in \mathcal{X}_t} \mathcal{Q}(\gamma, x^\top\tilde{\theta}_t),\\
& \le  \beta_t(\delta) \|x_t\|_{V_t^{-1}} \sqrt{c_1d\log \frac{c_1'd}{1-\gamma} } + x_t^\top \hat{\theta}_t -x_t^\top \theta^*, \\
& \quad \quad \text{where we use Equation } \eqref{quantileupper},\\
& \le  \beta_t(\delta) \|x_t\|_{V_t^{-1}} \sqrt{c_1d\log \frac{c_1'd}{1-\gamma} } + \|x_t\|_{V_t^{-1}} \beta_t(\delta), \\
& \quad \quad \text{where we use Equation } \eqref{abba_eq},\\
& =  \beta_t(\delta) \|x_t\|_{V_t^{-1}} \left(\sqrt{c_1d\log \frac{c_1'd}{1-\gamma} } + 1\right). 
\end{align*}
Therefore, the cumulative regret can be bounded as follows:
\begin{align*}
R(T) &= \sum_{t=1}^{T} (J_t(\theta^*)-x_t^\top \theta^*)  \\
&\le \sum_{t=1}^{T}  \beta_t(\delta) \|x_t\|_{V_t^{-1}} \left(\sqrt{c_1d\log \frac{c_1'd}{1-\gamma} } + 1\right)\\
& \le \beta_T (\delta) \left(\sqrt{c_1d\log \frac{c_1'd}{1-\gamma} } + 1\right) \sqrt{2Td \log (1+\frac{T}{\lambda})},
\end{align*}
where the last inequality follows from Equation \eqref{abba_eq2} and Cauchy–Schwarz inequality. Hence, we obtain the results.

2) The proof is similar to Part 1). The difference is that under Assumption \ref{def_D2}, we now apply Theorem \ref{thm:concentrationquantile} Part 3), instead of Part 2), for the upper bound of the quantile. Specifically, we have
\begin{align}
\mathcal{Q}(\gamma, x^\top \tilde{\theta}_t) \le& \|x^\top (\beta_t(\delta) V_t^{-1/2})\| \hat{c}_1(1-\gamma) + x^\top \hat{\theta}_t\nonumber\\
=& \beta_t(\delta) \|x\|_{V_t^{-1}} \hat{c}_1(1-\gamma) + x^\top \hat{\theta}_t. \label{quantileD2}
\end{align}
Note that this upper bound removes the extra $d^{1/2}$ factor compared with Part 1). Using a similar derivation in Part 1), Equation \eqref{quantileD2} gives 
$$J_t(\theta^*)-x_t^\top \theta^*\le \beta_t(\delta) \|x_t\|_{V_t^{-1}} \left(\hat{c}_1(1-\gamma) + 1\right).$$
Hence we obtain the result as follows:
\begin{align*}
R(T) &= \sum_{t=1}^{T} (J_t(\theta^*)-x_t^\top \theta^*)  \\
&\le \sum_{t=1}^{T}  \beta_t(\delta) \|x_t\|_{V_t^{-1}} \left(\hat{c}_1(1-\gamma) + 1\right)\\
& \le \beta_T (\delta) \left(\hat{c}_1(1-\gamma) + 1\right) \sqrt{2Td \log (1+\frac{T}{\lambda})}.
\end{align*}
\end{proof}

\begin{proof}[Proof of Theorem \ref{thm:LinBUCB2}]
This result follows from a combination of the proof of Theorems \ref{thm:LinTS} and \ref{thm:LinBUCB1}. 

1) As shown in Theorem \ref{thm:LinTS}, for the approximate distribution $\tilde{\theta}_t \sim Q_t$, $\beta_t(\delta)^{-1} V_t^{1/2}(\tilde{\theta}_t-\hat{\theta}_t)$ satisfies similar anti-concentration and concentration properties in Assumption \ref{def_D1}, while the only difference is the constant terms. Hence, applying Theorem \ref{thm:LinBUCB1} Part 1) to $\tilde{\theta}_t \sim Q_t$, we obtain that

$$R(T) \le \beta_T (\delta) \left(\sqrt{c_2d\log \frac{c_2'd}{1-\gamma} } + 1\right) \sqrt{2Td \log (1+\frac{T}{\lambda})}.
$$

2) Note that $\tilde{\theta}_t \sim Q_t$. We define another random variable $\tilde{\theta}'_t \sim \Pi_t$ that follows the distribution $\Pi_t$. By Assumption \ref{assu0}, we have
$D_{\alpha_2}(\tilde{\theta}'_t,\tilde{\theta}_t)\le \epsilon$ where $\alpha_2<0$. By Assumption \ref{def_D2} Part 2), we know that for any $\zeta>0$, $\beta_t(\delta)^{-1} V_t^{1/2}(\tilde{\theta}'_t-\hat{\theta}_t)$ satisfies that 
for any $u \in \mathbb{R}^d $ with $\|u\| = 1$, 
$$\mathbb{P}(u^\top \beta_t(\delta)^{-1} V_t^{1/2}(\tilde{\theta}'_t-\hat{\theta}_t) \leq \hat{c}_1(\zeta)) \geq 1-\zeta.$$ 
By Theorem \ref{thm:concentration} Part 3), we have a similar concentration property for $\beta_t(\delta)^{-1} V_t^{1/2}(\tilde{\theta}_t-\hat{\theta}_t)$, with a different constant term: For any $u \in \mathbb{R}^d $ with $\|u\| = 1$, 
$$\mathbb{P}(u^\top \beta_t(\delta)^{-1} V_t^{1/2}(\tilde{\theta}_t-\hat{\theta}_t) \leq \hat{c}_2(\zeta)) \geq 1-\zeta,$$
where $\hat{c}_2(\zeta) = \hat{c}_1(\zeta^{\frac{\alpha_2-1}{\alpha_2}} \left(\epsilon\alpha_2(\alpha_2-1)+1\right)^{\alpha_2})$.

Therefore, for the approximate distribution $\tilde{\theta}_t \sim Q_t$, $\beta_t(\delta)^{-1} V_t^{1/2}(\tilde{\theta}_t-\hat{\theta}_t)$ satisfies similar anti-concentration and concentration properties in Assumption \ref{def_D2}, while the only difference is the constant terms. Hence, applying Theorem \ref{thm:LinBUCB1} Part 2) to $\tilde{\theta}_t \sim Q_t$, we conclude that
$$R(T) \le \beta_T (\delta) \left(\hat{c}_2(1-\gamma) + 1\right) \sqrt{2Td \log (1+\frac{T}{\lambda})}
$$
as desired.

\end{proof}

\subsection{Proofs of Results in Section \ref{sec:negativeresults}}

\begin{proof}[Proof of Theorem \ref{thm:tsfails}]
Suppose that $\mathcal{X}_t \equiv \{(1,0), (0,1))\}$, i.e., there are two arms throughout. 
Let $\pi_t(x_1, x_2)$ denote the density of $\Pi_t$.  
We construct a distribution $Q_t$ as follows:
\begin{gather}
q_{t}(x_1 ,x_2) = \begin{cases}
\frac{1-\frac{1}{r}(1-F_{t})}{F_{t}} \pi_{t}(x_1, x_2)  & \text{if } x_1 < x_2 \\
\frac{1}{r} \pi_{t}(x_1, x_2) & \text{if } x_1 \ge x_2,  \label{exp:tsfails}
\end{cases} 
\end{gather}
where $F_{t} = \mathbb{P}_{\Pi_t} (x_1 < x_2)$.

Step 1: We show that $q_{t}$ is indeed a density. $q_{t}\ge 0$ is obvious since $\pi_{t}\ge 0$. Let $\Omega_1 = \{(x_1, x_2): x_1 < x_2\}$ and $\Omega_2 = \{(x_1, x_2): x_1 \ge x_2\}$.
\begin{align*}
\int q_{t}(x_1, x_2)dx&=\int_{\Omega_1} q_{t}(x_1, x_2)dx+\int_{\Omega_2} q_{t}(x_1, x_2)dx\\
&=\int_{\Omega_1} \frac{1-\frac{1}{r}(1-F_{t})}{F_{t}} \pi_{t}(x_1, x_2) dx+\int_{\Omega_2} \frac{1}{r} \pi_{t}(x_1, x_2)dx\\
&=\frac{1-\frac{1}{r}(1-F_{t})}{F_{t}} F_{t} + \frac{1}{r}(1-F_{t})\\
&=1.
\end{align*}
Step 2: We show that there exists an $r>1$ independent of $t$ such that $q_{t}$ satisfies Assumption \ref{assu3}.

When $\alpha>1$ or $0< \alpha< 1$, we have:
\begin{align*}
 &D_{\alpha}(\Pi_{t}, Q_t)\\
=&\frac{1}{\alpha(\alpha-1)}\left(\int_{\Omega_1} q_{t}(x_1, x_2) \left(\frac{\pi_{t}(x_1, x_2)}{q_{t}(x_1, x_2)}\right)^{\alpha}dx +\int_{\Omega_2} q_{t}(x_1, x_2) \left(\frac{\pi_{t}(x_1, x_2)}{q_{t}(x_1, x_2)}\right)^{\alpha}dx -1\right)\\
=&\frac{1}{\alpha(\alpha-1)}\left(\int_{\Omega_1} \frac{1-\frac{1}{r}(1-F_{t})}{F_{t}} \pi_{t}(x_1, x_2) \left(\frac{F_{t}}{1-\frac{1}{r}(1-F_{t})} \right)^{\alpha}dx +\int_{\Omega_2} \frac{1}{r} \pi_{t}(x_1, x_2) r^{\alpha} dx -1\right)\\
= &\frac{1}{\alpha(\alpha-1)}\left( \left(\frac{1-\frac{1}{r}(1-F_{t})}{F_{t}}\right)^{1-\alpha} (F_{t})
+r^{\alpha-1} (1-F_{t})-1\right).
\end{align*}
Note that 
$$\frac{1-\frac{1}{r}(1-F_{t})}{F_{t}}\ge \frac{\frac{1}{r}-\frac{1}{r}(1-F_{t})}{F_{t}}=\frac{1}{r},$$
as $r>1$. Hence we have
$$D_{\alpha}(\Pi_{t}, Q_t)\le \frac{1}{\alpha(\alpha-1)}\left( \left(\frac{1}{r}\right)^{1-\alpha} (F_{t})
+r^{\alpha-1} (1-F_{t})-1\right)=\frac{1}{\alpha(\alpha-1)}\left(r^{\alpha-1} -1\right).$$
Then, for $1<r<(\epsilon\alpha(\alpha-1)+1)^{\frac{1}{\alpha-1}}$ (only if $\epsilon\alpha(\alpha-1)+1>0$, otherwise we put $+\infty$ as the upper bound on $r$), we have that
$$D_{\alpha}(\Pi_{t}, Q_t)\le\epsilon.$$

When $\alpha=1$:
\begin{align*}
D_{1}(\Pi_{t}, Q_{t})&=KL(\Pi_{t},Q_{t})\\
&=\int_{\Omega_1} \pi_{t}(x_1, x_2) \log\left(\frac{\pi_{t}(x_1, x_2)}{q_{t}(x_1, x_2)}\right)dx+\int_{\Omega_2} \pi_{t}(x_1, x_2) \log\left(\frac{\pi_{t}(x_1, x_2)}{q_{t}(x_1, x_2)}\right)dx\\
&=\int_{\Omega_1} \pi_{t}(x_1, x_2) \log\left(\frac{F_{t}}{1-\frac{1}{r}(1-F_{t})}\right) dx+\int_{\Omega_2} \pi_{t}(x_1, x_2)  \log\left(r\right)dx\\
&= \log\left(\frac{F_{t}}{1-\frac{1}{r}(1-F_{t})}\right) F_{t}+\log\left(r\right) (1-F_{t}).
\end{align*}
We note that 
$$\frac{F_{t}}{1-\frac{1}{r}(1-F_{t})}\le \frac{F_{t}}{\frac{1}{r}-\frac{1}{r}(1-F_{t})}=r,$$
as $r>1$. Hence we have
$$KL(\Pi_{t},Q_{t})\le \log\left(r\right) F_{t}+\log\left(r\right)(1-F_{t})=\log\left(r\right).$$
Then for $1<r<e^{\epsilon}$, we have that
$$D_{1}(\Pi_{t}, Q_{t})=KL(\Pi_{t},Q_{t})\le \log\left(r\right)\le \log\left(e^{\epsilon}\right)=\epsilon.$$
%Now we take $1<r<\min\{r_1(t),r_2(t)\}$, we obtain that
%$$KL(Q_{t,1},\Pi_{t,1})\le \epsilon, \quad KL(\Pi_{t,1},Q_{t,1})\le \epsilon.$$

Step 3: We show that LinTS sampling from $Q_{t}$ incurs a linear regret. 

Since we have $\mathcal{X}_t \equiv \{(1,0), (0,1))\}$, by the definition of the regret, we have
$$R(T) =(\mu_1 - \mu_2) \sum_{t=1}^T \mathbb{E}\left[\mathbb{I}_{\text{choose } (0,1)}(t)\right],$$
where $\mathbb{I}_{\text{choose } (0,1)}(t)$ represents the indicator of LinTS choosing the context $(0,1)$ at time step $t$.

Since $\mathbb{P}_{Q_t} (x_1 \ge x_2) \le \frac{1}{r}$ and $\mathbb{P}_{Q_t} (x_1 < x_2) \ge 1-\frac{1}{r}$. This shows that with probability at least $1-\frac{1}{r}$, LinTS with $Q_t$ chooses the context $(0,1)$ at any time step $t$. Hence, we conclude that $\mathbb{E}\left[\mathbb{I}_{\text{choose} (0,1)}(t)\right]\ge 1-\frac{1}{r}$ and the lower bound of the expected regret is given by
$$R(T) \ge (\mu_1-\mu_2)(1-\frac{1}{r})T=\Omega(T),$$
leading to a linear regret.
\end{proof}

\begin{proof}[Proof of Theorem \ref{thm:bucbfails}]
Suppose that $\mathcal{X}_t \equiv \{(1,0), (0,1))\}$, i.e., there are two arms throughout. 
Let $\pi_t(x_1, x_2)$ denote the density of $\Pi_t$.  
We construct a distribution $Q_t$ as follows:

We set $b_t=\mathcal{Q}(\gamma,\Pi_{t,1})\in (0,1)$, the $\gamma$-quantile of $\Pi_{t,1}$ (the marginal distribution of $\Pi_{t}$ on $x_1$). Let $F^{2|1}_{t} = \mathbb{P}_{\Pi_t}(x_2 \le b_t | x_1) $ be the conditional cumulative distribution at $b_t$ of $x_2$ given $x_1$. For $r>1$, we set
\begin{equation*}
q_{t}(x_1, x_2) = \begin{cases}
\frac{1}{r} \pi_{t}(x_1, x_2) & \text{if } x_2 < b_t \\
\frac{1-\frac{1}{r}F^{2|1}_t}{1-F^{2|1}_t} \pi_{t}(x_1, x_2)  & \text{if } x_2 \ge b_t.
       \end{cases} \quad
\end{equation*}
Step 1: We show that $q_{t}(x_1, x_2)$ is a density, and $q_{t}(x_1) = \pi_{t}(x_1)$, i.e., the marginal distributions of $\Pi_t$ and $Q_t$ on $x_1$ are identical. First, $q_{t}\ge 0$ is obvious since $\pi_{t}\ge 0$. Second, 
\begin{align*}
q_{t}(x_1)&= \int q_{t}(x_1, x_2)dx_2\\
&=\int_{-\infty}^{b_t} q_{t}(x_1, x_2)dx_2+\int_{b_t}^{\infty} q_{t}(x_1, x_2)dx_2\\
&=\int_{-\infty}^{b_t} \frac{1}{r} \pi_{t}(x_1, x_2)dx_2+\int_{b_t}^{\infty} \frac{1-\frac{1}{r}F^{2|1}_t}{1-F^{2|1}_t} \pi_{t}(x_1, x_2)dx_2\\
&= \int_{-\infty}^{b_t} \frac{1}{r} \pi_{t}(x_1, x_2)dx_2 dx_1 +\int_{b_t}^{\infty} \frac{1-\frac{1}{r}F^{2|1}_t}{1-F^{2|1}_t} \pi_{t}(x_1, x_2)dx_2 dx_1\\
&= \frac{1}{r} F^{2|1}_t \pi_t(x_1) + \frac{1-\frac{1}{r}F^{2|1}_t}{1-F^{2|1}_t} (1-F^{2|1}_t) \pi_t(x_1)\\
&=\pi_t(x_1).
\end{align*}
This shows that the marginal distributions of $\Pi_t$ and $Q_t$ on $x_1$ are identical. We also have
\begin{align*}
\int q_{t}(x_1, x_2)dx&= \int_{-\infty}^{\infty} q_{t}(x_1)dx_1\\
&=\int_{-\infty}^{\infty} \pi_{t}(x_1)dx_1\\
&=1.
\end{align*}

Step 2: We show that there exists an $r> \frac{1}{\gamma}$ independent of $t$ such that $q_{t}$ satisfies Assumption \ref{assu3}.

When $\alpha>1$ or $0< \alpha< 1$, we have:
\begin{align*}
&D_{\alpha}(\Pi_{t}, Q_t)\\ 
=&\frac{1}{\alpha(\alpha-1)}\left(\int_{\Omega_1} q_{t}(x_1, x_2) \left(\frac{\pi_{t}(x_1, x_2)}{q_{t}(x_1, x_2)}\right)^{\alpha}dx +\int_{\Omega_2} q_{t}(x_1, x_2) \left(\frac{\pi_{t}(x_1, x_2)}{q_{t}(x_1, x_2)}\right)^{\alpha}dx -1\right)\\
=&\frac{1}{\alpha(\alpha-1)}\left(\int_{-\infty}^{\infty} \int_{-\infty}^{b_t} \pi_{t}(x_1, x_2) r^{\alpha-1} dx_2 dx_1 +\int_{-\infty}^{\infty} \int_{b_t}^{\infty} \left(\frac{1-\frac{1}{r}F^{2|1}_t}{1-F^{2|1}_t}\right)^{1-\alpha} \pi_{t}(x_1, x_2)dx_2 dx_1 -1 \right)\\
=&\frac{1}{\alpha(\alpha-1)}\left(\int_{-\infty}^{\infty}  F^{2|1}_t r^{\alpha-1} \pi_t(x_1) dx_1 +\int_{-\infty}^{\infty} \left(\frac{1-\frac{1}{r}F^{2|1}_t}{1-F^{2|1}_t}\right)^{1-\alpha} (1-F^{2|1}_t) \pi_t(x_1) dx_1 -1 \right)\\
= &\frac{1}{\alpha(\alpha-1)} \int_{-\infty}^{\infty}\left( \left(\frac{1-\frac{1}{r}F^{2|1}_t}{1-F^{2|1}_t}\right)^{1-\alpha} (1-F^{2|1}_t)
+r^{\alpha-1} F^{2|1}_t-1\right) \pi_t(x_1) dx_1.
\end{align*}
Note that 
$$\frac{1-\frac{1}{r}F^{2|1}_t}{1-F^{2|1}_t}\ge \frac{\frac{1}{r}-\frac{1}{r}F^{2|1}_t}{1-F^{2|1}_t}=\frac{1}{r},$$
as $r>1$. Hence we have
$$D_{\alpha}(\Pi_{t}, Q_t)\le \int_{-\infty}^{\infty}\left( \frac{1}{\alpha(\alpha-1)}\left(r^{\alpha-1} (1-F^{2|1}_t)
+r^{\alpha-1} F^{2|1}_t-1\right) \right) \pi_t(x_1) dx_1=\frac{1}{\alpha(\alpha-1)}\left(r^{\alpha-1} -1\right).$$
Then for $\frac{1}{\gamma} <r \le (\epsilon\alpha(\alpha-1)+1)^{\frac{1}{\alpha-1}}$ (only if $\epsilon\alpha(\alpha-1)+1>0$, otherwise we put $+\infty$ as the upper bound on $r$), we have that
$$D_{\alpha}(\Pi_{t}, Q_t)\le\epsilon.$$
%Now we take $1<r<\min\{r_1(t),r_2(t)\}$, we obtain that
%$$KL(Q_{t,1},\Pi_{t,1})\le \epsilon, \quad KL(\Pi_{t,1},Q_{t,1})\le \epsilon.$$
%Since $\gamma_t\to 1$ as $t\to +\infty$, there exists a $T_0>0$ such that for any $t\ge T_0+1$, we have that $\gamma_t>(\epsilon\alpha(\alpha-1)+1)^{\frac{1}{\alpha}}$.

When $\alpha=1$:
\begin{align*}
&D_{1}(\Pi_{t}, Q_{t})\\
=&KL(\Pi_{t},Q_{t})\\
=&\int_{\Omega_1} \pi_{t}(x_1, x_2) \log\left(\frac{\pi_{t}(x_1, x_2)}{q_{t}(x_1, x_2)}\right)dx+\int_{\Omega_2} \pi_{t}(x_1, x_2) \log\left(\frac{\pi_{t}(x_1, x_2)}{q_{t}(x_1, x_2)}\right)dx\\
=&\int_{-\infty}^{\infty} \int_{-\infty}^{b_t} \pi_{t}(x_1, x_2) \log\left(r\right) dx_2 dx_1+\int_{-\infty}^{\infty} \int_{-\infty}^{b_t} \pi_{t}(x_1, x_2) \log\left(\frac{1-F^{2|1}_t}{1-\frac{1}{r}F^{2|1}_t}\right) dx_2 dx_1\\
=& \int_{-\infty}^{\infty} \left( \log\left(\frac{1-F^{2|1}_t}{1-\frac{1}{r}F^{2|1}_t}\right) (1-F^{2|1}_t)+\log\left(r\right) F^{2|1}_t  \right) dx_1.
\end{align*}
Note that 
$$\frac{1-F^{2|1}_t}{1-\frac{1}{r}F^{2|1}_t}\le \frac{1-F^{2|1}_t}{\frac{1}{r}-\frac{1}{r}F^{2|1}_t}=r,$$
as $r>1$. Hence we have
$$KL(\Pi_{t},Q_{t})\le  \int_{-\infty}^{\infty} \left( \log\left(r\right) (1-F^{2|1}_t)+\log\left(r\right)F^{2|1}_t \right) dx_1=\log\left(r\right).$$
Then for $\frac{1}{\gamma}<r\le e^{\epsilon}$, we have that
$$D_{1}(\Pi_{t}, Q_{t})=KL(\Pi_{t},Q_{t})\le \log\left(r\right)\le \log\left(e^{\epsilon}\right)=\epsilon.$$

Step 3: We show that LinBUCB sampling from $Q_{t}$ incurs a linear regret. 

Since we have $\mathcal{X}_t \equiv \{(1,0), (0,1))\}$, by the definition of the regret, we have
$$R(T) =(\mu_1 - \mu_2) \sum_{t=1}^T \mathbb{E}\left[\mathbb{I}_{\text{choose } (0,1)}(t)\right],$$
where $\mathbb{I}_{\text{choose } (0,1)}(t)$ represents the indicator of LinBUCB choosing the context $(0,1)$ at time step $t$.

Note that 
$$\mathbb{P}_{Q_{t}}(x_1\le b_{t})=\mathbb{P}_{\Pi_{t}}(x_1\le b_{t})=\gamma,$$ 
as in Step 1, we have shown that the marginal distributions of $\Pi_t$ and $Q_t$ on $x_1$ are identical. 
$$\mathbb{P}_{Q_{t}}(x_2\le b_{t})= \int_{-\infty}^{\infty} \int_{-\infty}^{b_t} \frac{1}{r} \pi_{t}(x_1, x_2)dx_2dx_1\le \frac{1}{r} < \gamma$$
by taking $r = (\epsilon\alpha(\alpha-1)+1)^{\frac{1}{\alpha-1}}$ or $r=e^{\epsilon}$. This implies that 
$$\mathcal{Q}(\gamma,Q_{t,1})=b_{t}< \mathcal{Q}(\gamma,Q_{t,2}).$$
This shows that LinBUCB will always choose the context $(0,1)$ at any time step $t$. Hence, we conclude that $\mathbb{E}\left[\mathbb{I}_{\text{choose} (0,1)}(t)\right] = 1$ and the lower bound of the expected regret is given by
$$R(T) \ge (\mu_1-\mu_2)T=\Omega(T),$$
leading to a linear regret.
\end{proof}

%\subsection{Proofs of Hyper-parameters} \label{sec:app_hyper}

\section{Pseudocodes} \label{sec:pseudocode}
In this section, we present the pseudocode for LinTS and LinBUCB under both exact and approximate inference settings.

\subsection{LinTS w/o Approximate Inference}  \label{code_lints}
The pseudocode of LinTS w/o approximate inference is presented in Algorithm \ref{algo:LinTS}.

\subsection{LinBUCB w/o Approximate Inference} \label{code_linbucb}

The pseudocode of LinBUCB w/o approximate inference is presented in Algorithm \ref{algo:LinBUCB}.

\begin{algorithm}[ht]
\caption{LinTS w/o Approximate Inference}
\label{algo:LinTS}
\begin{algorithmic}[1]
\Require{ $\hat{\theta}_1$ and $V_1$ given by Equation \eqref{RLS_estimation}, probability threshold $\delta$, and time horizon $T$}
\State Set $\delta' = \delta/(4T)$
\For{ $t = {1, ..., T}$ } 
\State Draw a posterior sample $\tilde{\theta}_t$ as follows:

 \If{``exact inference''} 

\State Draw $\tilde{\theta}_t\sim \Pi_t$, where %= \hat{\theta}_t + \beta_t(\delta')V_t^{-1/2} \eta_t$ here $\eta_t$ \\
%where $\Pi_t$ is the distribution such that 
$\beta_t(\delta')^{-1} V_t^{1/2}(\tilde{\theta}_t-\hat{\theta}_t)$ satisfies Assumption \ref{def_D1}

 \Else \ \ \ \textcolor{gray}{\# if ``approximate inference''} 

\State Draw $\tilde{\theta}_t\sim Q_t$, %= \hat{\theta}_t + \beta_t(\delta')V_t^{-1/2} \eta_t$ here $\eta_t$ \\
where $Q_t$ is an approximation of $\Pi_t$
\EndIf 

\State Compute optimal arm $x_t = x_t^*(\tilde \theta_t) = \argmax{x\in \mathcal{X}_t} x^\top \tilde{\theta}_t$ 

\State Pull arm $x_t$ and observe reward $r_{t+1}$

\State  Compute $V_{t+1}$ and $\hat{\theta}_{t+1}$ using Equation \eqref{RLS_estimation}
\EndFor
\end{algorithmic}

\end{algorithm}

\begin{algorithm}[ht]
\caption{LinBUCB w/o Approximate Inference}
\label{algo:LinBUCB}
\begin{algorithmic}[1]
\Require{ $\hat{\theta}_1$ and $V_1$ given by Equation \eqref{RLS_estimation}, probability threshold $\delta$, time horizon $T$, and quantile level $\gamma$ (specified in Theorems \ref{thm:LinBUCB1}, \ref{thm:LinBUCB2})}
%\Statex
%\textbf{1.} Set $\delta' = \delta/(4T)$\\
\For{$t = {1, ..., T}$}
\State Compute a posterior quantile for each arm as follows:

\If{``exact inference''}
 
\State Suppose $\tilde{\theta}_t\sim \Pi_t$, %= \hat{\theta}_t + \beta_t(\delta')V_t^{-1/2} \eta_t$ here $\eta_t$ \\
where %$\Pi_t$ is the distribution such that 
$\beta_t(\delta)^{-1} V_t^{1/2}(\tilde{\theta}_t-\hat{\theta}_t)$ satisfies Assumption \ref{def_D1} or Assumption \ref{def_D2}. Then for each $x\in\mathcal{X}_t$, compute a $\gamma$-quantile of the distribution of $x^\top\tilde{\theta}_t$, i.e., $\mathcal{Q}(\gamma, x^\top\tilde{\theta}_t)$

\Else  \ \ \ \  \textcolor{gray}{ \# if ``approximate inference''}
\State Suppose  $\tilde{\theta}_t\sim Q_t$,
where $Q_t$ is an approximation of $\Pi_t$. Then compute $\mathcal{Q}(\gamma, x^\top\tilde{\theta}_t)$
\EndIf 
\State Compute optimal arm $x_t = \argmax{x\in \mathcal{X}_t} \mathcal{Q}(\gamma, x^\top\tilde{\theta}_t)$ 
\State Pull arm $x_t$ and observe reward $r_{t+1}$
\State Compute $V_{t+1}$ and $\hat{\theta}_{t+1}$ using Equation \eqref{RLS_estimation}
 \EndFor
\end{algorithmic}

\end{algorithm}

\section{Additions to Experiments}\label{sec:expmore}
In this section, we present supplementary experimental results and outline the implementation details and analyses of the proposed study.

\subsection{Computational Comparison of Different Methods} \label{sec:comput_method}
We conduct the runtime experiments to study the computational efficiency of the proposed approximate method. The results, presented in Tables \ref{comp:ts} and \ref{comp:bucb}, show that the proposed approximate method can significantly reduce computational time, demonstrating a notable improvement in efficiency. These findings further highlight the practical advantages of the proposed approximate approaches.

\begin{table}
\centering
\caption{Computational time of LinTS and LinTS\_Approximate on $P3$ ($K = 10$)  with different context dimensions: $d \in [500, 2000]$.}
\label{comp:ts}
\scalebox{1.0}{
\begin{tblr}{
colspec = {c c c c c c c c }, % 9 columns for 8 data columns + 1 label
rowsep = 1pt,
  hline{1,2,5} = {-}{}, % top and bottom border
}
$d$	&500	&750	&1000	&1250	&1500	&1750	&2000 \\
LinTS (s)	& 1.33	& 3.65 &	6.27 &	10.61 &	15.81 &	22.99 &	31.05\\
LinTS\_Aprx (s)	&0.76	&2.40&	3.84&	6.23&	9.43&	13.17&	17.60\\
Speed-up Ratio	&1.75	&1.52	&1.63 &	1.70	& 1.67&	1.74&	1.76
\\

\end{tblr}}
\end{table}

\begin{table}
\centering
\caption{Computational time of LinBUCB and LinBUCB\_Approximate on $P3$ ($K = 10$) with different context dimensions: $d \in [500, 2000]$.}
\label{comp:bucb}
\scalebox{1.0}{
\begin{tblr}{
colspec = {c c c c c c c c }, % 9 columns for 8 data columns + 1 label
rowsep = 1pt,
  hline{1,2,5} = {-}{}, % top and bottom border
}
$d$	&500	&750	&1000	&1250	&1500	&1750	&2000 \\
LinBUCB (s)	&1.81& 4.23 & 6.87 & 11.84 & 17.24 & 24.07 & 33.09 \\
LinBUCB\_Aprx (s)	& 1.28	& 2.65	& 3.96 & 6.44 & 9.03 & 12.48&16.86 \\
Speed-up Ratio	& 1.41& 1.60& 1.73& 1.84& 1.91& 1.93& 1.96\\

\end{tblr}}
\end{table}

\subsection{Computational Comparison of Matrix Inversion Approaches} \label{sec:comput_matrix}
In Table \ref{Computation}, we compare the execution time of exact matrix inversion versus the proposed diagonal approximation inversion across various dimensions. These results indicate that the proposed approximate method can potentially reduce computational load in real-world applications while maintaining comparable performance.

\begin{table}
\centering
\caption{Computational time of exact matrix inversion and diagonal approximation matrix inversion with different dimensions $d \in [2000, 30000]$.}
\label{Computation}
\scalebox{1.0}{
\begin{tblr}{
colspec = {c c c c c c c }, % 9 columns for 8 data columns + 1 label
rowsep = 1pt,
  hline{1,2,5} = {-}{}, % top and bottom border
}
$d$	&	2000 &	5000	&10000&	15000&	20000&	25000	&30000 \\
$V^{-1} (regular) (s)$	&0.0499	& 0.4296	& 2.9354&	9.6358&	21.5713	&44.2227	&79.7421 \\
$D^{-1} (approximate) (s)$	& 0.0039&	0.0097&	0.0541	&0.1480&	0.2646&	0.3986	&0.5496 \\
Speed-up Ratio	& 12.93	&44.10	&54.26	&65.07	&81.52	&110.96	&145.07
\\

\end{tblr}}
\end{table}

\subsection{Scalability Analysis} \label{sec:results}

In this section, we conduct extensive experiments on Problem Setting $P3$ (where the ground-truth parameter vector $\theta^*$ is a sample generated once from $\text{Uniform}(0, 1)^{\otimes d}$) to show the computational scalability of the proposed methods. Specifically, we test a wide range of settings, including higher dimensions ($d \in {5, 10, 20, 30, 40, 50}$) and larger arm sizes ($K \in {5, 10, 20, 30, 40, 50}$), and presented the results in Figure \ref{fig:s1} and Figure \ref{fig:s2}, correspondingly. These experiments show that our methodology and findings remain robust across diverse parameter scales and distributions.

\begin{figure*}[ht]
    \centering
    \includegraphics[width=0.99\textwidth]{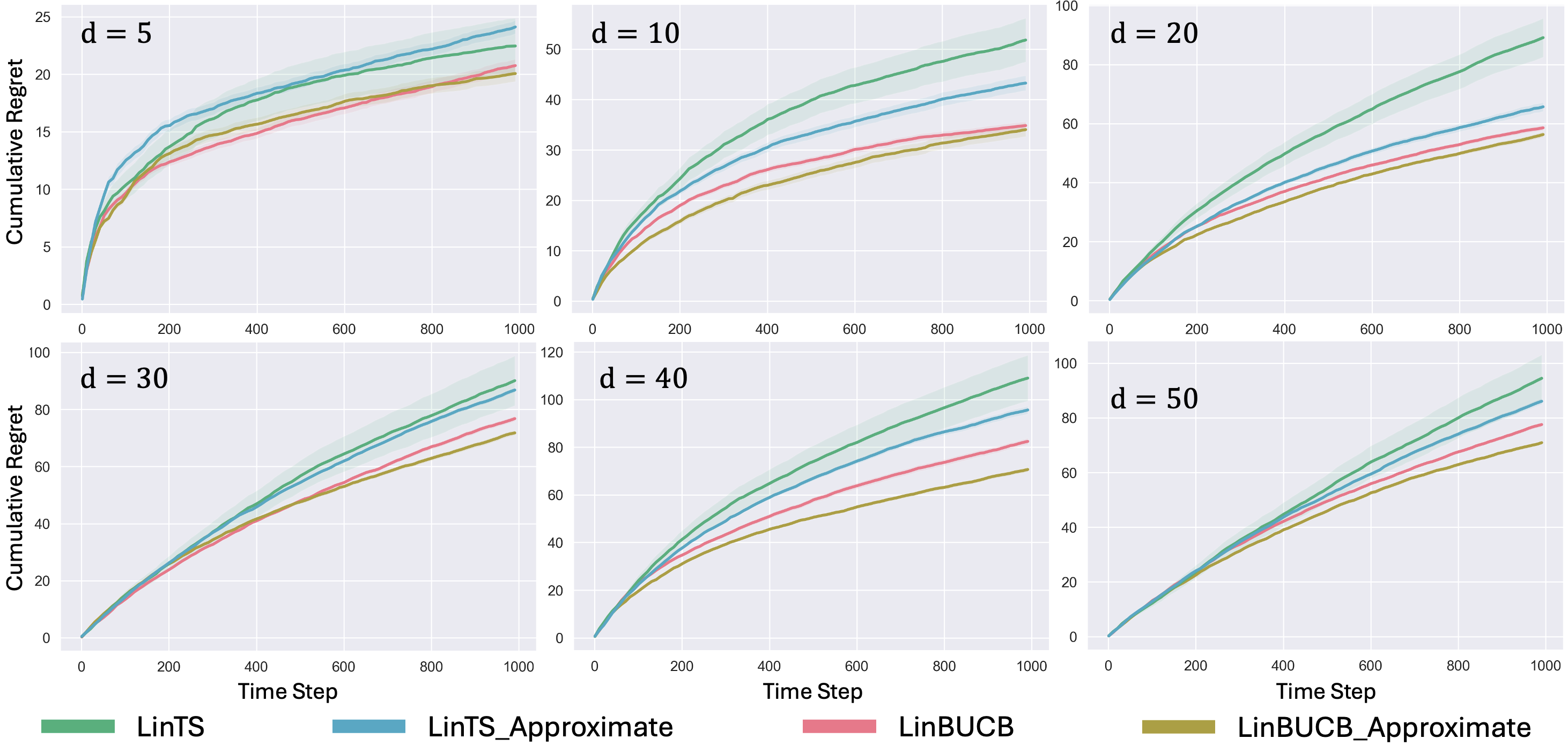}
    \caption{Results of LinBUCB, LinTS, LinBUCB$\_$Approximate, and LinTS$\_$Approximate under different feature dimensions $d$ on Problem Setting $P3$ with the fixed number of arms $K = 10$. Results are averaged over 10 runs with shaded standard errors.} %1 Thompson Sampling 2 BUCB($c=0$, $\zeta=1$) 3 EBUCB($c=0$, $\zeta=\frac{1}{2}$)}
    \label{fig:s1}
\end{figure*}

\begin{figure*}[ht]
    \centering
    \includegraphics[width=0.99\textwidth]{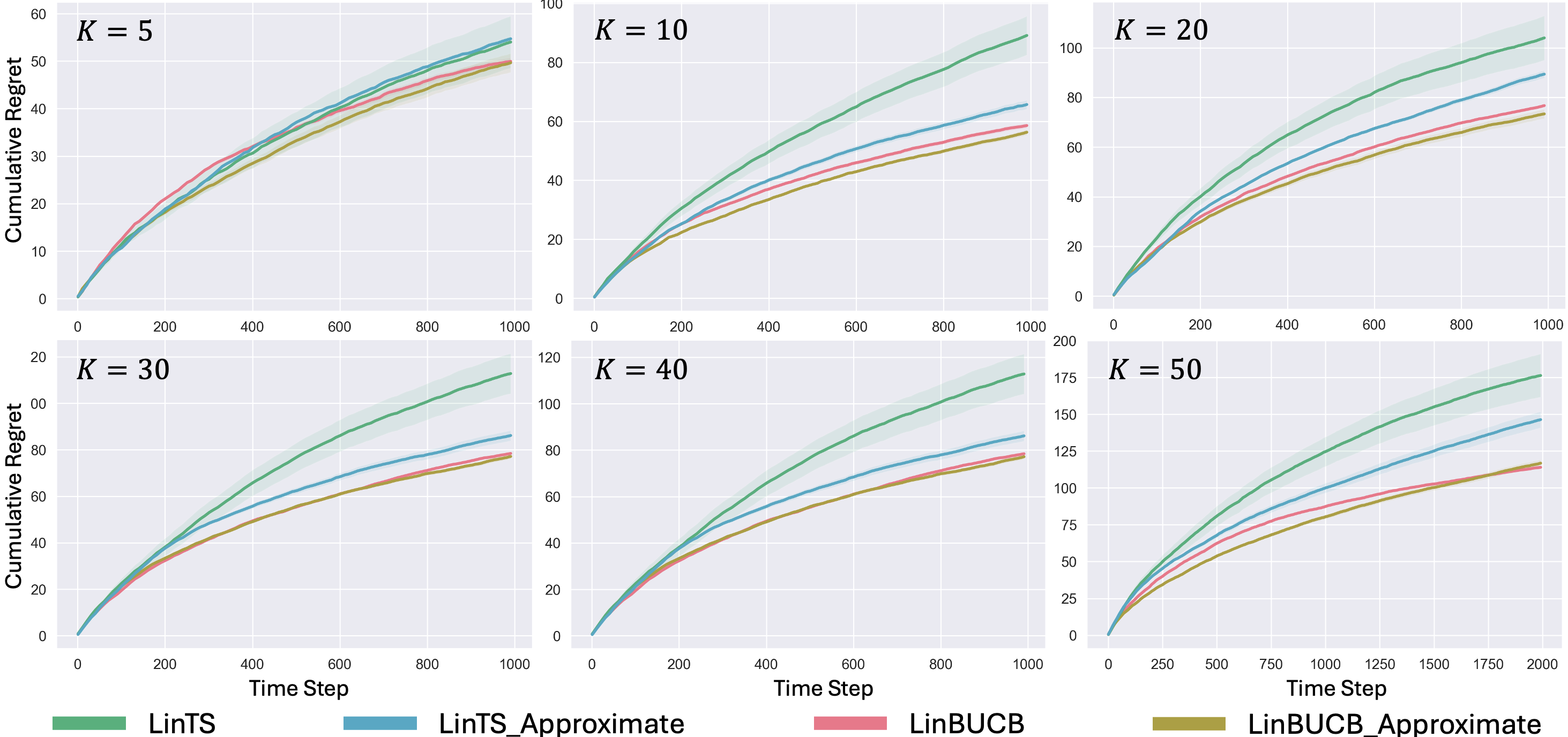}
    \caption{Results of LinBUCB, LinTS, LinBUCB$\_$Approximate, and LinTS$\_$Approximate under different number of arms $K$ on Problem Setting $P3$ with the fixed feature dimension $d = 20$. Results are averaged over 10 runs with shaded standard errors.} %1 Thompson Sampling 2 BUCB($c=0$, $\zeta=1$) 3 EBUCB($c=0$, $\zeta=\frac{1}{2}$)}
    \label{fig:s2}
\end{figure*}

\subsection{Sensitivity Analysis} \label{sec:sensitivity}

We conduct the sensitivity analysis to examine the impact of different quantile $\gamma$ in LinBUCB and LinBUCB$\_$Approximate, and present the results in Figure \ref{fig:s3}. LinBUCB\_Approximate and LinBUCB achieve optimal performance at around $\gamma = 0.6$, and maintain robust performance within the range $\gamma \in [0.5, 0.7]$. These findings align with our theoretical results. As established in Theorems \ref{thm:LinBUCB1} and \ref{thm:LinBUCB2}, a relatively large value of $\gamma$ generally leads to a favorable regret rate. However, an excessively large $\gamma$ may introduce over-conservativeness, resulting in higher constants in the regret bound. %In our experiments, $\gamma = 0.6$ provides the most optimal results. 

\begin{figure*}[ht]
    \centering
    \includegraphics[width=0.99\textwidth]{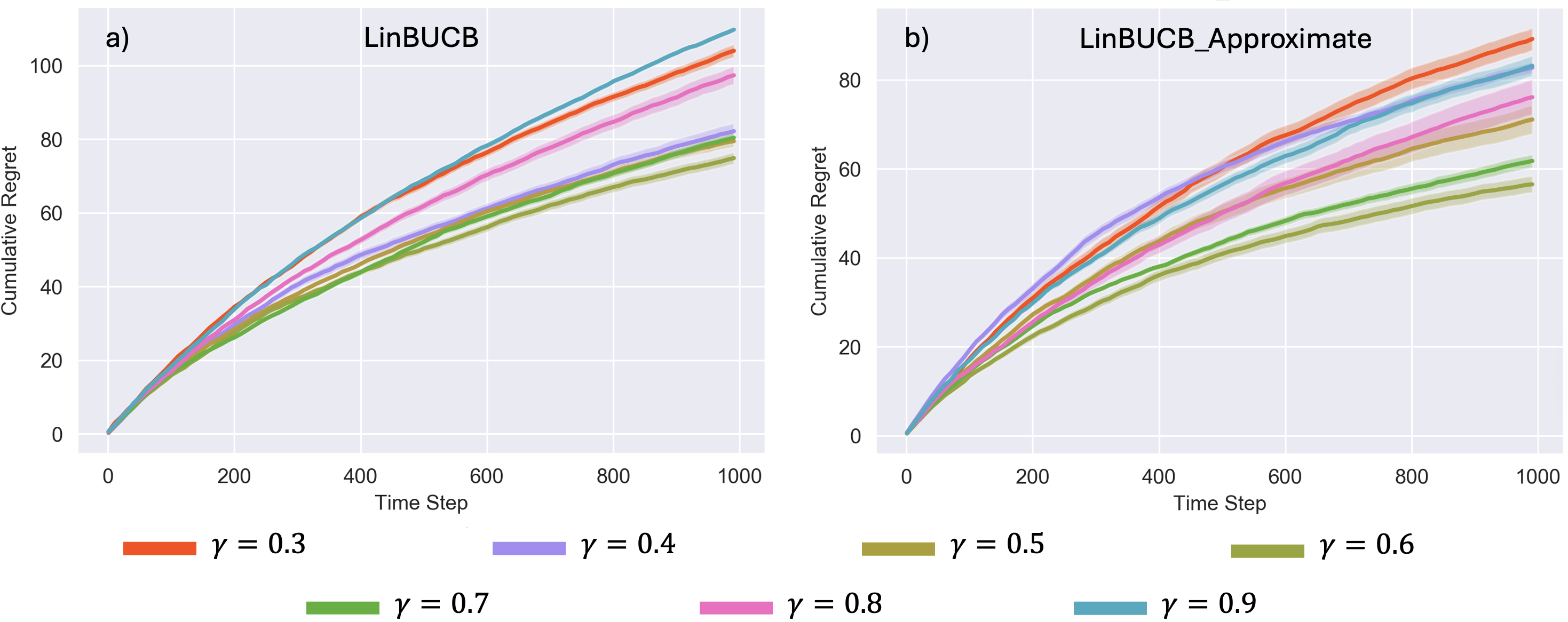}
    \caption{Sensitivity analysis of different quantile $\gamma$ in LinBUCB (a) and LinBUCB$\_$Approximate (b) on Problem Setting $P3$ ($d = 20$, $K = 10$, $T = 1000$). Results are averaged over 5 runs with shaded standard errors. } %1 Thompson Sampling 2 BUCB($c=0$, $\zeta=1$) 3 EBUCB($c=0$, $\zeta=\frac{1}{2}$)}
    \label{fig:s3}
\end{figure*}

\subsection{Implementation Details} \label{sec:implement}
All experiments are conducted on a single CPU (9th Gen Intel CoreTM i9 9900K). The execution time for a single experiment is reported in Tables \ref{comp:ts} and \ref{comp:bucb}. In the data generation process, independent noise $\xi_{t} \sim \mathcal{N}(0, 0.5^2)$ is added to the random reward. In LinTS and LinBUCB, we set the regularization parameter $\lambda=1$ and use the prior $\mathcal{N}(0,I_d)$ that satisfies both Assumptions \ref{def_D1} and \ref{def_D2}. %We use a 70\%/10\%/20\% split for training, validation, and testing, respectively. 

To implement LinTS$\_$Approximate and LinBUCB$\_$Approximate, we apply a fast approximate computing approach to mimic approximate inference. As shown in Algorithms \ref{algo:LinTS} and \ref{algo:LinBUCB}, the original two algorithms require computing the inverse $V_t^{-1}$ and drawing samples from this covariance matrix, which could be problematic in a high-dimensional setting. As such, instead of computing the inverse matrix exactly, we take the diagonal elements of $V_t$ as a diagonal matrix, $D_t=\text{diag}(V_t)$, and then compute its inverse, $D_t^{-1}$, as an approximation of $V_t^{-1}$. Then we replace $V_t^{-1}$ with $D_t^{-1}$ when implementing LinTS and LinBUCB.

%%%%%%%%%%%%%%%%%%%%%%%%%%%%%%%%%%%%%%%%%%%%%%%%%%%%%%%%%%%%

\end{document}